\documentclass{article}

\usepackage[nonatbib,preprint]{neurips_2026}
\usepackage[sort&compress,numbers]{natbib}

\usepackage[utf8]{inputenc} 
\usepackage[T1]{fontenc}    
\usepackage{hyperref}       
\usepackage{url}            
\usepackage{booktabs}       
\usepackage{amsfonts}       
\usepackage{nicefrac}       
\usepackage{microtype}      
\usepackage{xcolor}         

\usepackage[utf8]{inputenc} 
\usepackage[T1]{fontenc}    
\usepackage{hyperref}       
\usepackage{url}            
\usepackage{booktabs}       
\usepackage{amsfonts}       
\usepackage{nicefrac}       
\usepackage{microtype}      
\usepackage{xcolor}         

\usepackage{amsmath,amsfonts,bm}

\def\eqref#1{equation~\ref{#1}}

\def\1{\bm{1}}

\DeclareMathAlphabet{\mathsfit}{\encodingdefault}{\sfdefault}{m}{sl}
\SetMathAlphabet{\mathsfit}{bold}{\encodingdefault}{\sfdefault}{bx}{n}

\usepackage{graphicx}
\usepackage{float}
\usepackage[sort&compress,numbers]{natbib}
\usepackage{wrapfig}      
\usepackage{lipsum}       
\usepackage{caption}      
\usepackage{amsmath}
\usepackage{amssymb}
\usepackage{multirow}
\usepackage{longtable}
\usepackage{arydshln}
\usepackage{enumitem}

\usepackage{tikz}
\usepackage{amsmath}
\usepackage{amssymb}
\usetikzlibrary{shapes.geometric, calc, positioning, arrows.meta, patterns}
\usepackage{svg}

\usepackage{tikz}
\usepackage{amsmath}
\usepackage{amssymb}
\usetikzlibrary{shapes.geometric, calc, positioning, arrows.meta, patterns}
\usepackage{svg}
\usepackage{subcaption}
\usepackage{dirtytalk}
\usepackage{amsthm}
\makeatletter
\def\th@plain{%
  \thm@notefont{}
  \itshape 
}
\theoremstyle{plain}
\usepackage[normalem]{ulem}
\useunder{\uline}{\ul}{}
\usepackage[ruled,vlined,noend]{algorithm2e}
\usepackage{setspace}

\newtheorem{lemma}{Lemma}

\newtheorem{proposition}{Proposition}
\newtheorem{corollary}{Corollary}
\title{Literati: Towards Anytime Optimal Shape Generalized Trees via AO*}

\author{%
  Nakul Upadhya, Eldan Cohen \\
  Department of Mechanical and Industrial Engineering\\
  University of Toronto, Toronto, Canada \\
  \texttt{nakul.upadhya@mail.utoronto.ca, eldan.cohen@utoronto.ca} 
  }

\SetCommentSty{mycommfont}

\begin{document}

\maketitle

\begin{abstract}
Decision trees are prized for their interpretability and strong performance on tabular data, but popular greedy top-down induction algorithms can yield suboptimal and unnecessarily complex structures. Optimal decision tree methods address this through global optimization, yet remain restricted to axis-aligned threshold splits, which limit the expressivity of each node and often force deep, complex trees to capture non-linear feature effects. Shape Generalized Trees (SGTs) generalize threshold splits to learnable univariate shape functions, improving expressivity and enabling more compact trees. However, existing SGT induction algorithms are greedy and offer no optimality guarantees. In this work, we introduce Literati, the first algorithm for optimal SGT induction. We propose a novel AND/OR graph formulation of the problem that jointly optimizes tree structure and shape function complexity. To solve this AND/OR graph, we develop an AO*-based algorithm with two enhancements that improve anytime performance while preserving optimality: a secondary heuristic for OR-node selection and a round-robin policy for AND-node exploration. Across 24 real-world datasets, Literati achieves higher training and test accuracy than state-of-the-art tree approaches.
\end{abstract}

\section{Introduction}
Decision trees are one of the most widely used approaches for tabular datasets \cite{grinsztajn2022tree}, largely due to their inherent interpretability \cite{luvstrek2016makes} and robust performance \cite{grinsztajn2022tree}. Their hierarchical structure of simple threshold comparisons ($x_j \leq \theta$) can be easily visualized and audited, making them the model of choice when transparency is paramount. However, the common greedy top-down induction algorithms, such as CART \cite{CART}, suffer from a fundamental limitation: by making locally optimal splits at each node, these methods often produce suboptimal and unnecessarily complex trees \cite{gosdt,chaouki2025branches,kohler2025breiman,demirovic2023blossom}. 

Recent years have witnessed significant progress in decision tree algorithms that overcome greedy suboptimality through global optimization \cite{gosdt,dl85,van2023necessary,kohler2025breiman,chaouki2025branches}. These methods frame tree induction as an optimization problem over a regularized objective that balances accuracy and tree complexity, effectively optimizing both predictive performance and sparsity simultaneously \cite{gosdt,dl85,kohler2025breiman}. Such non-greedy approaches have yielded trees that empirically generalize better to unseen data while reducing model complexity \cite{gosdt,dl85}.

While these algorithmic advances were underway, other work has focused on reimagining how tree branching is performed. Specifically, \citet{upadhyaempowering} proposed the Shape Generalized Tree (SGT), which replaces the axis-aligned threshold comparison with an axis-aligned \emph{Shape Function}. By enabling non-linear decision boundaries at each node, SGTs have been empirically shown to achieve superior predictive performance with a more compact tree structure compared to threshold trees \citep{upadhyaempowering}. However, existing methods for learning SGTs rely on greedy induction, resulting in suboptimal trees and limited control over the learned shape functions. 

In this work, we introduce \texttt{Literati}\footnote{Named after the bonsai styling technique characterized by dramatic trunk shapes and minimal branches.}, a novel approach to Optimal SGT induction. Our approach formulates SGT induction within an AND/OR graph search framework and leverages an AO*-inspired algorithm to efficiently search for an optimal SGT. Our contributions are as follows: \emph{1) We propose a novel AND/OR graph formulation of the sparse SGT induction problem that jointly optimizes tree structure and shape function complexity, and prove that the resulting formulation yields globally optimal SGTs. 2) We introduce Literati, an AO*-based algorithm augmented with an inadmissible secondary heuristic for OR-node selection and round-robin AND-node exploration, yielding strong anytime performance while preserving optimality guarantees. 3) We evaluate Literati on a range of real-world datasets and demonstrate that our approach consistently outperforms state-of-the-art tree induction baselines in both train and test accuracy. }

\section{Background}

\subsection{AND/OR Graph Search}

An AND/OR graph $\mathcal{G}$ encodes problems whose solutions decompose into independent subproblems through a state-action model $(\mathcal{S}, \ell, \mathcal{A}, F, c)$ \cite{nilsson2014principles,bonet2005algorithm}. Here, $\mathcal{S}$ is a set of states, $\ell \subseteq \mathcal{S}$ is the set of terminal states, $\mathcal{A}(s)$ is the set of actions available at a non-terminal state $s \in \mathcal{S} \setminus \ell$, $F : \mathcal{S} \times \mathcal{A} \to 2^\mathcal{S}$ maps each state-action pair to its non-empty set of successor states, and $c : \mathcal{S} \times \mathcal{A} \to \mathbb{R}_{\geq 0}$ assigns a non-negative cost to each action. Each action $a \in \mathcal{A}(s)$ has AND dynamics over its successors: $a$ solves $s$ only if every $s' \in F(s, a)$ is itself solved. Each non-terminal state $s$ has OR dynamics over its actions: $s$ is solved as soon as any single $a \in \mathcal{A}(s)$ solves it. For example, a state $s$ with two actions $a_1, a_2$ such that $F(s, a_1) = \{s_1, s_2\}$ and $F(s, a_2) = \{s_3, s_4\}$ is solved when $\{s_1 \text{ AND } s_2\}$ are solved OR $\{s_3 \text{ AND } s_4\}$ are solved. Terminal states are solved by definition. 

\begin{wrapfigure}[18]{r}{0.5\textwidth}
\begin{algorithm}[H]
\DontPrintSemicolon
\SetKwInOut{Input}{Input}
\SetKwInOut{Output}{Output}
\Input{$V$, $\mathcal{G}$, $s_0$}
\Output{ $\pi$}
$G \gets \{s_0\}$; \quad $\pi \gets \emptyset$\;
\While{$s_0$ is not \textup{COMPLETE}}{
    \tcp{Selection}
  $s \gets$ follow $\pi$ from $s_0$ to a non-terminal tip\;
  \tcp{Expansion}
  $G \gets G \cup \text{Successors}(s)$\; 
  Initialize $V(s')$ for each $s' \in \textup{Successors}(s)$\; 
  \tcp{Backpropagation}
  Update $V$, $\pi$, and COMPLETE status along $s \rightsquigarrow s_0$ 
}
\Return{$\textup{CollectSolutionGraph}(\pi, G)$}
\caption{AO$^{*}$ search.}
\label{alg:ao_main_loop}
\end{algorithm}
\end{wrapfigure}
Problems solvable by dynamic programming admit a natural formulation as AND/OR graph search, with subproblems mapped to states and their decompositions to actions \citep{martelli1973additive,martelli1975dynamic}. While bottom-up dynamic programming must evaluate all subproblems in the graph, a best-first search guided by admissible heuristic estimates can find a provably optimal solution while expanding only a relevant subset of the search space \citep{martelli1973additive,originalAOstar,chaouki2025branches}. AO* realizes this approach for acyclic AND/OR graphs \cite{nilsson2014principles,bonet2005algorithm}. It maintains a partial graph $G \subseteq \mathcal{G}$, initialized to the root node $\{s_0\}$, together with an admissible heuristic $V$ that lower-bounds the optimal cost at every state \cite{bonet2005algorithm}, and a \emph{best partial policy} $\pi$ over $G$. A partial policy assigns an action $\pi(s) \in \mathcal{A}(s)$ to each non-terminal state in $G$ and is greedy with respect to $V$ at every such state, i.e. $\pi(s) = \arg \min_{a \in \mathcal{A}(s)}\left[ c(s,a) + \sum_{s' \in F(s,a)} V(s')\right]$. Both $G$ and $\pi$ are refined iteratively through three phases: \textbf{selection} follows $\pi$ from $s_0$ until reaching a non-terminal tip state $s \in G$; \textbf{expansion} adds the successors of $s$ under each $a \in \mathcal{A}(s)$ to $G$ and initializes their heuristic values; and \textbf{backpropagation} revises $V$, and consequently $\pi$, bottom-up through $s$ and its ancestors. A state is marked COMPLETE once all its successors under $\pi$ are COMPLETE, with terminal states COMPLETE by definition, and the algorithm terminates when $s_0$ is COMPLETE, at which point $\pi$ is optimal~\citep{bonet2012action,nilsson2014principles,bonet2005algorithm}. The optimal solution graph $G^* \subseteq G$ is then collected by traversing $G$ from $s_0$ along $\pi$ until reaching terminal states. Pseudocode is provided in Algorithm~\ref{alg:ao_main_loop}. 


\subsection{Optimal Decision Trees}
While computationally efficient, greedy tree induction methods, such as CART \cite{CART} and C4.5\cite{quinlan2014c4}, often produce large trees with poor performance \cite{gosdt,kohler2025breiman}. To address this drawback, there exists a growing body of work on globally optimal tree induction. Although the problem is NP-complete \cite{laurent1976constructing}, recent algorithmic advances have rendered it tractable for limited-depth trees on moderate-sized datasets. A prominent family employs branch-and-bound search with caching~\cite{dl85,OSDT,demirovic2022murtree}, with GOSDT~\cite{gosdt} and STreeD~\cite{van2023necessary} extending these methods to broader classes of objectives. All of these approaches require global pre-binarization of features. DPDT~\cite{kohler2025breiman} instead uses a CART-based discretization that adaptively selects candidate thresholds at each node, while QuantBnB~\cite{pmlr-v162-mazumder22a} and ConTree~\cite{brita2025optimal} operate directly on continuous features. There is also a growing line of work on \emph{anytime optimal} tree methods that seek to identify high-quality solutions early in the search process \cite{demirovic2023blossom,kiossou2022time,kiossou2026generic}.

Particularly relevant to our work are AND/OR graph-based methods. \citet{originalAOstar} employed AO* to seek optimal DTs in a cost-sensitivity context, while  Branches~\cite{chaouki2025branches} recently applied AO* to the sparse tree induction problem. While Literati also adopts the underlying AO* mechanism, it departs from prior work in two fundamental ways: (1) we propose a novel AND/OR graph formulation that enables optimal induction of SGTs (Section~\ref{sec:aograph}), and (2) we propose key enhancements to the AO* selection procedure that substantially improve anytime performance while preserving optimality (Section~\ref{sec:anytimeaostar}).

\subsection{Shape Generalized Trees}
Shape Generalized Trees (SGTs)~\cite{upadhyaempowering} generalize decision trees by replacing threshold splits with learnable univariate shape functions $f(x_j)$ that map feature values to left or right branches. By capturing non-linear relations between features and the target within a single node, SGTs can reduce repeated branching on the same feature, resulting in more compact trees. As each split remains univariate, the learned shape functions can be easily visualized, preserving the inherent interpretability of decision trees \cite{upadhyaempowering}. 

However, due to their increased expressivity, inducing SGTs remains challenging. Current methods, primarily ShapeCART~\cite{upadhyaempowering}, are greedy and select shape functions by locally optimizing an impurity measure (e.g., Gini or entropy), yielding globally suboptimal trees. Post-hoc refinement methods like TAO~\cite{TAO} can improve these trees but are limited by the quality of the greedy initialization.

\section{Literati}\label{sec:literati}
In this work, we consider a supervised classification setting over a dataset $\mathcal{D} = \{(\mathbf{x}_n, y_n)\}_{n=1}^N$, where each $\mathbf{x}_n \in \mathbb{R}^M$ is an $M$-dimensional feature vector and each $y_n \in \mathcal{C}$ is the corresponding class label. We seek a Shape Generalized Tree (SGT) that minimizes classification error while controlling model complexity along two axes: the number of leaves, which governs overall model size, and the number of discontinuities in the learned shape functions, which governs the complexity of individual branching rules. Specifically, we solve the following regularized optimization problem:
\begin{equation} \label{eq:objective}
\begin{aligned}
    \min_{T \in \mathcal{T}}&\quad \text{Err}(T, \mathcal{D}) +  \sum_{v \in \mathcal{I}(T)} \left[\lambda + \alpha \mathbf{K}(f_v) \right]\\
    \text{s.t.}&\quad \mathbf{K}(f_v) \leq K \quad \forall\, v \in \mathcal{I}(T);\quad \mathbf{D}(T) \leq D
\end{aligned}
\end{equation}
where $\mathcal{T}$ denotes the set of all binary, axis-aligned SGTs, $\text{Err}(T, \mathcal{D})$ is the misclassification count of tree $T$ on dataset $\mathcal{D}$, and $\mathcal{I}(T)$ is the set of internal nodes of $T$. The hyperparameters $\lambda \geq 0$ and $\alpha \geq 0$ control the trade-off between error and model complexity.  At each internal node $v$, a single feature $j_v$ is selected and a piecewise constant shape function $f_v \colon \mathbb{R} \to \{0,1\}$ maps the value of feature $j_v$ to the left or right child. We define the complexity of $f_v$ as the number of discontinuities (change points) in this function: $\mathbf{K}(f_v) = \sum_{i=1}^{u_v - 1} \mathbf{1}\!\bigl[f_{v}\!\bigl(x_{j_v}^{(i)}\bigr) \neq f_{v}\!\bigl(x_{j_v}^{(i+1)}\bigr)\bigr]$ where $x_{j_v}^{(1)} \leq \cdots \leq x_{j_v}^{(u_v)}$ are the sorted unique values of feature $j_v$ among instances reaching node $v$. Note that $\mathbf{K}(f_v) = 1$ recovers a standard threshold split, with higher values corresponding to more complex shape functions. The constraint $\mathbf{K}(f_v) \leq K$ with $K \geq 1$ bounds this complexity to ensure that individual branching rules remain amenable to human inspection. Similarly, the constraint on the maximum depth of the tree $\mathbf{D}(T) \leq D, D  \geq 1$ limits the size of the tree and consequently improving interpretability \cite{luvstrek2016makes}. Both constraints also help mitigate overfitting and can be tuned as hyperparameters. 

Equation \ref{eq:objective} is recursively decomposable across subtrees, and at internal node $v$, the optimal subtree cost is obtained by choosing the better of becoming a leaf or splitting with the best shape function:
\begin{gather}
    \mathcal{L}_d^*(\mathcal{D}_v) = \min\!\left(\text{Err}(\mathcal{D}_v),\; \min_{f_v :\, \mathbf{K}(f_v) \leq K} \lambda + \alpha\, \mathbf{K}(f_v) + \mathcal{L}_{d-1}^*(\mathcal{D}_L(f_v)) + \mathcal{L}_{d-1}^*(\mathcal{D}_R(f_v))\right), \label{eq:node_problem}
\end{gather}
where $\mathcal{L}_d^*(\mathcal{D})$ denotes the cost of an optimal subtree for data $\mathcal{D}$ with depth budget $d$, computed recursively. The first argument, $\text{Err}(\mathcal{D}_v)$, is the cost of assigning the majority class at node $v$ without further splitting. $\mathcal{D}_L(f_v)$, $\mathcal{D}_R(f_v)$ are the left and right data partitions obtained by applying $f_v$. This recursion terminates when we reach the maximum depth, in which case $\mathcal{L}_{0}(\mathcal{D}_v)=\text{Err}(\mathcal{D}_v)$, or the node is pure with no error, i.e. $\text{Err}(\mathcal{D}_v) = 0$.  

\subsection{AND/OR Graph Representation}\label{sec:aograph}
\definecolor{commitColor}{HTML}{913F5A}
\definecolor{splitColor}{HTML}{014C82}
\definecolor{leafColor}{HTML}{697A21}
\definecolor{refineColor}{HTML}{C06F0C}
\begin{figure}[!t]
\centering
\resizebox{0.72\textwidth}{!}{%
\begin{tikzpicture}[
    scale=1,
    transform shape,
    font=\huge,
    datastate/.style={
        circle,
        draw=black,
        line width=2.5pt,
        fill=white,
        minimum size=4.5cm,
        align=center,
        font=\bfseries\Huge
    },
    shapestate/.style={
        rectangle,
        draw=black,
        line width=2pt,
        fill=white,
        minimum height=2.5cm,
        minimum width=6.5cm,
        inner sep=0pt
    },
    leaficon/.style={
        regular polygon,
        regular polygon sides=8,
        draw=leafColor,
        line width=2.5pt,
        fill=white,
        minimum size=2.5cm
    },
    cost/.style={
        font=\bfseries\Huge,
        fill=white,
        inner sep=4pt
    },
    featlabel/.style={
        font=\bfseries\Huge,
        text=black,
        anchor=south,
        inner sep=3pt
    },
    tick/.style={
        draw=black,
        line width=1.5pt,
        cap=round
    },
    shadedfill/.style={
        pattern=north east lines, 
        pattern color=black
    }
]

\def\lvloneY{-7cm}
\def\lvltwoY{-16cm}
\def\hgap{8.5cm}


\node (root) [datastate] {$(\mathcal{D}, d)$};

\node (leaf) [leaficon, right=of root, xshift=5cm] {};
\draw [->, -{Stealth[scale=1.5]}, line width=2.5pt, color=leafColor] (root.east) -- (leaf.west) node[midway, above, font=\bfseries\Huge, text=leafColor] {Leaf};


\node (s1) [shapestate] at (-1.5*\hgap, \lvloneY) {};
\fill[shadedfill] ($(s1.south west)!0.10!(s1.south east)$) rectangle (s1.north east);
\foreach \t in {0.10, 0.40, 0.85} \draw[tick] ($(s1.south west)!\t!(s1.south east)$) -- ++(0, 0.5);
\node (l1) [featlabel] at (s1.north) {$x_1: \mathbf{k}=\{1\}$};

\draw [->, -{Stealth[scale=1.5]}, line width=2.5pt, color=splitColor] (root) -- (l1.north east) node[midway, cost, text=splitColor] {$\lambda + \alpha$};
\node [font=\bfseries\Huge, text=splitColor] at ($(root)!0.5!(s1) + (-3.0cm, 1.5cm)$) {Split};

\node (s2) [shapestate] at (-0.5*\hgap, \lvloneY) {};
\fill[shadedfill] ($(s2.south west)!0.40!(s2.south east)$) rectangle (s2.north east);
\foreach \t in {0.10, 0.40, 0.85} \draw[tick] ($(s2.south west)!\t!(s2.south east)$) -- ++(0, 0.5);
\node (l2) [featlabel] at (s2.north) {$x_1: \mathbf{k}=\{2\}$};
\draw [->, -{Stealth[scale=1.5]}, line width=2.5pt, color=splitColor] (root) -- (l2.north east) node[midway, cost, text=splitColor] {$\lambda + \alpha$};

\node (s3) [shapestate] at (0.5*\hgap, \lvloneY) {};
\fill[shadedfill] ($(s3.south west)!0.85!(s3.south east)$) rectangle (s3.north east);
\foreach \t in {0.10, 0.40, 0.85} \draw[tick] ($(s3.south west)!\t!(s3.south east)$) -- ++(0, 0.5);
\node (l3) [featlabel] at (s3.north) {$x_1: \mathbf{k}=\{3\}$};
\draw [->, -{Stealth[scale=1.5]}, line width=2.5pt, color=splitColor] (root) -- (l3.north west) node[midway, cost, text=splitColor] {$\lambda + \alpha$};

\node (s4) [shapestate] at (1.5*\hgap, \lvloneY) {};
\fill[shadedfill] ($(s4.south west)!0.11!(s4.south east)$) rectangle (s4.north east);
\foreach \t in {0.11, 0.47, 0.65, 0.90} \draw[tick] ($(s4.south west)!\t!(s4.south east)$) -- ++(0, 0.5);
\node (l4) [featlabel] at (s4.north) {$x_2: \mathbf{k}=\{1\}$};
\draw [->, -{Stealth[scale=1.5]}, line width=2.5pt, color=splitColor] (root) -- (l4.north west) node[midway, cost, text=splitColor] {$\lambda + \alpha$};
\node [right=of s4, font=\Huge] {\dots};


\def\commitOffset{3cm} 
\node (dL) [datastate] at ($(s1) + (-\commitOffset, \lvltwoY - \lvloneY)$) {$\mathcal{D}_L(f_{\mathbf{k}}),$\\$d-1$};
\node (dR) [datastate] at ($(s1) + (\commitOffset, \lvltwoY - \lvloneY)$) {$\mathcal{D}_R(f_{\mathbf{k}}),$\\$d-1$};

\draw [->, -{Stealth[scale=1.5]}, line width=2.5pt, color=commitColor] (s1.south) -- (dL.north);
\draw [->, -{Stealth[scale=1.5]}, line width=2.5pt, color=commitColor] (s1.south) -- (dR.north);
\draw [line width=2pt, color=commitColor] ($(s1.south)!0.35!(dL.north)$) to[bend right=45] ($(s1.south)!0.35!(dR.north)$);
\node [font=\bfseries\Huge, align=right, fill=white, text=commitColor, anchor=east] at ($(s1.south) + (-1.5cm, -1.5cm)$) {Commit};

\node (r1) [shapestate] at ($(s1) + (1.2*\hgap, \lvltwoY - \lvloneY)$) {}; 
\fill[shadedfill] ($(r1.south west)!0.10!(r1.south east)$) rectangle ($(r1.south west)!0.40!(r1.south east) + (0, 2.45cm)$); 
\foreach \t in {0.10, 0.40, 0.85} \draw[tick] ($(r1.south west)!\t!(r1.south east)$) -- ++(0, 0.5);
\node (lr1) [featlabel] at (r1.north) {$x_1: \mathbf{k}=\{1, 2\}$};

\node (r2) [shapestate] at ($(r1) + (\hgap, 0)$) {};
\fill[shadedfill] ($(r2.south west)!0.10!(r2.south east)$) rectangle ($(r2.south west)!0.85!(r2.south east) + (0, 2.45cm)$);
\foreach \t in {0.10, 0.40, 0.85} \draw[tick] ($(r2.south west)!\t!(r2.south east)$) -- ++(0, 0.5);
\node (lr2) [featlabel] at (r2.north) {$x_1: \mathbf{k}=\{1, 3\}$};

\draw [->, -{Stealth[scale=1.5]}, line width=2.5pt, color=refineColor] (s1.south) -- (lr1.north west) node[midway, cost, text=refineColor] {$\alpha$};
\draw [->, -{Stealth[scale=1.5]}, line width=2.5pt, color=refineColor] (s1.south) -- (lr2.north west) node[midway, cost, text=refineColor] {$\alpha$};
\node [font=\bfseries\Huge, fill=white, text=refineColor] at ($(s1.south) + (13cm, -2.5cm)$) {Refinement};

\node (legend_origin) at ($(s4.south)!0.5!(r2.east) + (1cm, -3cm)$) {}; 

\node (lbox_w) [rectangle, draw=black, thin, fill=white, minimum width=1.5cm, minimum height=0.8cm] at (legend_origin) {};
\node [right=of lbox_w, anchor=west, font=\Huge] {Left partition};

\node (lbox_s) [rectangle, shadedfill, minimum width=1.5cm, minimum height=0.8cm, below=of lbox_w, yshift=0.2cm] {};
\node [right=of lbox_s, anchor=west, font=\Huge] {Right partition};

\end{tikzpicture}
}
\caption{Visual representation of the AND/OR graph for Literati following the hypergraph notation from \citet{nilsson2014principles}. Circle nodes correspond to data states and rectangle nodes correspond to Shape Function States. Each shape function state contains the candidate shape function being built. Edges connected by arcs are related via AND dynamics, edges without arcs are related via OR dynamics }
\label{fig:aograph}
\end{figure}

As Equation~\ref{eq:node_problem} decomposes separably over left and right subtrees, it can be solved via dynamic programming \cite{van2023necessary} and consequently admits a natural formulation as AND/OR graph search \citep{martelli1973additive,martelli1975dynamic,originalAOstar}. 
A direct translation of Equation~\ref{eq:node_problem} into this framework would require enumerating every feasible shape function at each internal node, leading to an explosion in the size of the partial search graph $G$. Instead, we further decompose Equation~\ref{eq:node_problem} by incrementally constructing shape functions one changepoint at a time.
This finer decomposition allows AO* to recognize suboptimal higher-order partial shape functions before they are admitted into $G$. More formally, let $\mathcal{U}^{(j)}_v$ be the set of unique values of feature $j$ from samples that reach internal node $v \in T$. The objective value at a node is defined as: 
\begin{gather}
    \mathcal{L}_d^*(\mathcal{D}_v) = \min \left(\text{Err}(\mathcal{D}_v),\; \min_{j \in \{1, \ldots, M\}} \; \min_{u\in \mathcal{U}^{(j)}_v}  \; \lambda + \alpha + \mathcal{L}_d^*(\mathcal{D}_v,  j, \{u\})\right), \label{eq:node_full} \\
    \resizebox{0.9\columnwidth}{!}{$\displaystyle
    \mathcal{L}_d^*(\mathcal{D}_v, j, \mathbf{k}) = \min \Bigg( \mathcal{L}_{d-1}^*(\mathcal{D}_L(f_j^{\mathbf{k}})) + \mathcal{L}_{d-1}^*(\mathcal{D}_R(f_j^{\mathbf{k}})),\  \alpha + \min_{\substack{u\in \mathcal{U}^{(j)}_v, \max(\mathbf{k}) < u,\\ |\mathbf{k} \cup \{u\}| \leq K}} \mathcal{L}_d^*(\mathcal{D}_v, j, \mathbf{k} \cup \{u\})\Bigg).\label{eqn:shaperecurrence}
$}
\end{gather}
Here, $f_j^{\mathbf{k}}$ denotes the piecewise constant shape function on $j$ defined by the change-point set $\mathbf{k}$. At each stage of the recurrence (Equation~\ref{eqn:shaperecurrence}), we can either recurse into the resulting left and right subtrees, or introduce an additional change point to the shape function at a cost of $\alpha$. Given this decomposition, we propose the following state-action model (depicted in Figure~\ref{fig:aograph}): 

\paragraph{Data States.} A data state $s= (\mathcal{D}_s, d)$, where $\mathcal{D}_s \subseteq \mathcal{D}$ is the subset of training instances reaching this node and $d \in \{0, 1, \ldots, D\}$ is the remaining depth budget. The root of the AND/OR graph is $s_0 = (\mathcal{D}, D)$. All children of a data state are related via OR dynamics. The available actions at a non-terminal data state are: 
\begin{enumerate}[noitemsep,leftmargin=1.5em]
    \item[(i)] \textbf{Split}: the split actions $a_{j,u}\ \forall j\in \{1, \ldots, M\},\ \forall u \in \mathcal{U}_s^{(j)}$. Each action transitions to the shape function state corresponding to the shape function on feature $j$ with an initial changepoint on $u$. More formally, $F(s, a_{j,u}) = (\mathcal{D}_s, d, j, \{u\})$ with $c(s, a_{j,u}) = \lambda + \alpha$.     
    \item[(ii)] \textbf{Leaf}: The terminal action $\bar{a}$ transitions $s$ to its terminal state $\overline{(\mathcal{D}_s, d)} \in \ell$, with $c(s, \bar{a}) = 0$.
\end{enumerate}

\paragraph{Terminal States.} A terminal state $\overline{(\mathcal{D}_s, d)} \in \ell$ is a data state at which no further actions are available and which corresponds to a leaf node in the final tree.  Data states are terminal by default when the depth budget is exhausted ($d = 0$) or all instances in $\mathcal{D}_s$ share the same class label.

\paragraph{Shape Function States.} A shape function state $s_{\text{shape}} = (\mathcal{D}_s, d, j, \mathbf{k})$ represents a partially constructed branching rule on feature $j \in \{1, \ldots, M\}$, where $\mathcal{D}_s$ and $d$ are inherited from the parent data state. The set $\mathbf{k} \subseteq \mathcal{U}^{(j)}_s$ is the ordered set of change-point indices defining the current piecewise constant shape function $f_j^{\mathbf{k}}$, such that $|\mathbf{k}| = \mathbf{K}(f_j^{\mathbf{k}}) \leq K$. The available actions are:
\begin{enumerate}[noitemsep,leftmargin=1.5em]
    \item[(i)] \textbf{Commit}: apply the current shape function $f_j^{\mathbf{k}}$ to partition $\mathcal{D}_s$ into $\mathcal{D}_L(f_j^{\mathbf{k}})$ and $\mathcal{D}_R(f_j^{\mathbf{k}})$. The transition function returns a pair of successor data or terminal states, $F(s, a_{\text{commit}}) = \bigl((\mathcal{D}_L, d-1),\, (\mathcal{D}_R, d-1)\bigr)$, which are related via AND dynamics: both must be solved, and their costs sum. The cost of this action is $c(s, a_{\text{commit}}) = 0$, as all regularization costs have already been incurred.
    \item[(ii)] \textbf{Refine}: $a_{j,\mathbf{k
    },u}$ introduces an additional change point $u \in \mathcal{U}^{(j)}_s$ satisfying $u > \max(\mathbf{k})$ and $|\mathbf{k} \cup \{u\}| \leq K$, transitioning to a new shape function state $F(s, a_{j,\mathbf{k},u}) = (\mathcal{D}_s, d, j, \mathbf{k} \cup \{u\})$ with $c(s, a_{j,\mathbf{k},u}) = \alpha$. The ordering constraint $u > \max(\mathbf{k})$ enforces a canonical construction of shape functions, avoiding redundant exploration of symmetric change-point sets.
\end{enumerate}

The optimal cost function $V^*$ over this AND/OR graph is characterized by the Bellman equations:
\begin{gather}
\resizebox{0.9\columnwidth}{!}{$\displaystyle
    V^*(s) = \begin{cases} \text{Err}(\mathcal{D}_s) & \text{if } s = \overline{(\mathcal{D}_s, d)} \in \ell \\ \min_{a \in \mathcal{A}(s)} Q^*(a, s) & \text{else} \end{cases}, \qquad Q^*(a, s) = c(a, s) + \sum_{s' \in F(a, s)} V^*(s'). \label{eq:bellman}
$}
\end{gather}
Our goal is to identify the optimal policy $\pi^*$ that prescribes an action at every non-terminal state reachable from the root $s_0 = (\mathcal{D}, D)$ under $\pi^*$ itself. The optimal policy is greedy with respect to $V^*$, i.e. $\pi^*(s) = \arg \min_{a \in \mathcal{A}(s)} Q^*(a, s)$, and we can use $\pi^*$ to traverse the AND/OR graph and obtain the optimal tree $T^*$.  
\begin{proposition}[Validity of AND/OR Graph]\label{prop:validity} The tree $T^*$ derived from the optimal policy $\pi^*$ is an optimal solution to the sparse SGT induction problem posed in Equation \ref{eq:objective}. Proof in Appendix~\ref{sec:ao_validity_proof}.
\end{proposition}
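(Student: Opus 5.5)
The plan is to set up a cost-preserving correspondence between solution graphs of the AND/OR graph rooted at $s_0=(\mathcal{D},D)$ and feasible SGTs in $\mathcal{T}$ satisfying the constraints of Equation~\ref{eq:objective}. From this correspondence, $V^*(s_0)$ will equal the optimal objective value, and the tree $T^*$ read off from $\pi^*$ will attain it. The argument proceeds by strong induction on the remaining depth budget $d$. The claim is that for every data state $s=(\mathcal{D}_s,d)$, $V^*(s)=\mathcal{L}_d^*(\mathcal{D}_s)$, where $\mathcal{L}_d^*$ is the optimum of Equation~\ref{eq:objective} restricted to $\mathcal{D}_s$ and depth $d$. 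The claim further says that following $\pi^*$ from $s$ yields a subtree achieving this value. Since the graph is acyclic (depth strictly decreases across commit actions, and $|\mathbf{k}|$ strictly increases across refine actions, bounded by $K$), the Bellman equations \ref{eq:bellman} have a unique solution, and this induction is well founded.

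\textbf{Base case.} When $d=0$, or $\mathcal{D}_s$ is pure, the state is terminal. Its value is $V^*=\text{Err}(\mathcal{D}_s)=\mathcal{L}_0^*(\mathcal{D}_s)$, and the tree is a single leaf. For a pure non-terminal case, any split costs at least $\lambda\ge 0$ on top of zero error, so the leaf remains optimal.

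\textbf{Inductive step.} Fix a data state with $d\ge1$. First, by backward induction on $|\mathbf{k}|$ from $K$ down to $1$, I will show that for each shape function state $(\mathcal{D}_s,d,j,\mathbf{k})$ the value $V^*$ equals the following quantity. Take the minimum, over all change-point sets $\mathbf{k}'\supseteq\mathbf{k}$ with $|\mathbf{k}'|\le K$ and $\min(\mathbf{k}'\setminus\mathbf{k})>\max(\mathbf{k})$, of
\[
\alpha(|\mathbf{k}'|-|\mathbf{k}|)+\mathcal{L}_{d-1}^*(\mathcal{D}_L(f_j^{\mathbf{k}'}))+\mathcal{L}_{d-1}^*(\mathcal{D}_R(f_j^{\mathbf{k}'})).
\]
This follows by unrolling Equation~\ref{eqn:shaperecurrence}, using the outer induction hypothesis for the commit successors at depth $d-1$. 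The key combinatorial fact is that every set $\mathbf{k}'$ with $1\le|\mathbf{k}'|\le K$ arises from exactly one refinement chain: its elements are added in increasing order, starting from $\{\min \mathbf{k}'\}$, which is reached by the split action $a_{j,\min\mathbf{k}'}$. The chain's accumulated cost is $\lambda+\alpha+\alpha(|\mathbf{k}'|-1)=\lambda+\alpha|\mathbf{k}'|$. Plugging this into the data-state Bellman equation recovers Equation~\ref{eq:node_problem}, where $\min_{f_v}$ ranges over all $(j,\mathbf{k}')$.

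The main obstacle is checking that optimizing over change-point sets $\mathbf{k}\subseteq\mathcal{U}^{(j)}_s$ is equivalent to optimizing over all piecewise constant $f_v$ with $\mathbf{K}(f_v)\le K$. I would handle this in two directions.

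\emph{Shape functions to change-point sets.} Any $f_v$ induces a labeling of the sorted unique values. Its discontinuities among these values form a set $\mathbf{k}$ with $|\mathbf{k}|=\mathbf{K}(f_v)$, identifying each change by the unique value at which the side switches. The function $f_j^{\mathbf{k}}$ partitions $\mathcal{D}_s$ identically to $f_v$, up to a global left/right swap. Such a swap leaves the cost unchanged, by the symmetry of the sum of the two subtree costs.

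\emph{Change-point sets to shape functions.} Conversely, every $f_j^{\mathbf{k}}$ is a valid shape function with $\mathbf{K}\le|\mathbf{k}|$. Strict inequality can occur if a change point sits at the first value or fails to alter any assignment. In that case the graph merely overcharges $\alpha$, which cannot produce a value below the true optimum, and the minimum still attains it through the exact set.

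Together these give $V^*(s)=\mathcal{L}_d^*(\mathcal{D}_s)$ at every data state. The greedy policy $\pi^*$ selects a minimizing action at each state, so the extracted tree, with leaf labels set to the majority class, has objective value $V^*(s_0)$ and is feasible by construction, since depth is at most $D$ and each node has at most $K$ change points. Therefore $T^*$ is optimal for Equation~\ref{eq:objective}.
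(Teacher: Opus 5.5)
Your structure is the paper's. It proves that Equations~\ref{eq:node_full}--\ref{eqn:shaperecurrence} compute the optimum (Lemma~\ref{lem:recurrence_validity}) and that $V^*$ obeys the same recurrences (Lemma~\ref{lem:bellman_instantiation}), using your ingredients: the unique increasing refinement chain with cost $\lambda+\alpha m$, left/right symmetry, and extraction along $\pi^*$. You fuse the two lemmas into one induction on $d$. Your remark that a change point at the smallest unique value is merely overcharged by $\alpha$ is more careful than the paper's cost-matching claim.

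The gap is the constant shape function. Equation~\ref{eq:objective} only imposes $\mathbf{K}(f_v)\le K$, so $\mathbf{K}(f_v)=0$ is allowed. The inner minimum of Equation~\ref{eq:node_problem} therefore contains the term $\lambda+\mathcal{L}_{d-1}^*(\mathcal{D}_s)$, since the empty child contributes $0$. In your direction ``shape functions to change-point sets,'' such an $f_v$ maps to $\mathbf{k}=\emptyset$. There is no shape function state with empty change-point set; your own chain lemma is stated for $|\mathbf{k}'|\ge 1$. The nearest graph path, $\mathbf{k}=\{x_j^{(1)}\}$ with $x_j^{(1)}$ the smallest unique value, costs $\lambda+\alpha$ rather than $\lambda$. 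So $V^*(s)\le\mathcal{L}_d^*(\mathcal{D}_s)$ is not yet justified: you have not ruled out that the optimum is attained only through a constant split.

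The paper closes this by noting that a constant split, costing $\lambda+\mathcal{L}_{d-1}^*(\mathcal{D}_s)$, is dominated by the leaf or a nontrivial split. A clean way to make this rigorous in your setting is contraction. In any feasible tree, replace each constant-split node by the subtree of the child that receives its samples. Then:
\begin{itemize}
\item every sample reaches the same leaf, so the error is unchanged;
\item each surviving node sees the same samples, so its $\mathbf{K}(f_w)$ is unchanged;
\item the penalty drops by at least $\lambda\ge 0$;
\item depth does not increase.
\end{itemize}
Hence some optimal tree has no constant splits, and your first direction only needs to handle non-constant $f_v$, which it does correctly. If one instead reads Equation~\ref{eq:objective} as forbidding constant splits, the same contraction is needed in your other direction, because the graph path through $\mathbf{k}=\{x_j^{(1)}\}$ yields a constant split. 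A smaller point: you take Equation~\ref{eq:node_problem} as the characterization of the restricted optimum. One sentence on additivity of the loss and heredity of both constraints, as in Lemma~\ref{lem:recurrence_validity}, would make that step explicit.
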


\subsubsection{Admissible Heuristic}
To solve this AND/OR graph with AO*, we maintain a heuristic estimate $V(s)$ for each non-terminal state $s \in \mathcal{S} \setminus \ell$ which induces the current best partial policy $\pi$ that determines the next state to expand. For data states, we initialize $V(\mathcal{D}_s, d)=\min\left(\text{Err}(\mathcal{D}_s),\;\lambda +\alpha\right)$ \cite{chaouki2025branches}.  For shape function states, we initialize $V(\mathcal{D}_s, d, j, \mathbf{k}) = \min\left(\alpha, \min(\text{Err}(\mathcal{D}_R(f_j^{\mathbf{k}})), \lambda + \alpha) + \min(\text{Err}(\mathcal{D}_L(f_j^{\mathbf{k}})), \lambda + \alpha)\right)$. 

Whenever AO* expands a state $s$, its children are added to the search graph with their initial heuristic values, and $s$ and its ancestors are updated bottom-up as follows:
\begin{gather}\label{eq:estimate_update}
    V(\widetilde{s}) = \min_{a \in \mathcal{A}(\widetilde{s})} Q(\widetilde{s},a), \quad Q(\widetilde{s},a) =  c(a,\widetilde{s}) + \sum_{s' \in F(a,\widetilde{s})} V(s'),  \quad \forall\, \widetilde{s} \in \{s\} \cup \text{Ancestors}(s).
\end{gather}

\begin{proposition}[Admissibility]\label{prop:admissibility}
The heuristic $V$ satisfies $V(s) \leq V^*(s)$ for every state $s$ throughout the execution of AO*. The proof is provided in Appendix~\ref{sec:admissibility}.
\end{proposition}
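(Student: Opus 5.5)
The argument splits into two parts. First we show that the initial heuristic values are admissible. Then we show, by induction over the sequence of backup operations, that the update in Equation~\ref{eq:estimate_update} keeps $V$ admissible. Terminal states need no work: they keep $V = \text{Err}(\mathcal{D}_s) = V^*$.

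\textbf{Initialization, data states.} For a non-terminal data state $(\mathcal{D}_s,d)$, the optimal policy either takes the leaf action, costing $\text{Err}(\mathcal{D}_s)$, or takes some split action. Every split action costs $\lambda+\alpha$, and every value $V^*$ is nonnegative because all action costs and all errors are nonnegative. Hence $V^*(s) \geq \min(\text{Err}(\mathcal{D}_s),\lambda+\alpha)$, which is exactly the initial value.

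\textbf{Initialization, shape states.} For a shape state $(\mathcal{D}_s,d,j,\mathbf{k})$, the optimal action is either a refine or a commit. A refine costs at least $\alpha$, and the remaining cost is nonnegative. A commit has cost $V^*(\mathcal{D}_L,d-1)+V^*(\mathcal{D}_R,d-1)$, and we bound each child separately.
\begin{itemize}
\item If the child is terminal (for instance $d-1=0$ or the child is pure), then $V^*=\text{Err}\ge\min(\text{Err},\lambda+\alpha)$.
\item Otherwise the child data-state argument applies verbatim, so $V^*(\mathcal{D}_L,d-1)\ge\min(\text{Err}(\mathcal{D}_L),\lambda+\alpha)$, and likewise for $\mathcal{D}_R$.
\end{itemize}
Taking the minimum over the two branches gives exactly the stated initial value, so $V\le V^*$ at initialization. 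We will note as a lemma that $V^*\ge 0$ everywhere. This follows by a straightforward induction on the finite acyclic graph, since depth strictly decreases along commits and $|\mathbf{k}|$ strictly increases along refines.

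\textbf{Preservation under backups.} Assume $V(s')\le V^*(s')$ for all $s'$ currently in $G$, including the newly initialized successors. A backup sets $V(\tilde s)=\min_a[c(a,\tilde s)+\sum_{s'} V(s')]$ over all actions in $\mathcal{A}(\tilde s)$. Since $\tilde s$ is expanded, every successor lies in $G$. Termwise we get $Q(\tilde s,a)\le Q^*(\tilde s,a)$, so $V(\tilde s)\le \min_a Q^*=V^*(\tilde s)$ by the Bellman equations~\ref{eq:bellman}. Backups proceed bottom-up from $s$ through its ancestors, and each one uses only values already known to be admissible. An induction on the number of backup operations over the whole execution therefore gives the claim.

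\textbf{Expected obstacle.} The main subtlety is the commit branch of the shape-state initialization. There the children may be terminal by default (depth exhausted or pure) and not merely non-terminal data states. One must check that the bound $\min(\text{Err},\lambda+\alpha)$ still holds in that case; it does, since $\text{Err}\ge\min(\text{Err},\cdot)$. A second point to check is that the algorithm's state values are always recomputed from the full action set $\mathcal{A}(\tilde s)$ and never from a subset, since a minimum over a subset of actions could exceed $V^*$.
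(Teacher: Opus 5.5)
Your proposal is correct and follows essentially the same route as the paper. The paper likewise proves an initialization lemma: it uses $V^*\ge 0$ and the $\lambda+\alpha$ split cost at data states, and at shape function states it bounds the commit branch child by child and the refines below by $\alpha$. It then proves a termwise backup-preservation lemma and concludes by induction over the iterations of AO*.

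The action-set concern you flag is also harmless under the anytime pruning rule. An action is discarded only when $Q(s,a) > \overline{V}(s) \geq V^*(s)$, and by the inductive hypothesis this forces $Q^*(s,a) > V^*(s)$. So an optimal action always survives, and your termwise bound still applies.
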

By Proposition~\ref{prop:admissibility} and Proposition~\ref{prop:validity}, the tree induced by $\pi^* = \pi$ at termination of AO* (Algorithm~\ref{alg:ao_main_loop}) is globally optimal. Furthermore, the heuristic enables early detection of terminal data states: if $\text{Err}(\mathcal{D}_s) \leq \lambda + \alpha$, no split can improve upon the leaf cost as per Corollary~\ref{cor:valid_inequality}. 
\begin{corollary}[Valid Inequality]\label{cor:valid_inequality}
If $\text{Err}(\mathcal{D}_s) \leq \lambda + \alpha$, then $(\mathcal{D}_s, d)$ can be marked terminal without expansion, without affecting optimality. Proof in Appendix~\ref{sec:admissibility}.
\end{corollary}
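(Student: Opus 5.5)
The plan is to compare the leaf cost at $(\mathcal{D}_s,d)$ with a lower bound on the cost of every split action, using only the action costs of the AND/OR graph and non-negativity of $V^*$. Under the hypothesis $\text{Err}(\mathcal{D}_s)\le\lambda+\alpha$, this should show that $\pi^*(s)=\bar a$ can always be chosen. Marking $s$ terminal then deletes only actions whose $Q^*$-values are no smaller, so $V^*(s)$ is unchanged. Hence $V^*$ at every ancestor, and thus at $s_0$, is unchanged, and Proposition~\ref{prop:validity} still yields an optimal tree.

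\textbf{Step 1 (non-negativity).} I will show $V^*(s')\ge 0$ for every state $s'$. This follows by induction on the acyclic graph, ordered by remaining depth and then by $K-|\mathbf{k}|$ within shape function states. Terminal states have value $\text{Err}(\cdot)\ge 0$. Every action has $c\ge 0$ and finitely many successors. So in Equation~\ref{eq:bellman} each $Q^*$ is a sum of non-negative terms, and the minimum of such sums is non-negative. Alternatively, Proposition~\ref{prop:admissibility} gives $0\le V(s')\le V^*(s')$, because every initial heuristic value is a minimum of non-negative quantities; either route suffices.

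\textbf{Step 2 (bound on splits).} For any split action $a_{j,u}$ at $s$, the definition gives
\[
Q^*(a_{j,u},s)=\lambda+\alpha+V^*(\mathcal{D}_s,d,j,\{u\})\ge\lambda+\alpha\ge\text{Err}(\mathcal{D}_s)=Q^*(\bar a,s),
\]
where the first inequality uses Step 1. Hence $V^*(s)=\min_a Q^*(a,s)=\text{Err}(\mathcal{D}_s)$, attained by the leaf action. Making $s$ terminal assigns it exactly the value $\text{Err}(\mathcal{D}_s)$ in Equation~\ref{eq:bellman}, so all Bellman values are preserved.

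\textbf{Step 3 (ties and the search).} With equality, a split may tie with the leaf; choosing the leaf still gives an optimal policy, since the objective value is the same. For AO*, the early termination is consistent with the initial heuristic $V(\mathcal{D}_s,d)=\min(\text{Err}(\mathcal{D}_s),\lambda+\alpha)=\text{Err}(\mathcal{D}_s)=V^*(s)$. So the value is already exact, and admissibility (Proposition~\ref{prop:admissibility}) is maintained. I expect no step to be a real obstacle; the only care needed is in justifying non-negativity of $V^*$ at shape function states, which rests on the acyclicity and induction ordering in Step 1.
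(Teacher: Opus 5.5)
Your argument is correct and matches the paper's proof, which is exactly the chain $Q^*(a_{j,u},s)=\lambda+\alpha+V^*(\mathcal{D}_s,d,j,\{u\})\ge\lambda+\alpha\ge\text{Err}(\mathcal{D}_s)=Q^*(\bar a,s)$. The paper takes non-negativity of $V^*$ from the Bellman unrolling in the proof of Lemma~\ref{lem:init_admissible}, and your Step~1 induction makes that explicit. One caution: your ``alternative'' route to Step~1 via Proposition~\ref{prop:admissibility} is circular, because the paper's admissibility proof itself relies on $V^*\ge 0$, so keep the direct induction.
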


\subsubsection{Candidate Change Points}

Searching for an optimal decision tree is computationally demanding, and the additional expressivity of Shape Generalized Trees makes the search space prohibitively large for deep trees or large datasets. Most prior optimal tree methods reduce the size of the search space via pre-discretization, restricting split candidates to a fixed global set such as feature quantiles \citep{dl85,gosdt,van2023necessary,chaouki2025branches,demirovic2023blossom} or ensemble-derived thresholds \citep{mctavish2022fast}. Such global discretization can miss locally informative change points relevant to specific decision nodes. 

Instead, we adopt the \emph{adaptive} discretization strategy employed by DPDT \cite{kohler2025breiman}. More formally, at every data state $(\mathcal{D}_s, d)$, we fit a CART tree on $\mathcal{D}_s$ with at most $B_d$ leaf nodes and use its thresholds to construct a collection of per-feature change-point candidate sets $U_v^{(j)} \subseteq \mathcal{U}_s^{(j)}\ \forall j=1,\ldots,M$, which restricts the action space in our AND/OR graph as follows: 
\begin{gather}
    \mathcal{A}_{\text{split}}(\mathcal{D}_s, d) = \bigl\{ a_{j,u} : j \in \{1,\ldots,M\},\; u \in U_s^{(j)} \bigr\}, \label{eq:restricted_split} \\
    \mathcal{A}_{\text{refine}}(\mathcal{D}_s, d, j, \mathbf{k}) = \bigl\{ a_{j,\mathbf{k},u} : u \in U_s^{(j)},\; u > \max(\mathbf{k}),\; |\mathbf{k} \cup \{u\}| \leq K \bigr\}, \label{eq:restricted_refine}
\end{gather}
The schedule $B_d$ is user-specified and typically increases with the depth budget, since states closer to the root have greater influence on tree structure; this provides a direct lever for trading solution quality against computational cost. 

Similar to prior state-of-the-art optimal tree induction algorithms that utilize discretization \cite{gosdt,kohler2025breiman,chaouki2025branches,mctavish2022fast}, solutions obtained via adaptive discretization are optimal with respect to the discretized space but may not be optimal with respect to the original space. However, our adaptive discretization strategy yields substantial speedups in convergence while preserving an objective value close to that attained over the full candidate set in practice (Section~\ref{sec:optimization_performance}). Furthermore, we can obtain upper bounds on the training loss of Literati with respect to DPDT \cite{kohler2025breiman} and CART \cite{CART} (Appendix~\ref{sec:cand_gen}). 

\subsection{Anytime AO*}\label{sec:anytimeaostar}
To support anytime behaviour, we maintain a primal bound $\overline{V}(s)$ and its induced policy $\overline{\pi}$, which tracks the best solution found so far from the subgraph rooted at $s$. Data states are initialized with $\overline{V}(\mathcal{D}_s, d) = \text{Err}(\mathcal{D}_s)$ (treating the data state as terminal). For shape function states, we eagerly compute the primal bounds of the two commit successors, and set $\overline{V}$ to their sum. $\overline{V}$ is updated on expansion using a bottom-up rule analogous to Equation~\ref{eq:estimate_update}. On early termination, we follow $\overline{\pi}$ from the root until we reach either a terminal state or an unexpanded state. If the reached state is a terminal or unexpanded data state, we treat it as a leaf of the returned tree. If it is an unexpanded shape function state, we apply the commit action and treat both resulting data states as leaves. 
Since the admissible heuristic lower-bounds the value at a state, it serves as a dual bound and can be used to implement pruning strategies analogous to those used in AND/OR branch-and-bound \cite{AObb}; If $Q(s,a) > \overline{V}(s)$, then the corresponding sub-graph is guaranteed to be sub-optimal, therefore we can prune $a$, i.e. $A(s) := A(s) \setminus a$. 

While $\overline{\pi}$ provides a feasible tree at any point, we empirically observe that substantial search is required before a high-quality solution is found. This loose primal bound also limits the effectiveness of pruning, as  $Q(s,a) > \overline{V}(s)$ is rarely satisfied during the early search iterations. We trace this to two distinct bottlenecks in the selection phase, one at OR dynamics and one at AND dynamics. Each bottleneck is addressed with a targeted modification that improves anytime performance while preserving the optimality of the policy returned at the termination of AO*. We describe each of them in the following sections. Detailed pseudocode and implementation details of our approach can be found in Appendix~\ref{sec:pseudocode}.

\subsubsection{Secondary Inadmissible Heuristic for OR Selection}\label{sec:or_selection}
The first issue is poor selection among sibling OR nodes, which AO* cycles through in a breadth-first-like pattern due to a weak admissible heuristic. We illustrate this for a data state $s$ in Figure~\ref{fig:blind_issue}, and a similar pattern arises for shape function states. For simplicity, we assume $\text{Err}(\mathcal{D}_s) \gg \alpha$ and $\text{Err}(\mathcal{D}_s) \gg \lambda$, and that these inequalities are preserved under splits of $\mathcal{D}_s$. Such a regime is representative of the upper levels of the tree, where partitions remain coarse, and leaf errors are substantially larger than the regularization parameters, so the regularization costs dominate. When state $s$ (root node in Figure~\ref{fig:blind_issue}) is expanded, every unexpanded shape function successor $s'$ enters $G$ with $V(s') = \min(2(\lambda + \alpha),\alpha) = \alpha$, yielding identical $Q$-values of $\lambda + 2\alpha$ across all split actions. Tie-breaking among these actions is therefore arbitrary: the algorithm expands one successor, lifting its $Q$-value above the tie, so $\pi(s)$ redirects to any remaining unexplored actions, and the pattern repeats. The result is poor anytime performance as AO* touches every split candidate once before deepening into any particular subtree. 

To address this problem, we introduce a secondary, more informative but potentially inadmissible, heuristic $\widetilde{V}$ to guide selection. Each state $s$ maintains both the admissible heuristic $V(s)$ and the informative heuristic $\widetilde{V}(s)$, which is updated on expansion using a bottom-up rule analogous to Equation~\ref{eq:estimate_update}. $\widetilde{V}$ induces a selection policy $\widetilde{\pi}$ that prescribes, at each state, the action with the lowest $\widetilde{Q}$-value. During selection, AO* follows $\widetilde{\pi}$ from the root to identify the next state to expand instead of $\pi$.  Since $\widetilde{V}$ can potentially be inadmissible, when all successors in $F(s, \widetilde{\pi}(s))$ are COMPLETE, we cannot mark $s$ as COMPLETE. Instead, we revise $\widetilde{\pi}(s)$ to the action $a \in \mathcal{A}(s)$ with the lowest $\widetilde{Q}$-value among those whose successors are not all COMPLETE and continue the search to preserve optimality \citep{hansen2007anytime}.  Pruning and termination remain governed exclusively by $\pi$: a state is marked COMPLETE only via its best partial policy under $V$, pruning conditions are unchanged, and AO* terminates when the root is COMPLETE under $\pi$. Since $\widetilde{V}$ and $\widetilde{\pi}$ only influence the order of expansion, and the policy returned at termination is guaranteed to be optimal. 

\paragraph{CART-lookahead.} We propose a CART-based lookahead heuristic for $\widetilde{V}$. At a data state $(\mathcal{D}_s, d)$, we fit a CART tree to $\mathcal{D}_s$ with depth budget $d$ and set $\widetilde{V}(\mathcal{D}_s, d)$ to its misclassification count plus $(\lambda + \alpha)$ per internal node (Equation~\ref{eq:objective}). For shape function states $(\mathcal{D}_s, d, j, \mathbf{k})$, we partition $\mathcal{D}_s$ via $f_j^{\mathbf{k}}$ and set $\widetilde{V}(\mathcal{D}_s, d, j, \mathbf{k}) = \widetilde{V}(\mathcal{D}_L(f_j^{\mathbf{k}}), d-1) + \widetilde{V}(\mathcal{D}_R(f_j^{\mathbf{k}}), d-1)$. Literati supports other informative heuristics, and alternatives are discussed in Appendix~\ref{sec:inadmissible_comp}.

\subsubsection{Round Robin Selection for AND Nodes}\label{sec:and_selection}

\begin{wrapfigure}[15]{R}{0.42\textwidth}
\centering
\tikzset{
    datastate/.style={
        circle, draw=black, line width=2.5pt, fill=white,
        minimum size=2.5cm, align=center, font=\bfseries\Huge
    },
    shapestate/.style={
        rectangle, draw=black, line width=2pt, fill=white,
        minimum height=1.6cm, minimum width=2.2cm, align=center, font=\bfseries\Huge
    },
    refinestate/.style={
        rectangle, draw=refineColor, line width=2pt, fill=white,
        minimum height=1.6cm, minimum width=2.2cm, align=center, font=\bfseries\Huge
    },
    beststate/.style={
        rectangle, draw=commitColor, line width=4pt, fill=white,
        minimum height=1.6cm, minimum width=2.2cm, align=center, font=\bfseries\Huge
    },
    cost/.style={font=\bfseries\Huge, fill=white, inner sep=3pt},
    costbig/.style={font=\bfseries\Huge, fill=white, inner sep=3pt},
    bestedge/.style={->, -{Stealth[scale=1.3]}, line width=3pt, color=commitColor},
    refineedge/.style={->, -{Stealth[scale=1.3]}, line width=3pt, color=refineColor},
    normaledge/.style={->, -{Stealth[scale=1.3]}, line width=2pt, color=black}
}
%
%

%
%
\centering
\resizebox{\linewidth}{!}{%
\begin{tikzpicture}[scale=1, transform shape, font=\huge]

\path (9.75, 0);

\node (rootR) [datastate] at (0, 0) {$\lambda + \alpha$};

\node (sR1) [refinestate] at (-6, -4) {$2\alpha$};
\node (sR2) [beststate] at (0, -4) {$\alpha$};
\node (sR3) [shapestate] at (6, -4) {$\alpha$};

\draw [refineedge] (rootR) -- (sR1) node[midway, costbig, text=refineColor] {$\lambda + 3\alpha$};
\draw [bestedge] (rootR) -- (sR2) node[midway, costbig, text=commitColor] {$\lambda + 2\alpha$};
\draw [normaledge] (rootR) -- (sR3) node[midway, costbig] {$\lambda + 2\alpha$};

\def\commitOff{2.5}
\node (dL) [datastate] at (-6 - \commitOff, -9.5) {$\lambda + \alpha$};
\node (dR) [datastate] at (-6 + \commitOff, -9.5) {$\lambda + \alpha$};

\draw [normaledge] (sR1.south) -- (dL.north);
\draw [normaledge] (sR1.south) -- (dR.north);
\draw [line width=2pt, color=black] ($(sR1.south)!0.4!(dL.north)$) to[bend right=35] ($(sR1.south)!0.4!(dR.north)$);
\node [costbig] at ($(sR1.south)!0.5!(dL.north) + (-1.8cm, 0)$) {$2(\lambda + \alpha)$};

\node (rR1) [shapestate] at (1, -9.5) {$\alpha$};
\node (rR2) [shapestate] at (5, -9.5) {$\alpha$};

\draw [normaledge] (sR1.south) -- (rR1.north) node[midway, costbig] {$2\alpha$};
\draw [normaledge] (sR1.south) -- (rR2.north) node[midway, costbig] {$2\alpha$};

\end{tikzpicture}
}
\caption{The blind heuristic problem. Nodes show $V$; edges show $Q$. Red: node selected this iteration; orange: node expanded in the prior iteration.}\label{fig:blind_issue}
\end{wrapfigure}

The second issue occurs at AND dynamics. When the commit action is selected at a shape function state, the algorithm must choose which of the two resulting data states $(s_L, s_R)$ to explore first. \citet{chaouki2025branches} selects the child with the worst $\overline{V}$ to prioritize improving the subtree with the inferior incumbent solution. However, when primal values are significantly imbalanced -  e.g. $\overline{V}(s_L) \gg \overline{V}(s_R)$ or vice versa -  this rule keeps returning to $s_L$ for many iterations before $\overline{V}(s_L)$ drops enough for $s_R$ to be selected. As a result, early iterations repeatedly expand one subtree while the sibling remains unexplored, even when the larger $\overline{V}$ reflects an inherently harder subproblem. Substituting $\overline{V}$ for $V$ or $\widetilde{V}$ does not resolve this, as analogous imbalances may arise. To address this, we adopt a round-robin selection policy, and when both $s_L$ and $s_R$ are unsolved, we alternate between them on successive selection calls, ensuring balanced exploration of both subtrees. If one child is already solved, selection proceeds to the remaining unsolved child. 

\section{Experimental Evaluation}\label{sec:experiments}
We evaluate Literati on 24 small-to-medium datasets ($<$500k samples) from \citet{upadhyaempowering}, which includes the QuantBnB benchmark~\cite{pmlr-v162-mazumder22a} used by \citet{kohler2025breiman} and \citet{brita2025optimal} (dataset details in Appendix~\ref{sec:add_exp_details}).  Unless otherwise specified, results are averaged over five folds with a 70/10/20 train/val/test split. We compare Literati against SGT baselines ShapeCART and ShapeTAO~\cite{upadhyaempowering}, and non-SGT baselines CART~\cite{CART}, AxTAO~\cite{TAO}, Branches~\cite{chaouki2025branches}, STreeD~\cite{van2023necessary}, LDS-DL8.5~\cite{kiossou2022time}, CADL8.5~\cite{kiossou2026generic}, ConTree~\cite{brita2025optimal} (STreeD with continuous feature support), and DPDT~\cite{kohler2025breiman}. Of these, CART, ShapeCART, AxTAO, ShapeTAO are non-optimal induction approaches. Among the optimal approaches, ConTree operates natively on continuous features, Branches, STreeD, LDS-DL8.5, and CADL8.5 require pre-discretization, while DPDT and Literati use adaptive discretization. Furthermore, LDS-DL8.5~\cite{kiossou2022time}and CADL8.5~\cite{kiossou2026generic}both prioritize anytime performance. For runs that fail to produce any solution within the time limit (only DPDT), we impute the accuracy of a single-leaf tree. For runs that exhaust the 64GB memory limit (only observed for Branches), we report the accuracy and runtime achieved at the time of the crash.

\subsection{Optimization Performance}\label{sec:optimization_performance}

\begin{table}[!t]
\centering
\caption{Training accuracy, runtime, and optimality proof rate across depth budgets averaged across all datasets evaluated, organized into Non-Optimal and Optimal approaches, with the latter grouped by discretization strategy. The dashed lines separate Literati from baselines. Bold indicates the best value per depth for training accuracy and runtime.}
\label{tab:OptimalityTable}
\resizebox{\linewidth}{!}{%
\begin{tabular}{@{}rcccccrrrrcccc@{}}
\toprule
 &  & \multicolumn{4}{c}{$\uparrow$ Train Acc. (\%)} & \multicolumn{4}{c}{$\downarrow$ Runtime (s)} & \multicolumn{4}{c}{$\uparrow$ Proof Obtained (\%)} \\ \midrule
Model & $K$ & $D$=3 & $D$=4 & $D$=5 & $D$=6 & \multicolumn{1}{c}{$D$=3} & \multicolumn{1}{c}{$D$=4} & \multicolumn{1}{c}{$D$=5} & \multicolumn{1}{c}{$D$=6} & $D$=3 & $D$=4 & $D$=5 & $D$=6 \\\midrule
\multicolumn{14}{c}{\textbf{Non-Optimal}} \\
CART & 1 & 79.60 & 82.70 & 85.65 & 87.64 & \textbf{0.08} & \textbf{0.11} & \textbf{0.13} & \textbf{0.16} & --- & --- & --- & --- \\
AxTAO & 1 & 80.25 & 84.04 & 86.55 & 88.46 & 5.41 & 8.05 & 10.98 & 13.87 & --- & --- & --- & --- \\
ShapeCART & 3 & 80.43 & 83.64 & 86.34 & 88.52 & 1.82 & 1.96 & 3.03 & 3.84 & --- & --- & --- & --- \\
ShapeTAO & 3 & 81.28 & 85.07 & 87.33 & 89.49 & 3.85 & 8.07 & 17.26 & 29.84 & --- & --- & --- & --- \\
\multicolumn{14}{c}{\textbf{Optimal (No Discretization)} } \\
ConTree & 1 & 84.12 & 86.68 & 86.13 & 84.35 & 388.18 & 1869.40 & 2542.97 & 2700.91 & 95.8 & 54.2 & 33.3 & 25.0 \\
\multicolumn{14}{c}{\textbf{Optimal (Quantile Pre-Discretization)}} \\
Branches & 1 & 73.59 & 71.20 & 71.02 & 71.40 & 1374.91 & 3362.46 & 3390.99 & 3303.26 & 62.5 & 6.7 & 5.8 & 8.3 \\
STreeD & 1 & 83.50 & 86.57 & 88.44 & 88.26 & 169.90 & 413.52 & 1693.70 & 2445.12 & 99.2 & 91.7 & 65.0 & 37.5 \\
LDS-DL8.5 & 1 & 83.50 & 86.95 & 89.09 & 90.50 & 620.79 & 2459.50 & 3300.01 & 3212.81 & 87.5 & 37.5 & 8.3 & 12.5 \\
CADL8.5 & 1 & 83.50 & 86.14 & 87.29 & 84.12 & 315.62 & 1179.99 & 2481.14 & 3154.14 & 91.7 & 79.2 & 38.3 & 12.5 \\
\multicolumn{14}{c}{\textbf{Optimal (Adaptive Discretization)}} \\
DPDT & 1 & 83.56 & 87.03 & 77.16 & 71.52 & 46.57 & 488.23 & 2387.41 & 3217.40 & 100.0 & 95.8 & 45.8 & 12.5 \\\hdashline \noalign{\smallskip}
\multirow{3}{*}{Literati} & 1 & 83.56 & 87.28 & 89.91 & 92.28 & 18.22 & 259.14 & 1156.29 & 2587.19 & 100.0 & 97.5 & 87.5 & 36.7 \\
 & 2 & 84.03 & 87.75 & 90.36 & 92.58 & 60.82 & 851.76 & 2672.71 & 2817.87 & 100.0 & 87.5 & 33.3 & 22.5 \\
 & 3 & \textbf{84.13} & \textbf{87.90} & \textbf{90.46} & \textbf{92.69} & 162.93 & 1393.71 & 2832.73 & 2782.09 & 100.0 & 82.5 & 25.0 & 24.2 \\ \bottomrule
\end{tabular}%
}
\end{table}

In our first experiment, we assess Literati's training performance across a range of depth and changepoint budgets. We run Literati and our baselines on all datasets with a one-hour limit over depth budgets $D \in \{3, \ldots, 6\}$. For Literati, we sweep shape complexity $K \in \{1, 2, 3\}$ (where $K = 1$ recovers an axis-aligned linear tree). ShapeTAO and ShapeCART are evaluated at $K = 3$, the most generous setting. For methods requiring pre-discretization, we preprocess features to 16 quantiles. For DPDT and Literati, we set the number of candidate thresholds to 16 for all depths ($B_d = 16\ \forall d = 1,\ldots, D$). Further details regarding the experimental setup can be found in Appendix~\ref{sec:add_exp_details}. We report average train accuracy and runtime in Table~\ref{tab:OptimalityTable}. While proof rates are not directly comparable across different discretization methods, we report them where applicable for completeness. Per-dataset results as well as per-depth critical difference diagrams can be found in Appendix~\ref{sec:opt_full_results}. 

\paragraph{Optimization Capabilities.} To isolate Literati's optimization capabilities from the expressiveness of shape functions, we first consider the $K=1$ setting, where Literati produces axis-aligned threshold trees. Literati achieves the lowest runtime among optimal methods for $D \in {3,4,5}$ and the highest training accuracy for $D \in {4,5,6}$ across all evaluated methods. While ConTree attains the highest training accuracy at the shallowest depth ($D=3$), Literati significantly outperforms it for $D \in {4,5,6}$. Literati also substantially outperforms all prediscretized approaches (Branches, STreeD, LDS-DL8.5, and CA-DL8.5) in both training accuracy and runtime across all depths and values of $K$. In particular, Literati achieves significantly higher training accuracy than Branches, the other AO*-based approach, which frequently exhausts the 64 GB memory limit, highlighting the effectiveness of Literati's pruning strategy. Against DPDT, Literati matches its training accuracy exactly at $D=3$ while achieving a lower runtime. Despite operating over the same search space when $K=1$, Literati attains a higher optimality proof rate, which we attribute to its AO*-based search reducing the number of subproblems that must be solved. Both methods also support depth-dependent $B_d$ schedules that trade solution quality for runtime, and Literati's advantage over DPDT persists under a lighter adaptive discretization regime (Appendix~\ref{sec:opt_full_results}).

\paragraph{Effect of Shape Complexity.} Increasing $K$ substantially improves training accuracy: $K=1$ to $K=2$ yields $\approx 0.4\%$ across all depths, with $K=2$ to $K=3$ contributing a smaller but consistent gain. Consequently, $K=3$ achieves the highest accuracy at every depth, exceeding the SGT baselines with similar $K$ (ShapeCART, ShapeTAO) and all optimal tree methods. 

\paragraph{Anytime Performance.} We observe that Literati's training accuracy continues to improve with depth at every $K$, even when proof rates fall, reflecting strong anytime behaviour. STreeD and LDS-DL8.5 exhibits a similar trend, but the gap between Literati and these two models widens with increasing depth. ConTree, CADL8.5, and Branches exhibit performance degradation as depth increases. DPDT does not return intermediate solutions upon timeout, resulting in poor performance at higher depths. 

\subsubsection{Selection Strategy Ablation}\label{sec:anytime_ao_ablation}

In this section, we provide an ablation of our selection strategies to isolate their contributions to anytime performance. Specifically, we ablate our OR selection method (best informative heuristic $\widetilde{V}$) by selecting using the admissible heuristic $V$. Additionally, we ablate our AND selection strategy (Round-Robin aka RR) and consider two alternatives: (1) selecting the child with the worst primal value \citep{chaouki2025branches} ($\uparrow \overline{V}$) and (2) and selecting the child with the worst value under the OR heuristic in use ($\uparrow \widetilde{V}$ or $\uparrow V$). All configurations are evaluated on five representative datasets (Eye-Movements, Avila, Page, Fault, and Rice) across five folds with $D=6$, $K = 3$ under a one-hour time limit. Figure~\ref{fig:opt_distance} plots the distance to the optimal solution over time for Eye-Movements and Avila, and results for the other three datasets, plus distances across iterations, can be found in Appendix \ref{sec:add_inad_results}.

\begin{wrapfigure}[32]{r}{0.48\textwidth}
    \centering
    \includegraphics[width=\linewidth]{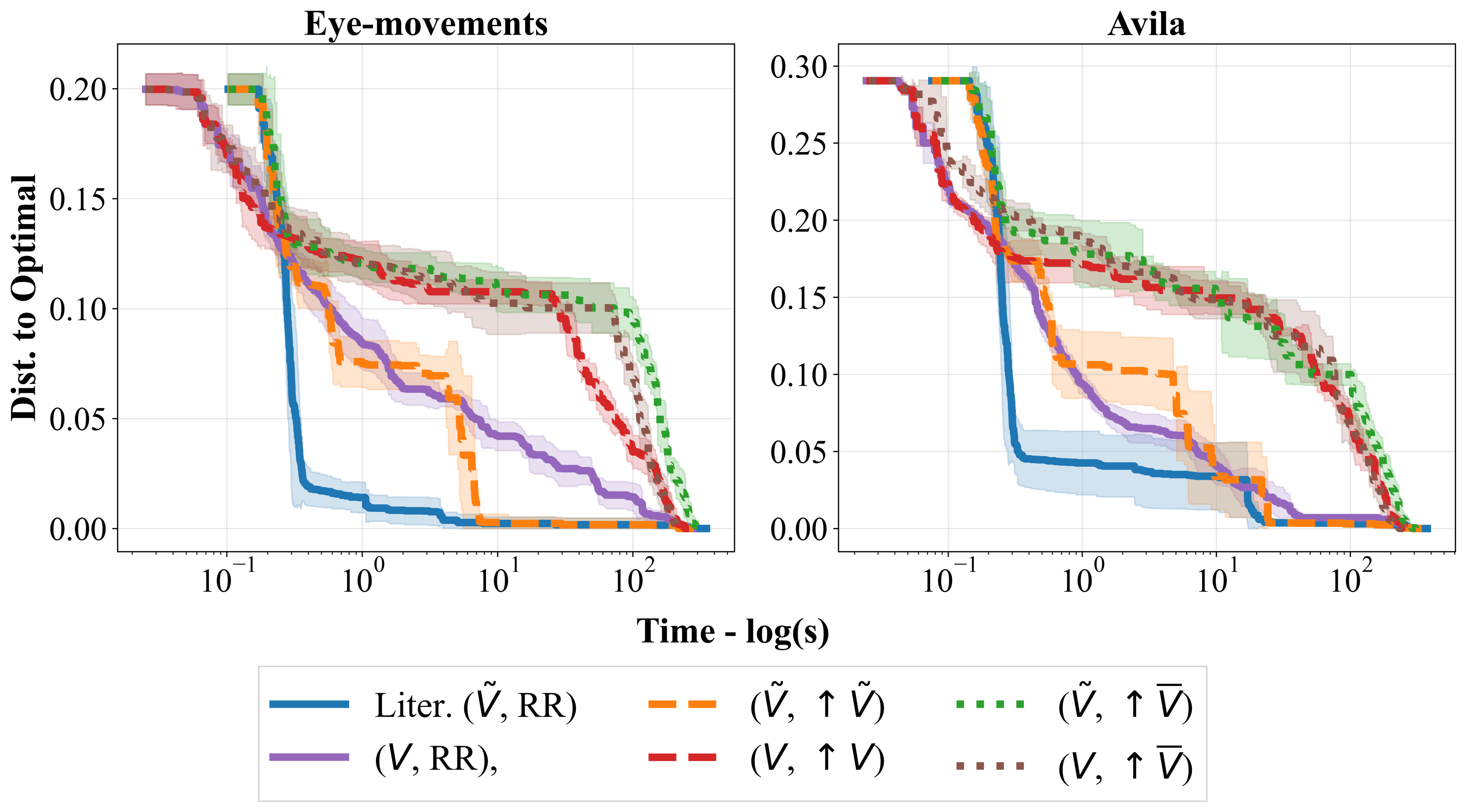}
    \captionof{figure}{Dist. to Opt. Solution. Legend key: (OR Selection, AND Selection).}
    \label{fig:opt_distance}
    \includegraphics[width=\linewidth]{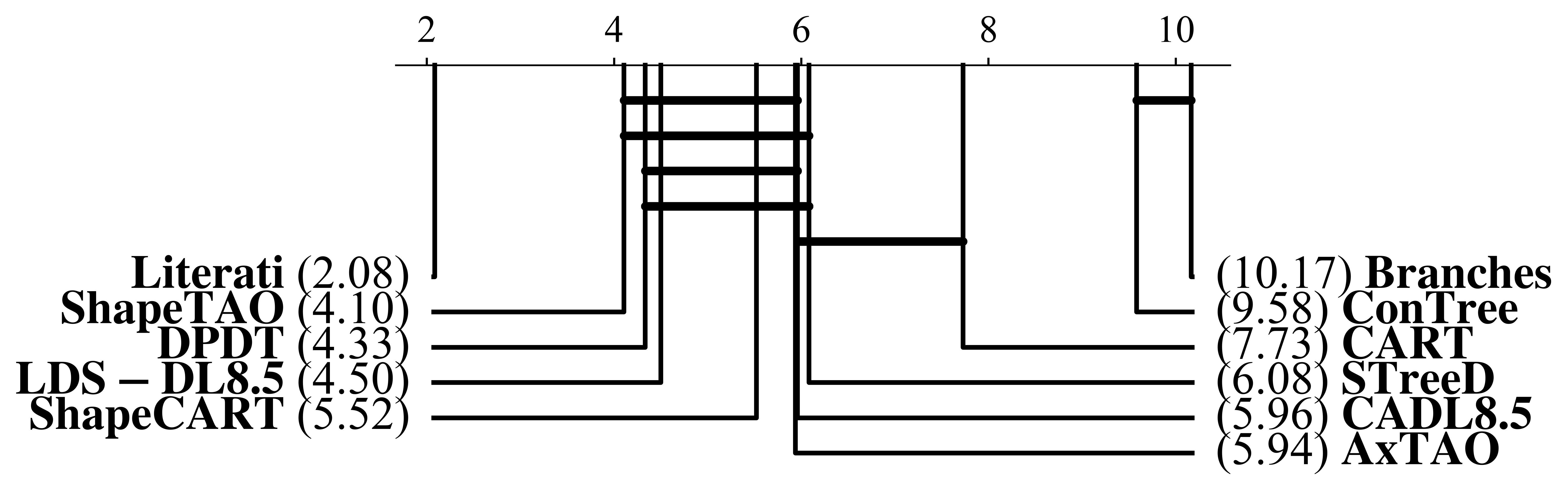}
    \captionof{figure}{CDD of test accuracy. Axis shows each method's mean rank across datasets and folds; bars indicate statistical equivalence ($p \geq 0.05$, Wilcoxon).}
    \label{fig:cdd}
    \captionof{table}{Average Test Accuracy Results.}
    \label{tab:gen_table}


\resizebox{0.54\linewidth}{!}{%
\begin{tabular}{@{}cc@{}}
\toprule
Model     & Test Acc. (\%) \\ \midrule
CART      & 87.6          \\
AxTAO     & 88.5          \\
ShapeCART & 89.1          \\
ShapeTAO  & {\ul 89.6}    \\
Branches  & 84            \\
STreeD    & 88.5          \\
LDS-DL8.5 & {\ul 89.6}    \\
CADL8.5   & 88.6          \\
ConTree   & 85.5          \\
DPDT      & 89.3          \\
Literati  & \textbf{90.6} \\ \bottomrule
\end{tabular}%
}

\end{wrapfigure}
We observe that both the informative inadmissible heuristic and the round-robin AND selection meaningfully contribute to anytime performance, and that Literati achieves the best overall anytime performance when combining them. 

\subsection{Generalization}\label{sec:generalization}
In addition to the optimization capabilities of Literati, we are interested in how well our approach performs on out-of-sample data points. For all methods, we tune hyperparameters using a 200-trial random search and select the trial with the best average validation accuracy across folds. To reflect practical training constraints, each model is given a 15-minute training budget per fold. Full details of each method's hyperparameter search space are provided in Appendix \ref{sec:hp}. Across datasets and folds, Table~\ref{tab:gen_table} reports average test accuracy, and Figure~\ref{fig:cdd} shows the corresponding critical difference diagram. Per dataset results are provided in Appendix~\ref{sec:per_dataset_results}. 

We observe that Literati achieves an average test accuracy 1.0\% higher than the second-place approach, ShapeTAO, and a statistically significantly higher average rank than all evaluated baselines.

\section{Conclusion}
We introduced Literati, the first algorithm for optimal Shape Generalized Tree induction. By formulating SGT learning as an AND/OR graph search that jointly optimizes tree structure and shape function complexity, Literati provides optimality guarantees while retaining anytime performance through two modifications to AO* selection: an inadmissible secondary heuristic at OR nodes and round-robin exploration at AND nodes. Across 24 benchmark datasets, Literati achieves superior performance over all evaluated optimal-tree and greedy SGT baselines, establishing heuristic search as a strong foundation for SGT induction. Future work includes exploring additional admissible and informative heuristics within our AO* framework. 
\paragraph{Limitations.} As a tree-based model, Literati, like other SGTs, inherits intrinsic constraints of traditional decision trees, most notably that they are designed primarily for tabular data. Additionally, like other optimal tree induction approaches, Literati will struggle to prove optimality on datasets with a large number of samples within short time budgets. Nevertheless, the strong anytime performance of Literati makes it a valuable and practical tool for data science workflows. 
\paragraph{Broader Impact.} As ML increasingly governs high-stakes decisions in healthcare~\cite{stiglic2020interpretability,5593711}, finance~\cite{zurada2010could,TANG2024102088}, and manufacturing~\cite{sungsu2017decision,HEYDARBAKIAN20222834}, the need for models that balance accuracy and interpretability is pressing. Literati contributes to this need by producing inherently interpretable trees that outperform standard tree-based baselines in our experiments.

\bibliographystyle{unsrtnat}
\bibliography{bibfile}
\newpage
\appendix
\renewcommand\thefigure{\thesection.\arabic{figure}}

\section{Theoretical Results}
\subsection{Validity of AND/OR formulation}\label{sec:ao_validity_proof}
Let $\mathcal{L}_D^*(\mathcal{D})$ denote the optimum of Equation~\ref{eq:objective}. We first establish that the recurrences in Equations~\ref{eq:node_full}--\ref{eqn:shaperecurrence} compute $\mathcal{L}_D^*(\mathcal{D})$ (Lemma~\ref{lem:recurrence_validity}), then show that the Bellman equation on $(\mathcal{S}, \ell, \mathcal{A}, F, c)$ is the exact instantiation of these recurrences (Lemma~\ref{lem:bellman_instantiation}). Composing the two yields $V^*(s_0) = \mathcal{L}_D^*(\mathcal{D})$, and the greedy policy $\pi^*$ traces out a corresponding optimal tree.

\begin{lemma}[Validity of the Recurrence]\label{lem:recurrence_validity}
The function $\mathcal{L}^*$ defined by Equations~\ref{eq:node_full}--\ref{eqn:shaperecurrence} satisfies $\mathcal{L}_D^*(\mathcal{D}) = \min_{T \in \mathcal{T}} \{\emph{\text{Err}}(T, \mathcal{D}) + \sum_{v \in \mathcal{I}(T)}[\lambda + \alpha \mathbf{K}(f_v)] : \mathbf{K}(f_v) \leq K\ \forall v,\ \mathbf{D}(T) \leq D \}$.
\end{lemma}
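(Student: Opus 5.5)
We argue by strong induction on the depth budget $d$, proving the statement for every subset $\mathcal{D}_v \subseteq \mathcal{D}$ at once. For $d \ge 0$, let $\mathrm{OPT}_d(\mathcal{D}_v)$ denote the right-hand side of the lemma with $\mathcal{D}$ replaced by $\mathcal{D}_v$ and $D$ by $d$. The claim is $\mathcal{L}_d^*(\mathcal{D}_v) = \mathrm{OPT}_d(\mathcal{D}_v)$, and the lemma is the case $d=D$, $\mathcal{D}_v=\mathcal{D}$.

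\textbf{Base cases.} If $d=0$, the only feasible tree is a single leaf predicting the majority class, so $\mathrm{OPT}_0 = \text{Err}(\mathcal{D}_v) = \mathcal{L}_0^*$. If $\text{Err}(\mathcal{D}_v)=0$, the recursion stops at cost $0$. This equals the optimum, since every term in the objective is nonnegative.

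\textbf{Step 1: the shape recurrence enumerates all admissible change-point sets.} For fixed $j$, I first show by backward induction on $|\mathbf{k}|$ (from $K$ down to $1$) that
\[
\mathcal{L}_d^*(\mathcal{D}_v,j,\mathbf{k}) = \min_{\mathbf{k}'} \Bigl[\alpha(|\mathbf{k}'|-|\mathbf{k}|) + \mathcal{L}_{d-1}^*(\mathcal{D}_L(f_j^{\mathbf{k}'})) + \mathcal{L}_{d-1}^*(\mathcal{D}_R(f_j^{\mathbf{k}'}))\Bigr],
\]
where $\mathbf{k}'$ ranges over all sets with $\mathbf{k} \subseteq \mathbf{k}'$, $|\mathbf{k}'|\le K$, and every element of $\mathbf{k}'\setminus\mathbf{k}$ exceeding $\max(\mathbf{k})$. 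The canonical increasing-order construction reaches each such $\mathbf{k}'$ along exactly one chain of refinements, and each refinement adds $\alpha$. Substituting into Equation~\ref{eq:node_full} gives
\[
\min_{j}\min_{\emptyset\ne\mathbf{k}\subseteq\mathcal{U}_v^{(j)},\,|\mathbf{k}|\le K} \Bigl[\lambda+\alpha|\mathbf{k}| + \mathcal{L}_{d-1}^*(\mathcal{D}_L) + \mathcal{L}_{d-1}^*(\mathcal{D}_R)\Bigr].
\]

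\textbf{Step 2: change-point sets correspond to shape functions with matching complexity.} This is the main obstacle, because the correspondence is not literally one-to-one. I fix the convention that $f_j^{\mathbf{k}}$ starts at $0$ (left) and flips at each listed point. Then $\mathbf{K}(f_j^{\mathbf{k}})=|\mathbf{k}|$, since each change point lies on a unique value and produces exactly one discontinuity between consecutive sorted unique values.

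Conversely, take any $f_v$ with $1 \le \mathbf{K}(f_v)\le K$. On the data it is determined by the indices at which it flips between consecutive unique values. These indices give a set $\mathbf{k}$ with $|\mathbf{k}|=\mathbf{K}(f_v)$, and either $f_j^{\mathbf{k}}$ or its complement induces the same partition. A complement only swaps $\mathcal{D}_L$ and $\mathcal{D}_R$, which leaves the sum $\mathcal{L}_{d-1}^*(\mathcal{D}_L)+\mathcal{L}_{d-1}^*(\mathcal{D}_R)$ unchanged.

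Shape functions with $\mathbf{K}(f_v)=0$ send all data to one child. Such a split costs $\lambda$ plus $\mathcal{L}_{d-1}^*(\mathcal{D}_v)$, which is at least $\text{Err}(\mathcal{D}_v)$ by a short monotonicity argument: any tree has error at least the subset's leaf error minus its regularization. So these splits are dominated by the leaf option. Hence Step 1's expression equals the inner minimum of Equation~\ref{eq:node_problem}, restricted to nondegenerate splits without loss.

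\textbf{Step 3: the recursion matches the tree optimum.} Any feasible tree of depth at most $d$ is either a leaf with cost $\text{Err}(\mathcal{D}_v)$, or a root split $f_v$ with two subtrees of depth at most $d-1$. The objective is additive: error and internal-node penalties each split across the two subtrees, plus $\lambda+\alpha\mathbf{K}(f_v)$ for the root. Minimizing the subtrees independently is therefore valid, and by the induction hypothesis each subtree minimum equals $\mathcal{L}_{d-1}^*$ of its partition. This gives $\mathrm{OPT}_d = \mathcal{L}_d^*$ in both directions. For $\le$, build a tree from the minimizers. For $\ge$, decompose any feasible tree as above.
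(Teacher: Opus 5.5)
Your overall route is the paper's: an additive, hereditary decomposition into Equation~\ref{eq:node_problem}, the split--refine--commit chain charging $\lambda+\alpha|\mathbf{k}|$, the complement symmetry, and the exclusion of constant shape functions. Your explicit backward induction on $|\mathbf{k}|$ in Step~1 also makes rigorous what the paper only asserts. The one step that fails is how you dispose of constant shape functions. You claim $\lambda+\mathcal{L}_{d-1}^*(\mathcal{D}_v)\geq \text{Err}(\mathcal{D}_v)$, i.e.\ that no tree's error plus regularization falls below the leaf error, so the leaf dominates. This is false whenever splitting pays off. Take $50$ points of class $0$ at $x_1=0$, $50$ of class $1$ at $x_1=1$, $\lambda=\alpha=1$ and $d=2$. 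Then $\text{Err}(\mathcal{D}_v)=50$ but $\mathcal{L}_1^*(\mathcal{D}_v)=\lambda+\alpha=2$, so a constant split at the root costs $3<50$. Consequently the lower bound $\mathrm{OPT}_d\geq\mathcal{L}_d^*$ in Step~3 has no valid argument for trees whose root carries a constant shape function, and this step is invoked at every level of your induction on $d$.

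The correct comparison, which is the one the paper uses, is against the best shallower tree rather than the leaf: $\lambda+\mathcal{L}_{d-1}^*(\mathcal{D}_v)\geq\mathcal{L}_{d-1}^*(\mathcal{D}_v)\geq\mathcal{L}_d^*(\mathcal{D}_v)$. The second inequality is monotonicity of the recurrence in the depth budget. Prove it from Equations~\ref{eq:node_full}--\ref{eqn:shaperecurrence} by a short induction on $d$: the leaf option is the same at both budgets, and each split option at budget $d-1$ reappears at budget $d$ with commit values $\mathcal{L}^*_{d-1}\leq\mathcal{L}^*_{d-2}$ on the same children. Do not read it off from $\mathrm{OPT}_{d-1}\geq\mathrm{OPT}_d$, which would presuppose $\mathrm{OPT}_d=\mathcal{L}^*_d$.

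Equivalently, you can use tree surgery. Contract each internal node with a constant shape function into the subtree of its nonempty child. This lowers the objective by at least $\lambda\geq 0$ plus the nonnegative cost of the discarded subtree, does not increase depth, and keeps $\mathbf{K}(f_v)\leq K$ everywhere. So $\mathrm{OPT}_d$ is attained by trees without degenerate splits, and Step~3 only has to decompose those.

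A smaller point: a change point at an extreme unique value produces no discontinuity on the data, so in general only $\mathbf{K}(f_j^{\mathbf{k}})\leq|\mathbf{k}|$ holds. This is harmless, since the recurrence then merely overcharges, and the converse map from $f_v$ to $\mathbf{k}$ can always pick change points that each produce a discontinuity. Still, state the inequality rather than equality.
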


\begin{proof}
The regularized loss is additive across the subtrees of an internal node, and both feasibility constraints are hereditary: every subtree of a feasible tree is itself feasible. Specifically, $\mathbf{K}(f_v) \leq K$ is local to node $v$, and the depth budget at depth $d$ is at most $D - d$, which is exactly what the recurrence's depth argument tracks. Since an optimal tree must be composed of optimal subtrees, the minimization over trees decomposes into nested minimizations over subtrees, reducing Equation~\ref{eq:objective} to Equation~\ref{eq:node_problem}.

We now show that the incremental recurrence in Equations~\ref{eq:node_full}--\ref{eqn:shaperecurrence} enumerates the same feasible shape functions as the inner minimization of Equation~\ref{eq:node_problem} at identical cost. 
Every feasible shape function for node $v$ on feature $j$ is determined by its sorted change-point set $\mathbf{k} \subseteq \mathcal{U}_v^{(j)}, |\mathbf{k}| \leq K$ along with a choice of which child each interval routes to. Given $j$ and $\mathbf{k}$, there are exactly two such functions, $f_{\mathbf{k}}$ and $\bar f_{\mathbf{k}}$, which partition $\mathcal{D}_v$ into the same two groups but route them to opposite children. We adopt the convention $f_{\mathbf{k}}(x) = 0$ for $x < \min(\mathbf{k})$, and enumerate only $f_{\mathbf{k}}$. This loses no optimal trees: $f_{\mathbf{k}}$ and $\bar f_{\mathbf{k}}$ produce the same two data subsets up to a swap of $\mathcal{D}_L$ and $\mathcal{D}_R$, and the recurrence $\mathcal{L}_{d-1}^*(\mathcal{D}_L) + \mathcal{L}_{d-1}^*(\mathcal{D}_R)$ is symmetric in its two arguments, so both achieve identical cost. A set of changepoints $\mathbf{k} = \{u_1,u_2,\ldots,u_m\}$ is constructed via the sequence $\text{split}(j, u_1) \to \text{refine}(u_2) \to \cdots \to \text{refine}(u_m) \to \text{commit}$, accumulating cost $(\lambda + \alpha) + (m-1)\alpha + 0 = \lambda + \alpha m$, which matches $\lambda + \alpha \mathbf{K}(f_v)$. Given $K\geq 1$, constant shape functions ($\mathbf{K}(f_j^\mathbf{k}) = 0$) are not enumerated. This is without loss of optimality, since a constant $f_v$ sends all of $\mathcal{D}_v$ to one child and yields a degenerate split with cost $\lambda + \mathcal{L}_{d-1}^*(\mathcal{D}_v) \geq \mathcal{L}_d^*(\mathcal{D}_v)$, which is dominated by either the leaf action or any nontrivial split. The ordering constraint $u > \max(\mathbf{k})$ on refine does not exclude any shape function; it only prevents symmetric shape functions. Hence the minimization in Equations~\ref{eq:node_full}--\ref{eqn:shaperecurrence} corresponds to the minimization over feasible $f_v$ in Equation~\ref{eq:node_problem}, with identical costs. \qedhere
\end{proof}

\begin{lemma}[AND/OR Graph Translation]\label{lem:bellman_instantiation}
For every data state $(\mathcal{D}_v, d) \in \mathcal{S}$ and shape function state $(\mathcal{D}_v, d, j, \mathbf{k}) \in \mathcal{S}$,
\begin{gather}
    V^*(\mathcal{D}_v, d) = \mathcal{L}_d^*(\mathcal{D}_v), \qquad V^*(\mathcal{D}_v, d, j, \mathbf{k}) = \mathcal{L}_d^*(\mathcal{D}_v, j, \mathbf{k}).
\end{gather}
\end{lemma}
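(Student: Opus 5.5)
We will prove both identities simultaneously by strong induction on a well-founded measure over states, showing that the Bellman equations in Equation~\ref{eq:bellman} and the recurrences in Equations~\ref{eq:node_full}--\ref{eqn:shaperecurrence} are term-by-term identical.

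\textbf{Induction measure.} Every action in the AND/OR graph strictly decreases the lexicographic pair $(d,\; K - |\mathbf{k}|)$, where we assign $|\mathbf{k}| = -1$ to data states so that a split counts as a step down:
\begin{itemize}[noitemsep,leftmargin=1.5em]
\item split moves from $(\mathcal{D}_v,d)$ to $(\mathcal{D}_v,d,j,\{u\})$, keeping $d$ and increasing $|\mathbf{k}|$;
\item refine keeps $d$ and increases $|\mathbf{k}|$ by one, bounded by $K$;
\item commit decreases $d$ by one.
\end{itemize}
So the graph is acyclic, $V^*$ is well defined, and we may induct on $(d, K-|\mathbf{k}|)$ in lexicographic order.

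\textbf{Base cases.} If $d=0$ or $\mathcal{D}_v$ is pure, the data state is terminal. Then $V^* = \text{Err}(\mathcal{D}_v)$, which matches $\mathcal{L}_0^*(\mathcal{D}_v)$ and the pure-node termination of the recurrence. The reading of $V^*$ on a nonterminal data state needs one convention: the leaf action leads to the terminal copy $\overline{(\mathcal{D}_v,d)}$ with value $\text{Err}(\mathcal{D}_v)$ and cost $0$.

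\textbf{Inductive step, shape function states.} For $(\mathcal{D}_v,d,j,\mathbf{k})$:
\begin{itemize}[noitemsep,leftmargin=1.5em]
\item The commit $Q^*$-value is $0 + V^*(\mathcal{D}_L,d-1) + V^*(\mathcal{D}_R,d-1)$. Both successors have smaller $d$, so by the induction hypothesis this equals $\mathcal{L}_{d-1}^*(\mathcal{D}_L) + \mathcal{L}_{d-1}^*(\mathcal{D}_R)$. This holds whether the successors are terminal or not, by the base case.
\item Each refine $Q^*$-value is $\alpha + V^*(\mathcal{D}_v,d,j,\mathbf{k}\cup\{u\})$, over exactly the index set $u>\max(\mathbf{k})$, $|\mathbf{k}\cup\{u\}|\le K$. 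These successors have smaller $K-|\mathbf{k}|$ at the same $d$, so the induction hypothesis gives $\alpha + \mathcal{L}_d^*(\mathcal{D}_v,j,\mathbf{k}\cup\{u\})$.
\end{itemize}
Taking the minimum over actions reproduces Equation~\ref{eqn:shaperecurrence}. When $|\mathbf{k}|=K$ the refine set is empty and only commit remains, consistent with the inner minimum over an empty set being $+\infty$.

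\textbf{Inductive step, data states.} For a nonterminal $(\mathcal{D}_v,d)$, the leaf action gives $\text{Err}(\mathcal{D}_v)$. Each split $a_{j,u}$ gives $\lambda+\alpha+V^*(\mathcal{D}_v,d,j,\{u\})$. By the shape-state case at the same $d$, which is already established because shape states sit earlier in the measure, this is $\lambda+\alpha+\mathcal{L}_d^*(\mathcal{D}_v,j,\{u\})$. Minimizing over $j$ and $u\in\mathcal{U}_v^{(j)}$ gives Equation~\ref{eq:node_full}.

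\textbf{Main obstacle.} The delicate point is the bookkeeping, not any inequality:
\begin{itemize}[noitemsep,leftmargin=1.5em]
\item The ordering must be chosen so that data states at depth $d$ come after all shape states at depth $d$, which in turn come after everything at depth $d-1$.
\item Terminal states are treated as distinct from data states via the leaf action, so a data state that is terminal by default must still agree with $\mathcal{L}^*$.
\item The action sets $\mathcal{A}(s)$ must coincide exactly with the index sets of the recurrence minima, including the full $\mathcal{U}^{(j)}_v$ (not the restricted candidate sets) and the same ordering and cardinality constraints.
\end{itemize}
I would state these correspondences explicitly before running the induction.
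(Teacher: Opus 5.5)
Your proposal is correct and follows the same route as the paper. Both arguments match the leaf/split $Q^*$-values against Equation~\ref{eq:node_full} and the commit/refine $Q^*$-values against Equation~\ref{eqn:shaperecurrence}, using the same terminal base cases; the only difference is that you spell out the well-founded induction on the lexicographic measure $(d, K-|\mathbf{k}|)$, which the paper compresses into the remark that both functions satisfy identical recurrences on the same finite domain.
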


\begin{proof}
We show that $V^*$ satisfies the same recurrence as $\mathcal{L}^*$ over the same domain with the same base cases. 

\emph{Terminal states.} For $\overline{(\mathcal{D}_s, d)} \in \ell$, Equation~\ref{eq:bellman} gives $V^*(s) = \text{Err}(\mathcal{D}_v)$, matching the base case of Equation~\ref{eq:node_full}. This covers both terminal conditions: depth exhaustion ($d = 0$) and label purity ($\text{Err}(\mathcal{D}_s) = 0$). For pure nodes the equality reads $V^*(s) = 0 = \mathcal{L}_d^*(\mathcal{D}_s)$, which agrees with the base case of Equation~\ref{eq:node_full} since $\text{Err}(\mathcal{D}_s) = 0$ dominates any split cost $\lambda + \alpha \geq 0$.

\emph{Data states.} For a non-terminal data state $s = (\mathcal{D}_s, d)$, the leaf action gives $Q^*(\bar a, s) = c(s, \bar a) + V^*(\overline{(\mathcal{D}_s, d)}) = 0 + \text{Err}(\mathcal{D}_s)$. Each split action gives $Q^*(a_{j,u}, s) = c(s, a_{j,u}) + V^*(\mathcal{D}_s, d, j, \{u\}) = \lambda + \alpha + V^*(\mathcal{D}_s, d, j, \{u\})$. Equation~\ref{eq:bellman} then yields
\begin{gather}
    V^*(\mathcal{D}_s, d) = \min\!\left(\text{Err}(\mathcal{D}_s),\; \min_{j \in \{1,\ldots,M\}}\min_{u \in \mathcal{U}_s^{(j)}} \lambda + \alpha + V^*(\mathcal{D}_s, d, j, \{u\})\right),
\end{gather}
which is Equation~\ref{eq:node_full}.

\emph{Shape function states.} For $s = (\mathcal{D}_s, d, j, \mathbf{k})$, the commit action transitions to the AND pair $((\mathcal{D}_L(f_j^{\mathbf{k}}), d-1), (\mathcal{D}_R(f_j^{\mathbf{k}}), d-1))$, and AND dynamics sum the children, giving $Q^*(a_{\text{commit}}, s) = 0 + V^*(\mathcal{D}_L(f_{\mathbf{k}}), d-1) + V^*(\mathcal{D}_R(f_{\mathbf{k}}), d-1)$. Each refine action gives $Q^*(a_{j,\mathbf{k},u}, s) = \alpha + V^*(\mathcal{D}_s, d, j, \mathbf{k} \cup \{u\})$. Equation~\ref{eq:bellman} yields
\begin{gather}
\resizebox{0.9\columnwidth}{!}{$\displaystyle
    V^*(\mathcal{D}_s, d, j, \mathbf{k}) = \min\!\Bigg( V^*(\mathcal{D}_L(f_{\mathbf{k}}), d-1) + V^*(\mathcal{D}_R(f_{\mathbf{k}}), d-1),\; \alpha + \min_{\substack{u \in \mathcal{U}_v^{(j)},\, u > \max(\mathbf{k}),\\ |\mathbf{k} \cup \{u\}| \leq K}} V^*(\mathcal{D}_v, d, j, \mathbf{k} \cup \{u\})\Bigg),
$}
\end{gather}
which is Equation~\ref{eqn:shaperecurrence}.

Hence $V^*$ and $\mathcal{L}^*$ satisfy identical recurrences with identical base cases, and they are well defined on the same finite domain (terminated by the depth budget and purity conditions). It follows that $V^* = \mathcal{L}^*$ pointwise on both state types. \qedhere
\end{proof}

\paragraph{Proof of Proposition~\ref{prop:validity}.}
At the root $s_0 = (\mathcal{D}, D)$, Lemma~\ref{lem:bellman_instantiation} gives $V^*(s_0) = \mathcal{L}_D^*(\mathcal{D})$, and Lemma~\ref{lem:recurrence_validity} gives that $\mathcal{L}_D^*(\mathcal{D})$ equals the minimum of Equation~\ref{eq:objective}. Hence $V^*(s_0)$ equals the optimum of the sparse SGT induction problem. As a result, the optimal tree $T^*$ can be obtained by following $\pi^*$ from the root node $s_0$ until terminal node. More specifically, at each reachable data state $s$, if $\pi^*(s) = \bar a$, the corresponding node in $T^*$ is a leaf labeled by the majority class of $\mathcal{D}_s$; if $\pi^*(s) = a_{j,u}$, the node is internal with feature $j$, and $\pi^*$ is followed through the chain of shape function states until $\pi^*$ selects commit at some change-point set $\mathbf{k}^*(s)$; the branching rule at the node is then $f_{\mathbf{k}^*(s)}$, and the construction recurses on the two child data states. The extracted $T^*$ is feasible for Equation~\ref{eq:objective} by construction: the depth constraint $\mathbf{D}(T^*) \leq D$ holds because $d$ strictly decreases by one on every commit transition and $d = D$ at $s_0$, so no path from the root to a leaf in $T^*$ exceeds depth $D$; the complexity constraint $\mathbf{K}(f_{\mathbf{k}^*(s)}) \leq K$ holds because the refine action is guarded by $|\mathbf{k}| \leq K$, so $|\mathbf{k}^*(s)| \leq K$ at every internal node. Hence $T^*$ is optimal. \qed

\subsection{Admissibility of Heuristic}\label{sec:admissibility}

We prove by induction on the iterations of AO* that $V(s) \leq V^*(s)$ for every state $s$ in the explicit search graph. We first establish that the initial heuristic values satisfy admissibility (Lemma~\ref{lem:init_admissible}), then show that the bottom-up updates in Equation~\ref{eq:estimate_update} preserve admissibility (Lemma~\ref{lem:update_admissible}). The proposition then follows by induction, and the corollary on the valid inequality is immediate.

\begin{lemma}[Initial Admissibility]\label{lem:init_admissible}
The initialization $V(s) = V(\mathcal{D}_s, d) = \min(\text{Err}(\mathcal{D}_s),\, \lambda + \alpha)$ at non-terminal data states and $V(s) = V(\mathcal{D}_s, d, j, \mathbf{k}) = \min(\alpha, \min(\text{Err}(\mathcal{D}_R(f_j^{\mathbf{k}})), \lambda + \alpha) + \min(\text{Err}(\mathcal{D}_L(f_j^{\mathbf{k}}), \lambda + \alpha))$ at shape function states satisfies $V(s) \leq V^*(s)$.
\end{lemma}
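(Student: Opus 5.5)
We verify the two initializations separately, using the Bellman characterization in Equation~\ref{eq:bellman} together with Lemma~\ref{lem:bellman_instantiation}, which identifies $V^*$ with $\mathcal{L}^*$. The common tool is a \emph{pointwise lower bound for data states}: for any data state $(\mathcal{D}_s,d)$, terminal or not,
\[
V^*(\mathcal{D}_s,d)\ \ge\ \min\bigl(\text{Err}(\mathcal{D}_s),\,\lambda+\alpha\bigr).
\]
To see this, note that $V^*(\mathcal{D}_s,d)$ is the minimum over the leaf action and the split actions. The leaf action costs exactly $\text{Err}(\mathcal{D}_s)$. Any split action costs $\lambda+\alpha+V^*(\mathcal{D}_s,d,j,\{u\})\ge\lambda+\alpha$, because every $V^*$ value is nonnegative. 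Nonnegativity follows from $c\ge 0$, $\text{Err}\ge 0$, and the finite acyclic graph: an induction from terminal states upward shows $V^*\ge 0$. If the state is terminal (depth $0$ or pure), then $V^*=\text{Err}(\mathcal{D}_s)\ge\min(\text{Err}(\mathcal{D}_s),\lambda+\alpha)$ trivially. This bound already gives the data-state half of the lemma.

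For a shape function state $s=(\mathcal{D}_s,d,j,\mathbf{k})$, we split on the action taken at the optimum. If the optimal action is a refine action, then $V^*(s)=\alpha+V^*(\cdot)\ge\alpha$, again by nonnegativity. If the optimal action is commit, then
\[
V^*(s)=V^*\bigl(\mathcal{D}_L(f_j^{\mathbf{k}}),d-1\bigr)+V^*\bigl(\mathcal{D}_R(f_j^{\mathbf{k}}),d-1\bigr).
\]
Applying the data-state bound to each child gives
\[
V^*(s)\ \ge\ \min\bigl(\text{Err}(\mathcal{D}_L),\lambda+\alpha\bigr)+\min\bigl(\text{Err}(\mathcal{D}_R),\lambda+\alpha\bigr).
\]
Both cases dominate the minimum of $\alpha$ and this sum, which is exactly the initialization. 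So the inequality holds whichever action is optimal, and it holds even when the refine set is empty (for instance, when $|\mathbf{k}|=K$).

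The main subtlety is the children of a commit. They may be terminal because $d-1=0$ or because they are pure, and that is why the data-state bound must be proved for terminal states as well. The pure case is fine since $\text{Err}=0$ there. A second point concerns the restricted action sets of Equations~\ref{eq:restricted_split}--\ref{eq:restricted_refine}. Under restriction, $V^*$ is the optimum over the restricted graph. The argument above uses only nonnegativity of costs and the value of the leaf action, so it holds verbatim whether $V^*$ is taken over the full action sets or the restricted ones.
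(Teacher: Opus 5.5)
Your proposal is correct and follows the same route as the paper. Both establish nonnegativity of $V^*$, then the leaf-versus-split bound $V^*(\mathcal{D}_s,d)\ge\min(\text{Err}(\mathcal{D}_s),\lambda+\alpha)$ at data states, and then at shape function states bound the commit branch by the children's data-state bounds and the refine branch by $\alpha$. Your extension of the data-state bound to terminal commit children, and your remark on restricted action sets, make explicit what the paper's looser shape-state step leaves implicit; that step writes $V$ where $V^*$ is meant.
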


\begin{proof}
All transition costs are non-negative ($\lambda, \alpha \geq 0$ and commit and leaf costs are $0$), and the terminal values $\text{Err}(\mathcal{D}_s) \geq 0$. Unrolling the Bellman recursion in Equation~\ref{eq:bellman}, $V^*(s)$ is a sum of non-negative quantities, so $V^*(s) \geq 0$ for every state $s$. For a terminal state, by definition $V^*(s) = V(s)$. 

For a non-terminal data state $s = (\mathcal{D}_s, d)$, Equation~\ref{eq:bellman} gives
\begin{gather}
    V^*(s) \;=\; \min\!\left(Q^*(\bar a, s),\; \min_{j \in \{1,\dots,M\},\, u \in \mathcal{U}^{(j)}_s} Q^*(a_{j,u}, s)\right).
\end{gather}
The leaf action transitions to the terminal state $\overline{(\mathcal{D}_s, d)}$ at zero cost, so $Q^*(\bar a, s) = \text{Err}(\mathcal{D}_s)$. Each split action incurs cost $\lambda + \alpha$ and transitions to a shape function state, so $Q^*(a_{j,u}, s) = \lambda + \alpha + V^*(\mathcal{D}_s, d, j, \{u\}) \geq \lambda + \alpha$. Therefore
\begin{gather}
    V^*(s) \;\geq\; \min\!\left(\text{Err}(\mathcal{D}_s),\, \lambda + \alpha\right) \;=\; V(s). \qedhere
\end{gather}

For a shape function state $(\mathcal{D}_s, d, j, \mathbf{k})$, we can either commit $f_j^\mathbf{k}$ to obtain $(D_L(f_j^\mathbf{k}), d-1)$ and $(D_R(f_j^\mathbf{k}), d-1)$, or we can refine and pay a cost of $\alpha$. The admissible heuristic for a shape function state $V(s)$ is simply equal to:  
\begin{gather}
    \min \left(V(D_L(f_j^\mathbf{k}), d-1) + V(D_R(f_j^\mathbf{k}), d-1), \alpha + \min_{u > \max \mathbf{k},\mathbf{k} \cup \{u \} \leq K } Q^*(s, a_{k,\mathbf{k},u})\right) \\
    \alpha + \min_{u > \max \mathbf{k},\mathbf{k} \cup \{u \} \leq K } Q^*(s, a_{k,\mathbf{k},u}) \geq \alpha
\end{gather}
The first argument is equivalent to the sum of the admissible bounds of the commit children. The second argument is lower bounded by $\alpha$. Therefore,  
\begin{gather}
    V^*(s) \geq \min(\alpha, \min(\text{Err}(\mathcal{D}_R(f_j^{\mathbf{k}})), \lambda + \alpha) + \min(\text{Err}(\mathcal{D}_L(f_j^{\mathbf{k}}), \lambda + \alpha)) = V(s)
\end{gather}
\qed
\end{proof}
\begin{lemma}[Update Admissibility]\label{lem:update_admissible}
Fix a non-terminal state $\widetilde{s}$ and suppose $V(s') \leq V^*(s')$ for every $s' \in F(a, \widetilde{s})$ and every $a \in \mathcal{A}(\widetilde{s})$. Then the update in Equation~\ref{eq:estimate_update} yields $V(\widetilde{s}) \leq V^*(\widetilde{s})$.
\end{lemma}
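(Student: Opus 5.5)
The argument is a direct comparison of the two minimizations, action by action, using monotonicity of the update rule. Fix $\widetilde{s}$ and any action $a \in \mathcal{A}(\widetilde{s})$. The update in Equation~\ref{eq:estimate_update} sets $Q(\widetilde{s},a) = c(a,\widetilde{s}) + \sum_{s' \in F(a,\widetilde{s})} V(s')$. The Bellman equation~\ref{eq:bellman} gives $Q^*(a,\widetilde{s}) = c(a,\widetilde{s}) + \sum_{s' \in F(a,\widetilde{s})} V^*(s')$. The action cost $c(a,\widetilde{s})$ is identical in both expressions. By hypothesis $V(s') \leq V^*(s')$ for every successor, so summing these termwise gives $Q(\widetilde{s},a) \leq Q^*(a,\widetilde{s})$. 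This holds for every $a$.

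Next I take minima over $a \in \mathcal{A}(\widetilde{s})$. The key point is that both minima range over the same action set. If $a^\ast$ attains the minimum in the Bellman equation, then
\begin{equation*}
V(\widetilde{s}) = \min_a Q(\widetilde{s},a) \leq Q(\widetilde{s},a^\ast) \leq Q^*(a^\ast,\widetilde{s}) = V^*(\widetilde{s}).
\end{equation*}
The action sets are finite, since there are finitely many features, finitely many candidate change points and the bound $K$, so the minimum $a^\ast$ exists.

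The only delicate issue is what ``$\mathcal{A}(\widetilde{s})$'' means at update time. It could be shrunk by the pruning rule $Q(s,a) > \overline{V}(s)$, or by the restricted action sets~\ref{eq:restricted_split}--\ref{eq:restricted_refine}.

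\begin{itemize}
\item \emph{Restricted action sets.} If $V^*$ is taken over the restricted graph, the action sets coincide and there is nothing to check.
\item \emph{Pruning.} A pruned action satisfies $Q^*(a) \geq Q(a) > \overline{V}(\widetilde{s}) \geq V^*(\widetilde{s})$, because the primal bound is the cost of a feasible policy. Such an action therefore cannot be the unique minimizer $a^\ast$. Hence the remaining set still contains an optimal action, and the chain above goes through.
\end{itemize}

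Combined with Lemma~\ref{lem:init_admissible} as the base case, this lemma gives the induction step for Proposition~\ref{prop:admissibility}.
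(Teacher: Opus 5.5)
Your proof is correct and follows the same route as the paper's: the hypothesis gives $Q(\widetilde{s},a)\le Q^*(a,\widetilde{s})$ termwise for every action, and monotonicity of the minimum over the common action set $\mathcal{A}(\widetilde{s})$ yields $V(\widetilde{s})\le V^*(\widetilde{s})$. Your remarks on restricted and pruned action sets go beyond the paper's one-line proof, which simply treats $\mathcal{A}(\widetilde{s})$ as the same set on both sides, and they are a sound addition; state the pruning point as follows: the strict inequality $Q^*(a,\widetilde{s})>V^*(\widetilde{s})$ means a pruned action is not a minimizer at all (not merely not the unique one), and that is what guarantees an optimal action survives.
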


\begin{proof}
By the update and the hypothesis,
\begin{gather}
    V(\widetilde{s}) \;=\; \min_{a \in \mathcal{A}(\widetilde{s})} \left[\, c(a, \widetilde{s}) + \sum_{s' \in F(a, \widetilde{s})} V(s') \,\right] \;\leq\; \min_{a \in \mathcal{A}(\widetilde{s})} \left[\, c(a, \widetilde{s}) + \sum_{s' \in F(a, \widetilde{s})} V^*(s') \,\right] \;=\; V^*(\widetilde{s}).\qedhere
\end{gather}
\end{proof}

\paragraph{Proof of Proposition~\ref{prop:admissibility}.}
We proceed by induction on the iteration count of AO*. At initialization, the search graph contains only the root $s_0 = (\mathcal{D}, D)$, which satisfies $V(s_0) \leq V^*(s_0)$ by Lemma~\ref{lem:init_admissible}.

Assume $V(s) \leq V^*(s)$ for every state $s$ in the partial search graph $G$ at a start of an iteration. During an iteration, AO* expands some non-terminal state $s$ and adds the successors of every action $a \in \mathcal{A}(s)$ to $G$ with their initial heuristic values; these satisfy admissibility by Lemma~\ref{lem:init_admissible}. AO* then updates $V$ for $s$ and its ancestors in bottom-up order via Equation~\ref{eq:estimate_update}. At the time each update is performed, every successor referenced in the update is admissible: either it was admissible before iteration $t$ (inductive hypothesis), it was just added with an admissible initial value (Lemma~\ref{lem:init_admissible}), or it was itself updated earlier in the bottom-up sweep and is admissible by Lemma~\ref{lem:update_admissible}. A further application of Lemma~\ref{lem:update_admissible} then gives admissibility at $\widetilde{s}$. Iterating over all ancestors preserves admissibility throughout $G$. \qed

\paragraph{Proof of Corollary~\ref{cor:valid_inequality}.}
Suppose $\text{Err}(\mathcal{D}_s) \leq \lambda + \alpha$ at a data state $s = (\mathcal{D}_s, d)$. For every split action,
\begin{gather}
    Q^*(a_{j,u}, s) \;=\; \lambda + \alpha + V^*(\mathcal{D}_s, d, j, \{u\}) \;\geq\; \lambda + \alpha \;\geq\; \text{Err}(\mathcal{D}_s) \;=\; Q^*(\bar a, s),
\end{gather}
so the leaf action is optimal and $V^*(s) = \text{Err}(\mathcal{D}_s)$.  \qed


\subsection{Lower Bound on Training Loss}\label{sec:cand_gen}

\begin{proposition}[Dominance over DPDT]\label{prop:dominance}
Suppose Literati and DPDT \citep{kohler2025breiman} are applied to the same dataset $\mathcal{D}$ under the following matched conditions: (i) identical depth budget $D$; (ii) DPDT uses the per-internal-node complexity penalty $C(T) = |\mathcal{I}(T)|$ with regularization weight $\alpha_{\mathrm{DPDT}} = \lambda + \alpha$; and (iii) both methods use identical CART-based adaptive change-point candidate sets at every data state \cite{kohler2025breiman} with identical $B_d$ schedules. Let $\mathcal{L}^*_{\mathrm{Liter.}}$ and $\mathcal{L}^*_{\mathrm{DPDT}}$ denote the optimal objective values attained by Literati and DPDT, respectively. Then $\mathcal{L}^*_{\mathrm{Liter.}} \leq \mathcal{L}^*_{\mathrm{DPDT}},$ with equality whenever $K = 1$. 
\end{proposition}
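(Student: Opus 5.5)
The plan is to compare the two dynamic programs state by state. Both methods range over data states $(\mathcal{D}_s,d)$ reached from the same root, and by condition (iii) they use the same candidate sets $U_s^{(j)}$ at every such state. First I will write DPDT's recurrence under conditions (i)--(ii):
\[
\mathcal{L}^{\mathrm{DPDT}}_d(\mathcal{D}_s)=\min\Bigl(\text{Err}(\mathcal{D}_s),\ \min_{j,\,u\in U_s^{(j)}} \lambda+\alpha+\mathcal{L}^{\mathrm{DPDT}}_{d-1}(\mathcal{D}_L(f_j^{\{u\}}))+\mathcal{L}^{\mathrm{DPDT}}_{d-1}(\mathcal{D}_R(f_j^{\{u\}}))\Bigr),
\]
with base case $\mathcal{L}^{\mathrm{DPDT}}_0=\text{Err}$. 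On the Literati side, Lemma~\ref{lem:bellman_instantiation} applies verbatim to the restricted action sets in Equations~\ref{eq:restricted_split}--\ref{eq:restricted_refine}. So the optimal value $V^*$ of Literati's restricted AND/OR graph satisfies Equations~\ref{eq:node_full}--\ref{eqn:shaperecurrence} with $\mathcal{U}$ replaced by $U$.

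Next I will prove $V^*(\mathcal{D}_s,d)\le \mathcal{L}^{\mathrm{DPDT}}_d(\mathcal{D}_s)$ for every $d$ and every data subset, by induction on $d$. At $d=0$ and at pure nodes both values equal $\text{Err}(\mathcal{D}_s)$. For the inductive step, take any split action $a_{j,u}$. In Equation~\ref{eqn:shaperecurrence} the commit branch gives
\[
V^*(\mathcal{D}_s,d,j,\{u\})\le V^*(\mathcal{D}_L,d-1)+V^*(\mathcal{D}_R,d-1),
\]
and the induction hypothesis bounds the right-hand side by the corresponding DPDT values. Adding $\lambda+\alpha$ shows that every term in DPDT's minimum is matched or beaten by a term in Literati's minimum. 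The leaf terms coincide. Taking minima gives the inequality, and at the root it yields $\mathcal{L}^*_{\mathrm{Liter.}}\le\mathcal{L}^*_{\mathrm{DPDT}}$.

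For equality when $K=1$, the constraint $|\mathbf{k}\cup\{u\}|\le 1$ removes every refine action. Each shape-function state $(\mathcal{D}_s,d,j,\{u\})$ then has only the commit action, so the commit bound above holds with equality. Literati's recurrence becomes literally DPDT's recurrence, and the same induction gives equality.

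The main obstacle is making sure the two methods really range over the same collection of subproblems. DPDT's candidate thresholds at a child are recomputed from the CART fit on that child's data, so they depend only on $(\mathcal{D}_s,d)$ through $B_d$. I will state this explicitly, following condition (iii): candidate sets are a function of the data state alone. Then both recursions are well defined on the same domain, and child states produced by Literati's multi-changepoint splits, which DPDT never visits, cause no trouble. Those children only enlarge Literati's minimum, which helps the inequality.

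A second point to check is the encoding of a threshold split. DPDT's split $x_j\le\theta$ must coincide with $f_j^{\{u\}}$ up to a left/right swap. The symmetry argument in Lemma~\ref{lem:recurrence_validity} covers this, since the sum $\mathcal{L}(\mathcal{D}_L)+\mathcal{L}(\mathcal{D}_R)$ is symmetric in its two arguments.
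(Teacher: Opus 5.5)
Your proof is correct, but it takes a different route from the paper's.

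\textbf{The paper's route.} The paper argues globally over sets of trees, in three steps:
\begin{enumerate}
\item For $K=1$, the Literati and DPDT objectives coincide on every threshold tree. The per-node penalty is $\lambda+\alpha\cdot 1=\alpha_{\mathrm{DPDT}}$, and routing is identical up to relabeling.
\item With refine actions removed, the two search spaces induce the same discretized feasible set, $\mathcal{T}_{\mathrm{DPDT}}=\mathcal{T}^{K=1}_{\mathrm{Liter}}$. This gives equality at $K=1$.
\item The $K\ge 2$ inequality follows from the inclusion $\mathcal{T}^{K=1}_{\mathrm{Liter}}\subseteq\mathcal{T}^{K}_{\mathrm{Liter}}$.
\end{enumerate}

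\textbf{Your route.} You work locally with value functions. You prove $V^*(\mathcal{D}_s,d)\le\mathcal{L}^{\mathrm{DPDT}}_d(\mathcal{D}_s)$ for every subproblem by induction on $d$, using the commit branch of the shape-function recurrence. Equality at $K=1$ then holds because the two recurrences become literally identical.

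\textbf{What your route buys.}
\begin{itemize}
\item It gives a stronger, node-wise dominance statement, not just a comparison at the root.
\item It compares $K\ge 2$ directly against DPDT, without passing through Literati at $K=1$.
\item It never needs to formalize the correspondence between trees and policies in the discretized graph.
\item It makes explicit that candidate sets must be a function of $(\mathcal{D}_s,d)$ alone. The paper's set inclusion uses this tacitly, since the $K\ge 2$ graph must offer the same split candidates at every data state as the $K=1$ graph.
\end{itemize}

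\textbf{What the paper's route buys.} It is shorter. It also shows directly that at $K=1$ the two methods solve the same optimization problem over the same set of trees, so optimal trees correspond, not just optimal values.

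\textbf{One small point.} The left/right symmetry you cite handles only the relabeling of children. Whether a sample with $x_j=u$ is routed with the lower or upper side is a separate convention. It must match DPDT's $x_j\le\theta$. The paper fixes this by reading $f_j^{\{u\}}$ as $\mathbf{1}[x_j>u]$, and you should state the same convention explicitly.
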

Throughout this proof, we denote $\mathcal{L}_{\mathrm{Liter}}(T)$ for the Literati objective in Equation~\ref{eq:objective} and $\mathcal{L}_{\mathrm{DPDT}}(T)$ for the DPDT objective under the node-count complexity penalty variant of \citet[Eq.~1]{kohler2025breiman}. We adopt two bookkeeping conventions to align the two formulations. First, DPDT is formulated as a maximization of expected return $J_\alpha(\pi)$, whereas Literati is formulated as a minimization of regularized loss; since $\mathcal{L}_{\mathrm{DPDT}}(T) = -J_\alpha(\pi)$ for the policy $\pi$ inducing $T$ \citep[Proposition 4.1]{kohler2025breiman}, we negate the DPDT return and work with the resulting loss throughout. Second, DPDT uses a normalized misclassification rate $\frac{1}{N}\sum_{i=1}^{N} \ell(y_i, T(x_i))$, whereas Literati uses the unnormalized count $\mathrm{Err}(T, \mathcal{D})$; since $1/N$ is a positive constant scaling factor that does not affect the argmin, we work with the unnormalized convention for both objectives, with the understanding that the DPDT regularization weight is rescaled accordingly. These conventions place the two objectives in a common form; the precise correspondence between their regularization terms is established below.

We first show that for any axis-aligned threshold tree $T$, the Literati objective with $K=1$ and the DPDT objective with matched regularization weight $\alpha_{\mathrm{DPDT}} = \lambda + \alpha$ coincide. Writing both objectives as a sum of an error term and a per-internal-node complexity, we obtain the following:
\begin{gather}
    \mathcal{L}_{\mathrm{Liter}}(T) = \mathrm{Err}(T, \mathcal{D}) + \sum_{v \in \mathcal{I}(T)} \left[\lambda + \alpha\, \mathbf{K}(f_v)\right], \\
    \mathcal{L}_{\mathrm{DPDT}}(T) = \mathrm{Err}(T, \mathcal{D}) + \sum_{v \in \mathcal{I}(T)} \alpha_{\mathrm{DPDT}}.
\end{gather}
Under $K = 1$, the constraint $\mathbf{K}(f_v) \leq K$ combined with the requirement that $f_v$ be non-constant at any internal node (else $v$ would not induce a split) pins $\mathbf{K}(f_v) = 1$ at every $v \in \mathcal{I}(T)$. The Literati per-node penalty thus reduces to $\lambda + \alpha \cdot 1 = \lambda + \alpha$, and applying the matching condition $\alpha_{\mathrm{DPDT}} = \lambda + \alpha$ yields
\begin{gather}
    \sum_{v \in \mathcal{I}(T)} \left[\lambda + \alpha\, \mathbf{K}(f_v)\right] = \sum_{v \in \mathcal{I}(T)} (\lambda + \alpha) = \sum_{v \in \mathcal{I}(T)} \alpha_{\mathrm{DPDT}}.
\end{gather}
The error terms agree by construction: a Literati node with changepoint set $\mathbf{k} = \{u\}$ on feature $j$ routes samples by $\mathbf{1}[x_j > u]$, which is the DPDT threshold split $x_j \leq u$ up to a left-right relabeling. Routing-equivalent trees have identical $\mathrm{Err}(\cdot, \mathcal{D})$. Combining the matched error and complexity terms gives
\begin{gather}
    \mathcal{L}_{\mathrm{Liter}}^{K=1}(T) = \mathcal{L}_{\mathrm{DPDT}}(T) \qquad \forall\, T \in \mathcal{T}_{\mathrm{DPDT}}. \label{eq:stage1}
\end{gather}
By assumption (iii) of Proposition~\ref{prop:dominance}, Literati and DPDT employ the same CART-based discretization function $\phi$. Provided the candidate schedule $B_d$ is the same for all depths, DPDT and Literati utilize identical collections of per-feature candidate sets $\{U_s^{(j)}\}_{j=1}^{M}$ for any dataset $\mathcal{D}_s \subseteq \mathcal{D}$. Under $K=1$, the constraint $|\mathbf{k}| \leq K$ forces the Literati refine action set $\mathcal{A}_{\mathrm{refine}}$ to be empty at every shape function state, leaving the commit action $a_{\mathrm{commit}}$ as the only available action. Each shape function state $(\mathcal{D}_s, d, j, \{u\})$ thus deterministically transitions to the data state pair $\bigl((\mathcal{D}_L(f_j^{\{u\}}), d-1),\, (\mathcal{D}_R(f_j^{\{u\}}), d-1)\bigr)$ with cost zero, exactly mirroring the partition induced by the DPDT split $a_{j,u}$ \citep{kohler2025breiman}. The two methods therefore induce identical search spaces, yielding the same discretized feasible set $\mathcal{T}_{\mathrm{DPDT}} = \mathcal{T}_{\mathrm{Liter}}^{K=1}$. Combined with Equation~\ref{eq:stage1}, this gives $\mathcal{L}^*_{\mathrm{Liter}}\big|_{K=1} = \mathcal{L}^*_{\mathrm{DPDT}}$. For $K \geq 2$, the constraint $|\mathbf{k}| \leq K$ relaxes that of $K=1$, so $\mathcal{T}_{\mathrm{Liter}}^{K=1} \subseteq \mathcal{T}_{\mathrm{Liter}}^{K}$ and minimizing the Literati objective over the larger set gives $\mathcal{L}^*_{\mathrm{Liter}}\big|_{K \geq 2} \leq \mathcal{L}^*_{\mathrm{Liter}}\big|_{K=1} = \mathcal{L}^*_{\mathrm{DPDT}}$. \qed

Since DPDT is itself lower-bounded by CART \cite{kohler2025breiman}, Literati is also lower-bounded by CART 

\subsection{Growth Bound of our AND/OR Graph}
We analyze the size of the AND/OR graph defined in Section~\ref{sec:aograph} in terms of the number of training instances $N$, features $M$, maximum tree depth $D$, and shape function complexity budget $K$, by examining the branching factor at each state type.

Each non-terminal data state $(\mathcal{D}_s, d)$ has at most $\sum_{j=1}^{M}|\mathcal{U}_s^{(j)}| \leq MN$ split successors (one shape function state per feature--change-point pair) and one terminal successor. Each shape function state $(\mathcal{D}_s, d, j, \mathbf{k})$ has at most $|\mathcal{U}_s^{(j)}| - |\mathbf{k}| \leq N - 1$ refine successors (since the ordering constraint $u > \max(\mathbf{k})$ excludes $\max(\mathbf{k})$ and all values below it) and one commit action producing two AND-successors at depth budget $d-1$, for at most $N+1$ successor states. Since $|\mathbf{k}|$ ranges from $1$ to $K$, there are up to $K$ successive layers of shape function states between consecutive data state levels. The effective branching factor from a data state at depth budget $d$ to child data states at depth budget $d-1$ is therefore $O(MN^K)$, and applying this recursively across $D$ depth levels yields
\begin{gather}
    |\mathcal{S}_{\mathrm{data}}| = O\!\left((MN^K)^{D}\right).
\end{gather}
Each data state gives rise to $O(MN^K)$ shape function states, so the total state space is
\begin{gather}
    |\mathcal{S}| = O\!\left((MN^K)^{D+1}\right),
\end{gather}
dominated by shape function states. This exponential growth motivates the anytime property of Literati: a procedure that achieves the optimal or near-optimal solution early, without exhaustively expanding the graph, is critical for practical scalability.

\subsubsection{Adaptive Discretization}
Here, we analyze the growth rate of the AND/OR graph under adaptive discretization. Given a discretization schedule $B_d,\, d = 1, \ldots, D$, fitting a CART tree with at most $B_d$ leaves at each data state with depth budget $d$ yields at most $B_d - 1$ candidate thresholds distributed across features, so $\sum_{j=1}^{M} |U_s^{(j)}| \leq B_d - 1$. The branching factor at a data state reduces from $MN$ to at most $B_d - 1$ split successors, and the refine branching at each shape function layer reduces from $N - 1$ to at most $B_d - 2$. With up to $K$ shape function layers, the effective branching from data states at depth budget $d$ to child data states at depth budget $d - 1$ is $O(B_d^K)$. Since this branching factor varies with $d$, the total state space is bounded by the product across depth levels:
\begin{gather}
    |\mathcal{S}| = O\!\left(D \cdot \prod_{d=1}^{D} B_d^K \right).
\end{gather}
As $B_d \ll M \cdot N$, adaptive discretization significantly reduces the size of the AND/OR graph. 

\section{Literati Algorithmic Details}\label{sec:pseudocode}

In this section, we provide more details on our algorithm, including notable design decisions that may be useful for re-implementing our algorithm, as well as pseudocode in Algorithms~\ref{alg:literati-main}-~\ref{alg:literati-backprop}. 

\paragraph{Architecture.} Literati is implemented in Rust with Python bindings. Each state $s \in \mathcal{S}$ is represented as a struct storing the index vector identifying samples in $\mathcal{D}_s$, the heuristic values $V(s)$, $\widetilde{V}(s)$, and $\overline{V}(s)$, a COMPLETE flag, and additional fields described in the paragraphs below. The AND dynamics induced by a commit action are encoded in a separate struct that tracks the joint solved status of $(s_L, s_R) = F(s_{\text{shape}}, a_{\text{commit}})$ and supports round-robin selection (Section~\ref{sec:and_selection}).

\paragraph{Efficient Data Partitioning.} Applying a shape function $f_j^{\mathbf{k}}$ requires partitioning $\mathcal{D}_s$ according to $\mathbf{k}$. To avoid repeatedly scanning $\mathcal{D}_s$, when we expand a data state $(\mathcal{D}_s, d)$ we precompute, for every feature $j$, the sample indices falling in each interval between consecutive candidates in $U_s^{(j)}$, and store these bin partitions on the data state's struct. Descendant shape function states sharing the same ancestor data state hold pointers to these bins rather than copies. A shape function on feature $j$ can then be represented compactly as a boolean array over bins indicating left or right assignment, and partitioning $\mathcal{D}_s$ reduces to a single linear scan of this array followed by concatenation of the corresponding index vectors.

\paragraph{Policy Queues.} Each partial policy ($\pi$, $\widetilde{\pi}$, and $\overline{\pi}$) is represented as a min-queue over $\mathcal{A}(s)$ keyed on the corresponding $Q$-values, with the queues stored on the state's struct. The queues are re-sorted whenever a state is touched during backpropagation. For $\pi$ and $\overline{\pi}$, the prescribed action is simply the queue head. For $\widetilde{\pi}$, since $\widetilde{V}$ is potentially inadmissible, we instead traverse the queue until we find an action whose successors are not all COMPLETE (Section~\ref{sec:or_selection}).

\paragraph{Eager Initialization of Commit Successors.} The heuristic values $V$, $\widetilde{V}$, and $\overline{V}$ at a shape function state $s_{\text{shape}} = (\mathcal{D}_s, d, j, \mathbf{k})$ are functions of the two data states resulting from its commit action. We therefore eagerly construct and populate the structs for $(\mathcal{D}_L(f_j^{\mathbf{k}}), d-1)$ and $(\mathcal{D}_R(f_j^{\mathbf{k}}), d-1)$ (using the bin pointers inherited from the ancestor data state) when $s_{\text{shape}}$ is added to the graph, and retain them until $s_{\text{shape}}$ is itself expanded. This avoids recomputing partition statistics that are reused throughout backpropagation, yielding substantial runtime savings in practice.

\paragraph{Lazy Backpropagation.}
To avoid unnecessary updates and priority queue reorderings, backpropagation along an ancestor chain is terminated as soon as the quantities $V(s)$, $\widetilde{V}(s)$, $\overline{V}(s)$, and the \textsc{Complete} flag are all unchanged after updating state $s$. Propagation continues to the parent of $s$ only if at least one of these quantities is modified.

\paragraph{Regularization Reparameterization.} For analytical clarity, the objective in Equation~\ref{eq:objective} charges $\lambda + \alpha\, \mathbf{K}(f_v)$ at every internal node. However, to make the two regularization terms easier for users to interpret independently, our implementation instead exposes parameters $\lambda_r \geq 0$ and $\alpha_r \geq 0$ and charges $\lambda_r + \alpha_r\,(\mathbf{K}(f_v) - 1)$ per node, so that $\alpha_r$ penalizes only the structure beyond a standard threshold split. This is an affine reparameterization of the original objective, recovered exactly by setting $\lambda = \lambda_r - \alpha_r$ and $\alpha = \alpha_r$. In the AND/OR graph, the split transition cost decomposes as $c(s, a_{j,u}) = (\lambda_r - \alpha_r) + \alpha_r = \lambda_r$ and the refine cost remains $c(s, a_{j,\mathbf{k},u}) = \alpha_r$, both non-negative, so the correctness of the AND/OR formulation, the AO* search procedure, and all theoretical results are preserved.

\paragraph{Caching} Unlike Branches~\cite{chaouki2025branches}, we do not cache states keyed on their reaching subset $\mathcal{D}_s$. Empirically, the memory cost of maintaining such a cache, together with the lookup overhead at every state construction, outweighs any savings from reuse. Two factors compound this: (i) our adaptive candidate generation produces per-node threshold sets $U_s^{(j)}$ that yield largely distinct partitions across the search tree, reducing cache hit rates, and (ii) caching invalidates the memory advantage Literati gains from pruning states, as we would still need to retain pruned states for future reuse. Instead, we maintain a limited cache of $V$, $\bar{V}$, and $\widetilde{V}$ values for the 10,000 most recently explored data states. This is substantially more memory efficient because it avoids retaining pointers to the underlying states, allowing Literati to free pruned states from memory while still benefiting from prior exploration through tighter bounds.

\subsection{Pseudocode}
\begin{algorithm}[H]
\DontPrintSemicolon
\caption{Literati: main loop}\label{alg:literati-main}
\resizebox{1.0\textwidth}{!}{%
\begin{minipage}{\linewidth}
\KwIn{data $\mathcal{D}$, depth budget $D$; complexity budget $K$; regularizers $\lambda, \alpha$; time limit $\tau$}
\KwOut{Shape Generalized Tree}
\SetKwFunction{Initialize}{Initialize}
\SetKwFunction{Select}{Select}
\SetKwFunction{Expand}{Expand}
\SetKwFunction{Backpropagate}{Backpropagate}
\SetKwFunction{ExtractTree}{ExtractTree}
\SetKwFunction{TimedOut}{TimedOut}
$s_0 \gets (X, D)$ \tcp*{root data state}
\Initialize{$s_0$}\;
\While{$\neg\,\mathrm{COMPLETE}(s_0) \;\wedge\; \neg\,\TimedOut()$}{
  $s \gets \Select(s_0)$\;
  \Expand{$s$}\;
  \Backpropagate{$s$}\;
}
\Return{$\ExtractTree(s_0)$} \tcp*{unroll $\bar\pi$ from the root}
\end{minipage}%
}
\end{algorithm}

\begin{algorithm}[H]
\DontPrintSemicolon
\caption{Selection}\label{alg:literati-select}
\resizebox{1.0\textwidth}{!}{%
\begin{minipage}{\linewidth}
\SetKwProg{Fn}{Function}{:}{}
\SetKwFunction{Select}{Select}
\SetKwFunction{ANDSelection}{ANDSelection}

\Fn{\Select{$s$}}{
  \lIf{$\neg\,\mathrm{expanded}(s)$}{\Return{$s$}}
  $a \gets \tilde\pi(s)$\;
  \uIf{$|F(s, a)| > 1$}{
    \Return{$\Select(\ANDSelection(F(s, a),\, s))$}\;
  }\Else{
    $\{s'\} \gets F(s, a)$\;
    \Return{$\Select(s')$}\;
  }
}

\BlankLine
\Fn{\ANDSelection{$\{s_L, s_R\},\, s$}}{
  \lIf{$\mathrm{COMPLETE}(s_L)$}{\Return{$s_R$}}
  \lIf{$\mathrm{COMPLETE}(s_R)$}{\Return{$s_L$}}
  \uIf{$\mathrm{chooseLeft}(s)$}{
    $\mathrm{chooseLeft}(s) \gets \textsf{false}$\;
    \Return{$s_L$}\;
  }\Else{
    $\mathrm{chooseLeft}(s) \gets \textsf{true}$\;
    \Return{$s_R$}\;
  }
}
\end{minipage}
}
\end{algorithm}

\begin{algorithm}[H]
\DontPrintSemicolon
\caption{Expansion}\label{alg:literati-expand}
\resizebox{1.0\textwidth}{!}{%
\begin{minipage}{\linewidth}
\SetKwProg{Fn}{Function}{:}{}
\SetKwFunction{Expand}{Expand}
\SetKwFunction{Initialize}{Initialize}
\SetKwFunction{Discretize}{Discretize}
\SetKwFunction{Partition}{Partition}
\SetKwFunction{RetrieveChangepoints}{RetrieveChangepoints}
\SetKwFunction{InitSFState}{InitSFState}
\SetKwFunction{InitDataState}{InitDataState}
\SetKwFunction{FitCART}{FitCART}

\Fn{\Expand{$s$}}{
  \uIf(\tcp*[f]{data state}){$s$\text{ is Data State}}{
    $\mathcal{D}_s,d \gets s$\;

    $U_s^{(1)},\ldots,U_s^{(M)} \gets \Discretize(\mathcal{D}_s)$\;
    \ForEach{$j=1\ldots,M$}{ 
        \tcp{add Initial Shape Function states}
        \ForEach{$u \in U_s^{(j)}$}{
            $s' \gets (\mathcal{D}, d, j, \{u\})$\;
            $\InitSFState(s')$\;
            add $a_{j,u}$ with cost $\lambda + \alpha$,\, $F = \{s'\}$ to $A(s)$\
        }
    }
    
  }\Else(\tcp*[f]{shape function state }){
    
    \ForEach{$s' \in \{s_L, s_R\}$}{
      $\InitDataState(s')$\;
    }
    add $a_{\text{commit}}$ with cost $0$,\, $F = \{s_L, s_R\}$ to $A(s)$\;
    \If{$|\mathbf{k}| < K$}{
      $U_s^{(j)}\gets \RetrieveChangepoints(\mathcal{D}_s,j)$\;
      \ForEach{$u \in U_{s}^{(j)}$ \KwSty{with} $u > \max(\mathbf{k})$}{
        $s' \gets (\mathcal{D}_s, d, j, \mathbf{k} \cup \{u\})$\;
        $\InitSFState(s')$\;
        add $a_{j, \mathbf{k}, u}$ with cost $\alpha$,\, $F = \{s'\}$ to $A(s)$\;
      }
    }
  }
  $\mathrm{expanded}(s) \gets \textsf{true}$\;
}
\Fn{\InitSFState{$s$}}{
    $(\mathcal{D}_s, d, j, \mathbf{k}) \gets s$\;
    $(\mathcal{D}_L, \mathcal{D}_R) \gets \Partition(\mathcal{D}_s,\, f_{\mathbf{k}})$\;
    $s_L \gets (\mathcal{D}_L, d - 1)$;\quad $s_R \gets (\mathcal{D}_R, d - 1)$\;
    \tcp{Precompute left and right Data State statistics.}
    $V(s_L) \gets \min(\text{Err}(\mathcal{D}_L), \lambda + \alpha)$; $V(s_R) \gets \min(\text{Err}(\mathcal{D}_R), \lambda + \alpha)$\;
    $\overline{V}(s_L) \gets \text{Err}(\mathcal{D}_L); \overline{V}(s_R) \gets \text{Err}(\mathcal{D}_R)$\;
    $T_L \gets \FitCART(\mathcal{D}_L,d-1); T_R \gets \FitCART(\mathcal{D}_R,d-1)$\;
    $\widetilde{V}(s_L) \gets \mathcal{L}(T); \widetilde{V}(s_R) \gets \mathcal{L}(T)$\; \tcp{$\mathcal{L}(T)$ is the objective value of Equation~\ref{eq:objective}}
    \tcp{Assign Shape Function State statistics}
    $V(s) \gets \min(\alpha, V(s_L) + V(s_R))$\;
    $\overline{V}(s) \gets  \overline{V}(s_L) + \overline{V}(s_R)$\;
    $\widetilde{V} \gets \widetilde{V}(s_L) + \widetilde{V}(s_R)$\;
    $\mathrm{COMPLETE}(s) \gets \mathrm{FALSE}$
}

\Fn{\InitDataState{$s$}}{
    \If{$V(s), \overline{V}(s), \widetilde{V}(s)\ \mathrm{does\ not\ exists}$}{
        \tcp{True only for root}
        $V(s_L) \gets \min(\text{Err}(\mathcal{D}_L), \lambda + \alpha)$\;
        $\overline{V}(s_L) \gets \text{Err}(\mathcal{D}_L)$\;
        $\widetilde{V}(s_L) \gets \FitCART(\mathcal{D}_L,d)$\;
    }
    \tcp{Statistics have been computed in $\InitSFState$}
    $\mathrm{COMPLETE}(s) \gets \overline{V}(s) \leq \lambda + \alpha$
}
\end{minipage}
}
\end{algorithm}

\begin{algorithm}[H]
\DontPrintSemicolon
\caption{Backpropagation}\label{alg:literati-backprop}
\resizebox{1.0\textwidth}{!}{%
\begin{minipage}{\linewidth}
\SetKwProg{Fn}{Function}{:}{}
\SetKwFunction{Backpropagate}{Backpropagate}

\Fn{\Backpropagate{$s$}}{
  $\textsc{old} \gets (V(s),\, \bar{V}(s),\, \tilde{V}(s),\, \mathrm{COMPLETE}(s))$\;

  \tcp{primal}
  $\bar\pi(s) \gets \arg\min_{a \in A(s)} \bar{Q}(s, a)$;\quad
  $\bar{V}(s) \gets \bar{Q}(s, \bar\pi(s))$\;

  \tcp{prune}
  $A(s) \gets \{\, a \in A(s) \,:\, Q(s, a) < \bar{V}(s) \,\}$\;
  Remove subgraphs of any pruned actiosn\;
  \tcp{Policy Updates}
  $\pi(s) \gets \arg\min_{a \in A(s)} Q(s, a)$;\quad
  $V(s) \gets Q(s, \pi(s))$\;
  $\tilde\pi(s) \gets \arg\min_{a \in A(s)} \tilde{Q}(s, a)$;\quad
  $\tilde{V}(s) \gets \tilde{Q}(s, \tilde\pi(s))$\;

  \tcp{COMPLETE check}
  \If{$\forall\, s' \in F(s, \pi(s)) \,:\, \mathrm{COMPLETE}(s')$}{
    $\mathrm{COMPLETE}(s) \gets \mathrm{TRUE}$\;
    $A(s) \gets \{\pi(s)\}$\; \tcp{Search complete; Fix policy}
  }
  \tcp{Propagate Upward}
  \If{$(V(s), \bar{V}(s), \tilde{V}(s), \mathrm{COMPLETE}(s)) \ne \textsc{old} \;\wedge\; \mathrm{parent}(s)\ \mathrm{exists}$}{
    \Backpropagate{$\mathrm{parent}(s)$}\;
  }
}
\end{minipage}
}
\end{algorithm}

\begin{algorithm}[H]
\DontPrintSemicolon
\caption{Discretization with caching}\label{alg:literati-discretize}
\resizebox{1.0\textwidth}{!}{%
\begin{minipage}{\linewidth}
\SetKwProg{Fn}{Function}{:}{}
\SetKwFunction{Discretize}{Discretize}
\SetKwFunction{FitCART}{FitCART}
\Fn{\Discretize{$\mathcal{D}$}}{
  \lIf{$\mathcal{D} \in \mathrm{cache}$}{\Return{$\mathrm{cache}[\mathcal{D}]$}}
  $T \gets \FitCART(\mathcal{D})$\;
  $U_{\mathcal{D}}^{(j)} \gets \emptyset \quad \forall\, j \in \{1, \ldots, M\}$\;
  \ForEach{internal node $v \in \mathcal{I}(T)$}{
    let $(j_v, u_v)$ be the feature and threshold at node $v$\;
    $U_{\mathcal{D}}^{(j_v)} \gets U_{\mathcal{D}}^{(j_v)} \cup \{u_v\}$\;
  }
  $\mathrm{cache}[\mathcal{D}] \gets \{U_{\mathcal{D}}^{(j)}\}_{j=1}^{M}$\;
  \Return{$\{U_{\mathcal{D}}^{(j)}\}_{j=1}^{M}$}\;
}
\end{minipage}
}
\end{algorithm}
\newpage
\section{Additional Experimental Details} \label{sec:add_exp_details}
All experiments were run on Intel Granite Forest compute nodes (Xeon 6972P @ 2.4 GHz, 12 vCPUs, 64 GB RAM). For the Optimization Performance experiments (Section~\ref{sec:optimization_performance}), all regularization is turned off for all approaches.  

\paragraph{Baseline Implementation Details:} For CART, we utilize the implementation found in Scikit-Learn \cite{scikit-learn}. For DPDT, we utilize the official implementation provided by the original authors \cite{kohler2025breiman} found at \url{https://github.com/KohlerHECTOR/DPDTreeEstimator}. For STreeD \cite{van2023necessary}, we utilize the official implementation provided by the original authors found at \url{https://github.com/AlgTUDelft/pystreed}. For ConTree \cite{brita2025optimal}, we utilize the official implementation provided by the original authors found at \url{https://github.com/ConSol-Lab/contree/tree/main}. For Branches \cite{chaouki2025branches}, we utilize the official implementation provided by the original authors found at \url{https://github.com/Chaoukia/branches}. NOTE: when faced with categorical variables, Branches supports multi-way branching, routing each category to a separate sub-tree. For fair comparison with binary trees, we turn this feature off in our experiments by setting the \texttt{encoding} parameter to \texttt{"multi"} or \texttt{"binary"}  \footnote{See Line 27 of \url{https://github.com/Chaoukia/branches/blob/main/src/branches.py}}.  For ShapeCART and ShapeTAO \cite{upadhyaempowering}, we utilize the official implementation provided by the original authors found at \url{https://github.com/optimal-uoft/Empowering-DTs-via-Shape-Functions}.

\begin{table}[!ht]
\caption{Dataset statistics and source information. }
\centering
\label{tab:dataset_info}
\begin{tabular}{@{}cccccc@{}}
\toprule
Dataset       & $N$    & $M$  & $|\mathcal{C}|$ & Source & Source ID \\ \midrule
adult         & 45222  & 108  & 2               & OpenML & 179       \\
avila         & 20867  & 10   & 12              & OpenML & 42932     \\
bank          & 1372   & 4    & 2               & OpenML & 1462      \\
bean          & 13611  & 16   & 7               & UCI    & 602       \\
bidding       & 6321   & 9    & 2               & OpenML & 42889     \\
electricity   & 45312  & 13   & 2               & OpenML & 151       \\
eucalyptus    & 736    & 1487 & 5               & OpenML & 43924     \\
eye-movements & 10936  & 27   & 3               & OpenML & 1044      \\
eye-state     & 14980  & 14   & 2               & OpenML & 1471      \\
fault         & 1941   & 27   & 7               & OpenML & 40982     \\
gas-drift     & 13910  & 128  & 6               & OpenML & 1476      \\
htru          & 17898  & 8    & 2               & OpenML & 45558     \\
magic         & 13376  & 10   & 2               & OpenML & 44125     \\
mini-boone    & 130064 & 50   & 2               & OpenML & 41150     \\
mushroom      & 8124   & 94   & 2               & UCI    & 73        \\
occupancy     & 20560  & 5    & 2               & UCI    & 357       \\
page          & 5473   & 10   & 5               & OpenML & 30        \\
pendigits     & 10992  & 16   & 10              & OpenML & 32        \\
raisin        & 900    & 7    & 2               & UCI    & 850       \\
rice          & 3810   & 7    & 2               & UCI    & 545       \\
room          & 10129  & 16   & 4               & UCI    & 864       \\
segment       & 2310   & 18   & 7               & OpenML & 40984     \\
skin          & 245057 & 3    & 2               & OpenML & 1502      \\
wilt          & 4839   & 5    & 2               & OpenML & 40983     \\ \bottomrule
\end{tabular}
\end{table}
\subsection{Generalization Experiment Details}\label{sec:hp}
Below are the hyperparameters tuned for each evaluated approach. Depth is treated as a hyperparameter for all models and tuned over $D \in \{2,\ldots,6\}$ following \citet{upadhyaempowering} 
\begin{itemize}
    \item \textbf{Literati}: \texttt{impurity} $\in \{$gini, entropy$\}$; \texttt{lambda\_}, \texttt{alpha\_} $\in \{1,2,4,8,16,32,64\}$ with \texttt{alpha\_} $\leq$ \texttt{lambda\_}; \texttt{max\_order} $\in \{2,3\}$; \texttt{cart\_nodes\_list} $\in \{(9,9),(5,5),(17,17),(17,9),(9,5)\}$\footnote{$B_d$=\texttt{cart\_nodes\_list}[$d-1$] - 1. If $d$ is out of index for the tuple, we set $B_d = K$ to allow for shape functions at all depths}. \texttt{lambda\_}, \texttt{alpha\_} divided by \texttt{n\_samples}. 
    \item \textbf{ShapeCART}: \texttt{criterion} $\in \{$gini, entropy$\}$; \texttt{min\_samples\_split} $\in \{2,4,8,16,32,64\}$; \texttt{min\_samples\_leaf} $\in \{1,2,4,8,16,32\}$; \texttt{min\_impurity\_decrease} $\in \{0, 10^{-4}, 5\!\cdot\!10^{-4}, 10^{-3}, 5\!\cdot\!10^{-3}, 10^{-2}\}$; \texttt{inner\_max\_leaf\_nodes} $\in \{3,4\}$. Note that this parameter controls the max shape function complexity (e.g. $K=$\texttt{inner\_max\_leaf\_nodes} - 1); \texttt{inner\_min\_samples\_leaf} $\in \{1, 10^{-4}, 5\!\cdot\!10^{-4}, 10^{-3}, 5\!\cdot\!10^{-3}, 10^{-2}\}$. 
    \item \textbf{ShapeTAO}: Same as ShapeCART \textbf{plus} \texttt{tao\_reg} $\in \{1,2,4,8,16,32,64,128\}$. \texttt{lambda\_} $=$ \texttt{tao\_reg} $/$ \texttt{n\_samples}.
    \item \textbf{CART}: \texttt{criterion} $\in \{$gini, entropy$\}$; \texttt{min\_samples\_split} $= 2^k$, $k \in \{1,\dots,5\}$; \texttt{min\_samples\_leaf} $\in \{1,\dots,32\}$; \texttt{min\_impurity\_decrease} $\in \{0, 10^{-4}, 5\!\cdot\!10^{-4}, 10^{-3}, 5\!\cdot\!10^{-3}, 10^{-2}\}$; \texttt{ccp\_alpha} $\in \{0, 10^{-4}, 5\!\cdot\!10^{-4}, 10^{-3}, 5\!\cdot\!10^{-3}, 10^{-2}\}$. 
    \item \textbf{AxTAO}: \texttt{criterion} $\in \{$gini, entropy$\}$; \texttt{min\_samples\_split} $= 2^k$, $k \in \{1,\dots,5\}$; \texttt{min\_samples\_leaf} $\in \{1,\dots,32\}$; \texttt{min\_impurity\_decrease} $\in \{0, 10^{-4}, 5\!\cdot\!10^{-4}, 10^{-3}, 5\!\cdot\!10^{-3}, 10^{-2}\}$; \texttt{lambda\_} $\in \{0, 10^{-4}, 5\!\cdot\!10^{-4}, 10^{-3}, 5\!\cdot\!10^{-3}, 10^{-2}\}$. \texttt{lambda\_} used as raw fraction (not scaled by n).
    \item \textbf{DPDT}: \texttt{criterion} $\in \{$gini, entropy$\}$; \texttt{min\_samples\_split} $= 2^k$, $k \in \{1,\dots,5\}$; \texttt{min\_samples\_leaf} $\in \{1,\dots,32\}$; \texttt{min\_impurity\_decrease} $\in \{0, 10^{-4}, 5\!\cdot\!10^{-4}, 10^{-3}, 5\!\cdot\!10^{-3}, 10^{-2}\}$; \texttt{cart\_nodes\_list} $\in \{(9,9),(5,5),(17,17),(17,9),(9,5)\}$. 
    \item \textbf{Branches}: \texttt{lambda\_} $\in \{1,2,4,8,16,32,64\}$; \texttt{n\_quantiles} $\in [4,16]$. QuantileBinarizer preprocessing; \texttt{lambda\_} scaled by \texttt{n\_samples}.
    \item \textbf{ConTree}: \texttt{lambda\_} $\in \{1,2,4,8,16,32,64\}$. \texttt{lambda\_} mapped to \texttt{complexity\_cost} $=$ \texttt{lambda\_} $/$ \texttt{n\_samples}.
    \item \textbf{STreeD}: \texttt{lambda\_} $\in \{1,2,4,8,16,32,64\}$; \texttt{n\_quantiles} $\in [4,16]$. QuantileBinarizer preprocessing; \texttt{lambda\_} mapped to \texttt{cost\_complexity} $=$ \texttt{lambda\_} $/$ \texttt{n\_samples}.
    \item \textbf{LDS-DL8.5}: \texttt{lambda\_} $\in \{1,2,4,8,16,32,64\}$; \texttt{n\_quantiles} $\in [4,16]$. QuantileBinarizer preprocessing; \texttt{lambda\_} mapped to \texttt{min\_support}
    \item \textbf{CADL8.5}: \texttt{lambda\_} $\in \{1,2,4,8,16,32,64\}$; \texttt{n\_quantiles} $\in [4,16]$. QuantileBinarizer preprocessing; \texttt{lambda\_} mapped to \texttt{min\_support}
\end{itemize}

\newpage
\section{Selection Strategy Ablation Continued}\label{sec:add_inad_results}

\begin{figure}[H]
    \centering
    \includegraphics[width=\linewidth]{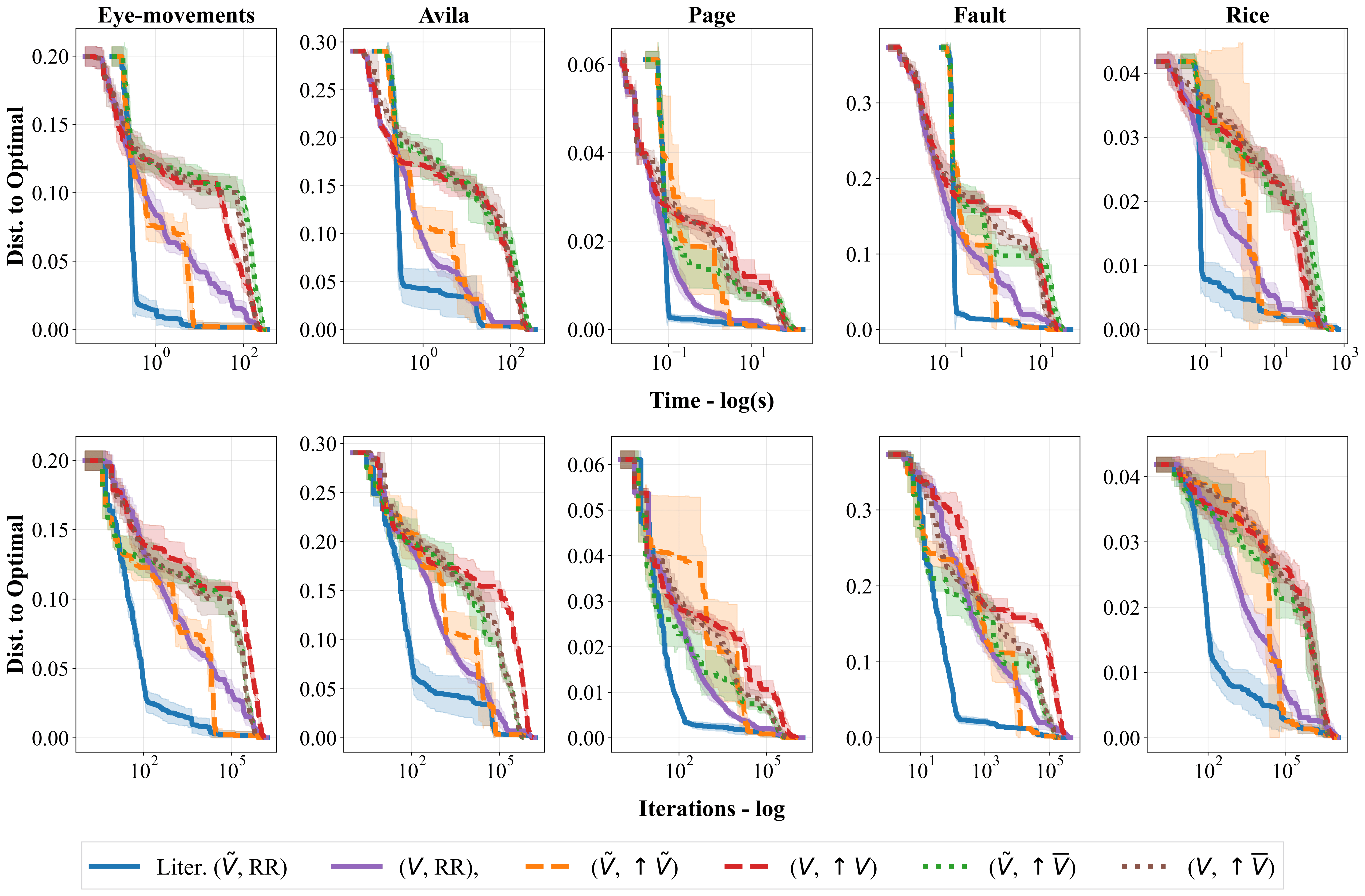}
    \caption{Distance to Optimal Solution across time (top) and iterations (bottom) for Literati (Full) and its ablations on Eye-movements, Avila, Page, Fault, and Rice.}
    \label{fig:ablation_all}
\end{figure}

Figure~\ref{fig:ablation_all} shows results for Eye-movements, Avila, Page, Fault, and Rice for both time and iterations. We observe that, across five datasets, using an informative heuristic for OR selection and round-robin AND selection configuration consistently achieves the best anytime performance among all configurations when looking both at time and iterations. 

\newpage
\section{Other Inadmissible Heuristics}\label{sec:inadmissible_comp}

\begin{figure}[H]
    \centering
    \includegraphics[width=\linewidth]{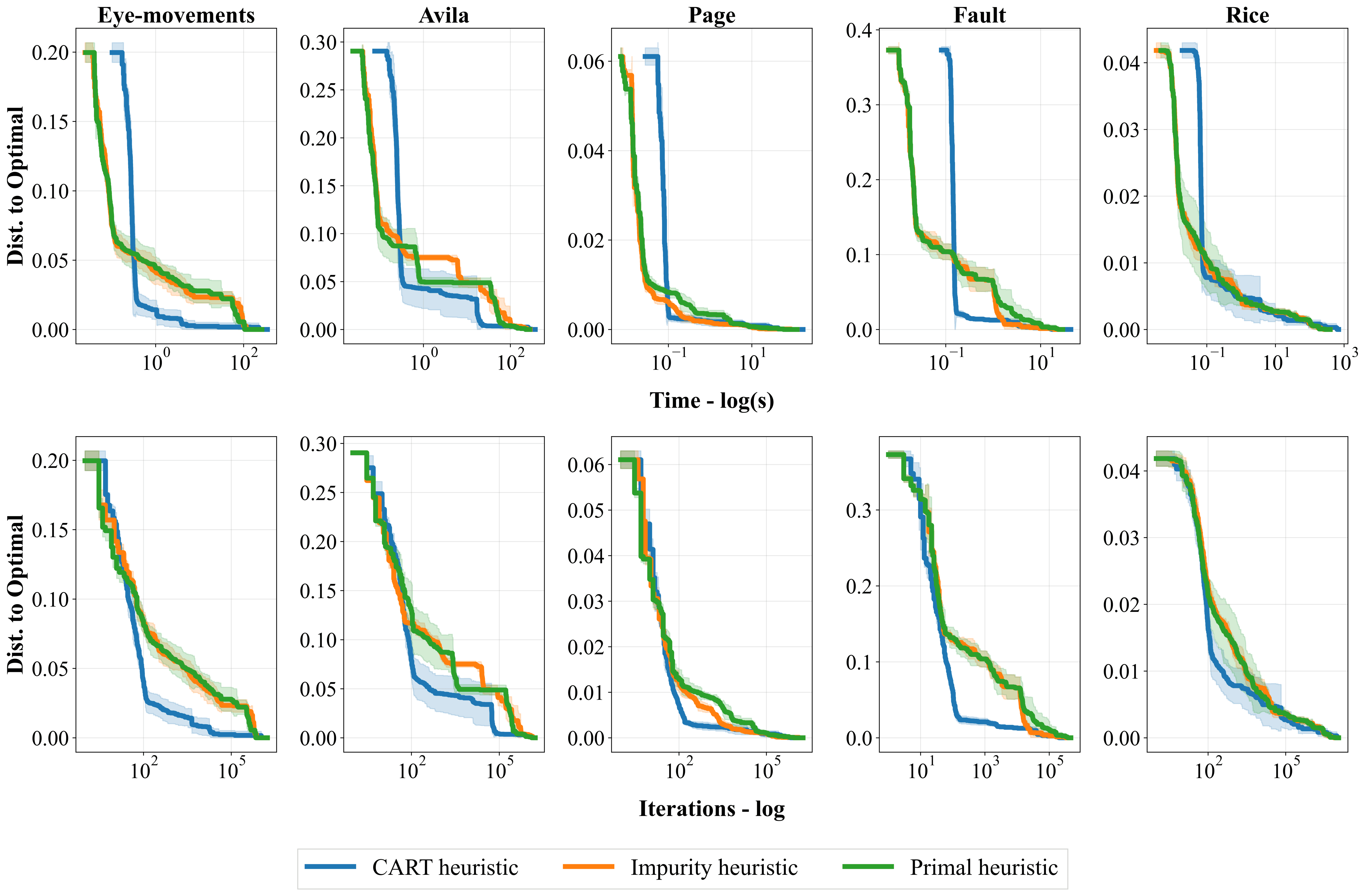}
    \caption{Distance to Optimal Solution across time (top) and iterations (bottom) for Literati under different $\widetilde{V}$ candidates. }
    \label{fig:inadmissible_heuristics}
\end{figure}
In this section, we evaluate two other candidates for inadmissible heuristics: 
\begin{itemize}
    \item \textbf{Primal:} Set $\widetilde{V} = \overline{V}$. This requires no additional computation, as the primal bound is already maintained during search.
    \item \textbf{Impurity:} Set $\widetilde{V}(\mathcal{D}_v, d) = I(\mathcal{D}_v) \cdot |\mathcal{D}_v|$, where $I(\cdot)$ denotes a weighted impurity measure (Gini or entropy). This is cheap to compute and better accounts for the class distribution. 
\end{itemize}
We follow the experimental setting used in our selection strategy analysis (Section~\ref{sec:anytime_ao_ablation} and Appendix~\ref{sec:add_inad_results}), replacing only the inadmissible heuristic driving the search. Figure~\ref{fig:inadmissible_heuristics} reports the distance-to-optimal against both wall-clock time and iteration count on Eye-movements, Avila, Page, Fault, and Rice. Although the CART heuristic incurs minor additional overhead (in the order of $10^{-2}$--$10^{-1}$ seconds) during the initial iterations, it yields substantial runtime improvement on most datasets. This advantage is even more pronounced in the iteration view, where CART consistently approaches the optimum in fewer expansions than either the primal or impurity heuristics. The primal and impurity heuristics exhibit nearly similar behaviour across all five datasets, which we attribute to their shared myopic structure: both score a candidate split using only information local to the node. The CART heuristic instead performs an implicit lookahead by fitting a surrogate subtree, yielding tighter estimates.

\newpage
\section{Impact of Shapes}
\begin{figure}[!ht]
    \centering
    \includegraphics[width=\linewidth]{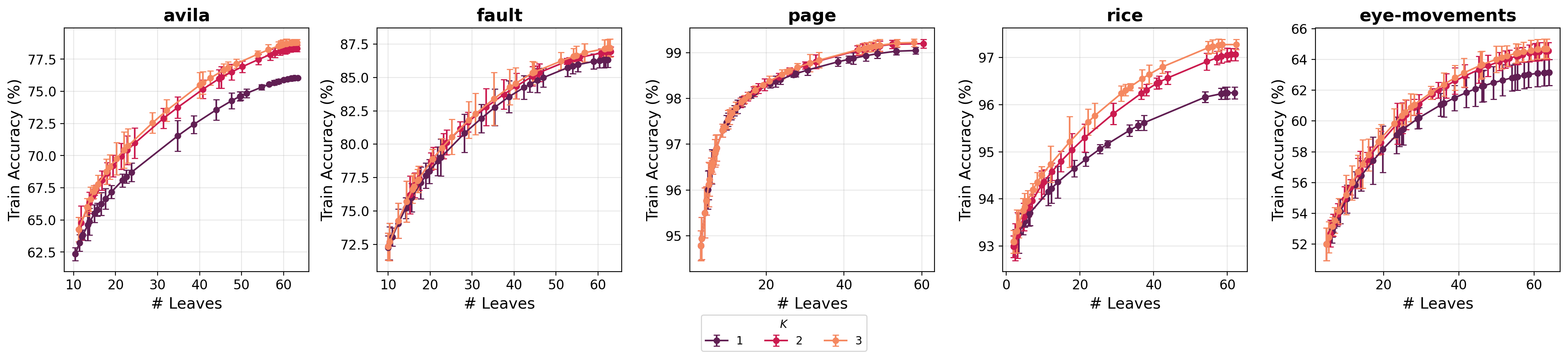}
    \caption{Train Accuracy across Number of Leaves. The horizontal bars describe the standard deviation across folds.}
    \label{fig:paretto}
\end{figure}
Decision trees are prized for their inherent interpretability. However, this interpretability is highly sensitive to the structural complexity of the tree, particularly the depth of decision paths and the number of leaves \cite{molnar2020interpretable,luvstrek2016makes,SERDT,bertsimas2017optimal}. As an SGT, Literati leverages shape function branching, which yields higher-performing trees than axis-aligned threshold trees when controlling for the number of leaves. We empirically evaluate this benefit by comparing the train accuracy vs. model size tradeoff across shape complexity budgets. 

\paragraph{Setup.} We sweep $100$ values of $\lambda$ log-uniformly spaced over $[10^{-8}, 1]$ and train Literati with a maximum depth of $6$ using the light configuration from Section~\ref{sec:optimization_performance}, imposing a one-hour time limit per run. We compare $K \in \{1, 2, 3\}$; under this configuration, where $K=1$ yields a threshold tree equivalent to DPDT \cite{kohler2025breiman} (Proposition~\ref{prop:dominance}). Figure~\ref{fig:paretto} reports training accuracy against the number of leaves on five representative datasets: \emph{Avila}, \emph{Fault}, \emph{Page}, \emph{Rice}, and \emph{Eye-movements}.

\paragraph{Results.} Across all five datasets, increasing $K$ results in higher training accuracy consistently across tree size. The effect is most pronounced on \emph{Avila} and \emph{Rice}. On \emph{Avila}, $K=2$ matches the training accuracy that $K=1$ reaches with $35$ leaves using only $25$; on \emph{Rice}, $K=2$ matches the $40$-leaf $K=1$ model with just $20$ leaves. We further observe diminishing returns in $K$: the gap between $K=2$ and $K=3$ is consistently smaller than the gap between $K=1$ and $K=2$. 

\newpage

\section{Optimality Experiment Results Continued}\label{sec:opt_full_results}

\begin{figure}[htbp]
     \centering
     \begin{subfigure}[b]{0.45\textwidth}
         \centering
         \includegraphics[width=\textwidth]{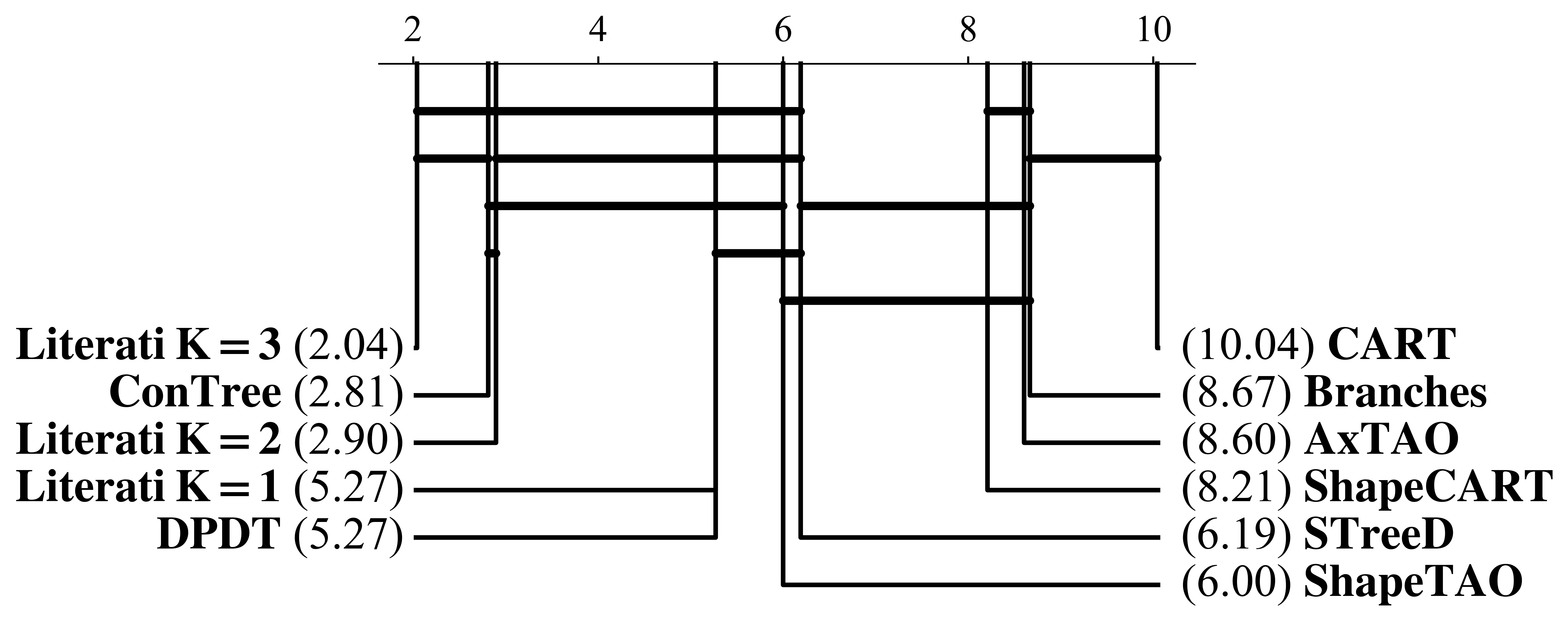}
         \caption{$D=3$}
         \label{fig:d3_trainacc}
     \end{subfigure}
     \hfill
     \begin{subfigure}[b]{0.45\textwidth}
         \centering
         \includegraphics[width=\textwidth]{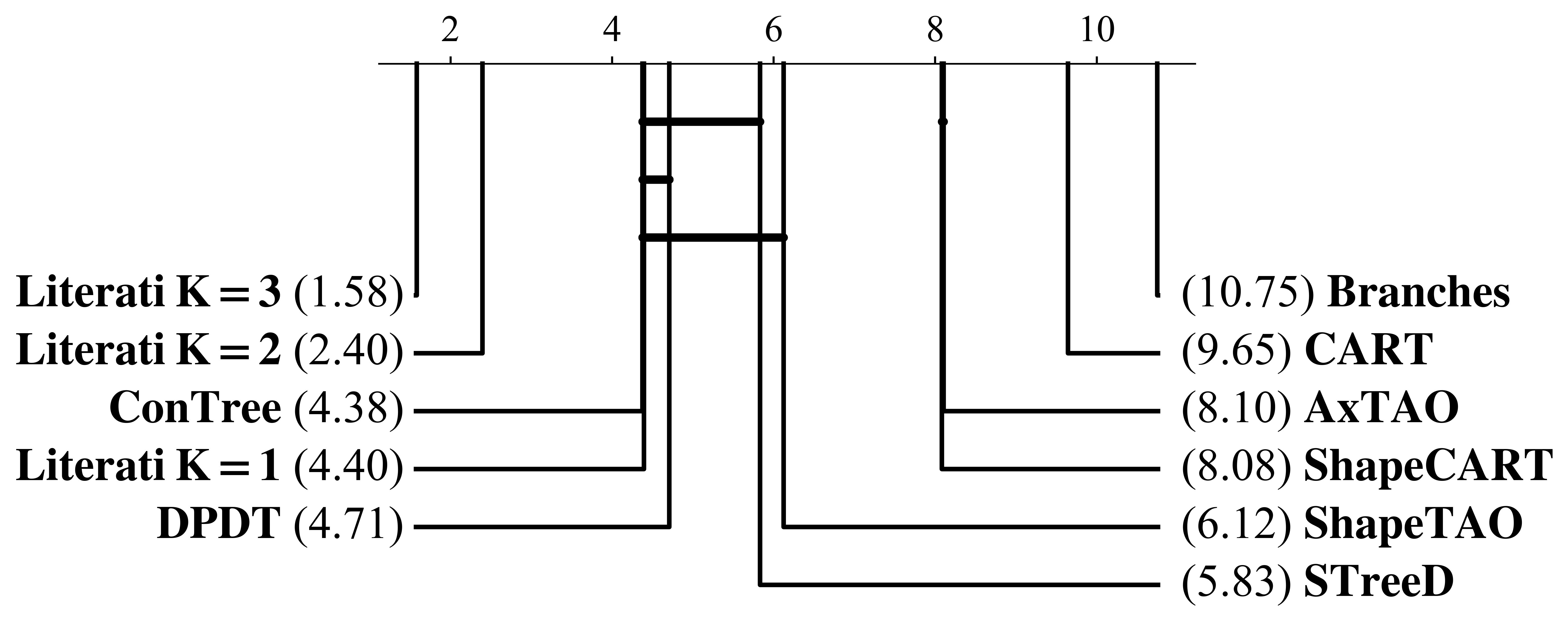}
         \caption{$D=4$}
         \label{fig:d4_trainacc}
     \end{subfigure}

     \begin{subfigure}[b]{0.45\textwidth}
         \centering
         \includegraphics[width=\textwidth]{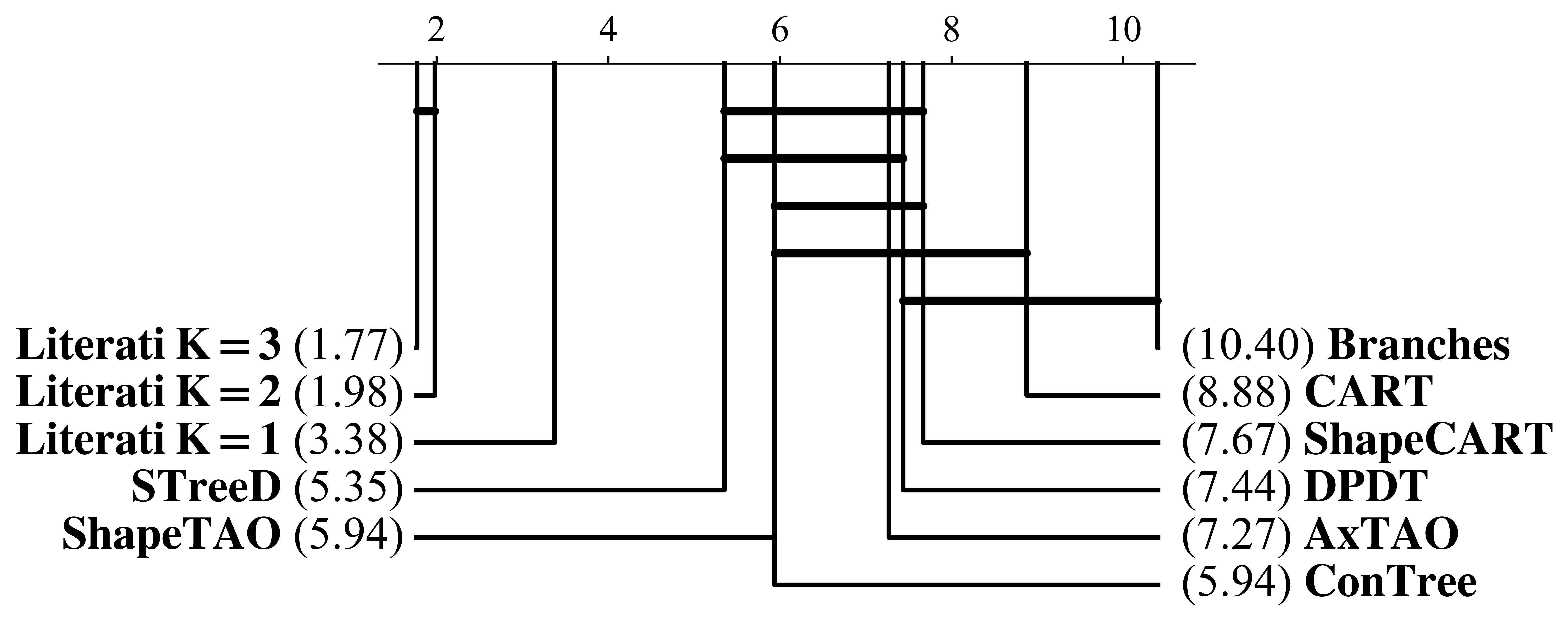}
         \caption{$D=5$}
         \label{fig:d5_trainacc}
     \end{subfigure}
     \hfill
     \begin{subfigure}[b]{0.45\textwidth}
         \centering
         \includegraphics[width=\textwidth]{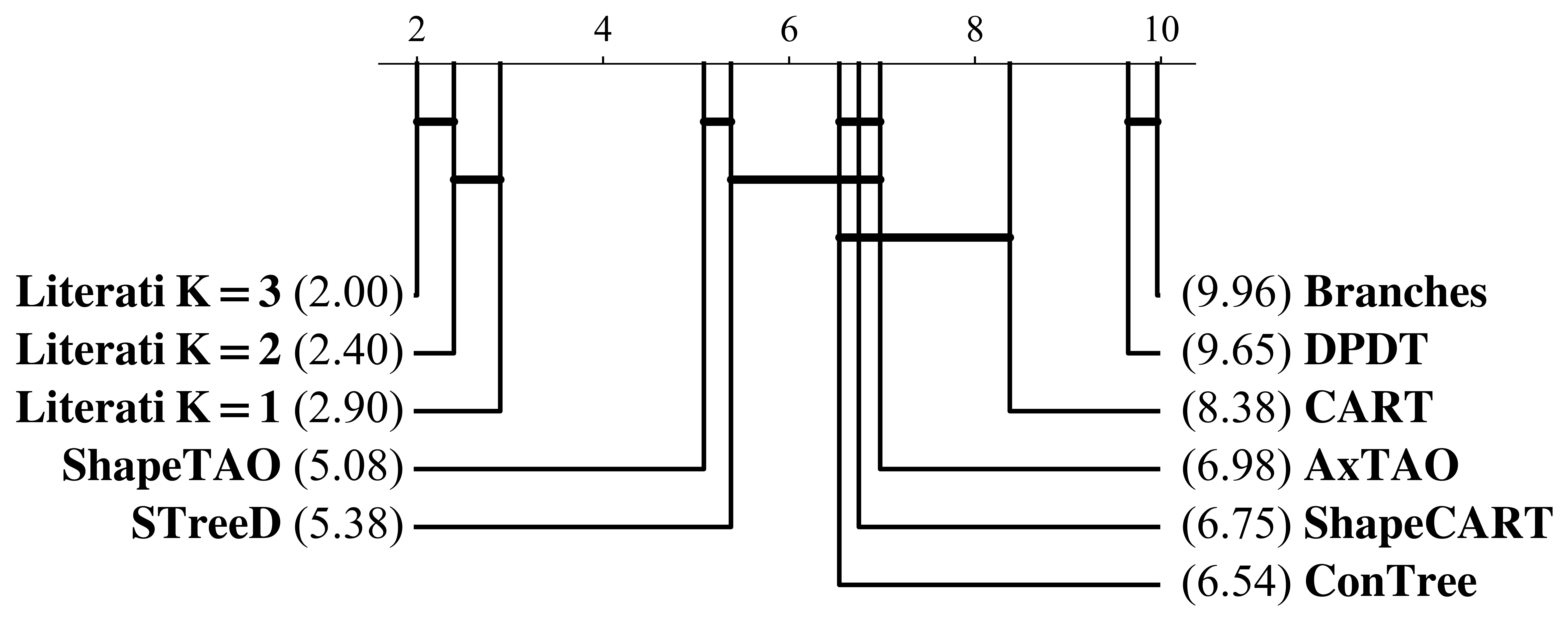}
         \caption{$D=6$}
         \label{fig:d6_trainacc}
     \end{subfigure}
      \caption{Critical Difference Diagram of Train Accuracy for the Optimization capability experiments across datasets and folds}
     \label{fig:opt_cdd_diagrams}
\end{figure}



Figure~\ref{fig:opt_cdd_diagrams} contain critical difference diagrams for each depth on train accuracy. We observe that above $D=3$, Literati across values of $K$ achieves statistically significantly higher train accuracy over all other compared approaches, with Literati $K=2$ and $K=3$ achieving statistically significantly higher train accuracy for $D \in \{4,5\}$ over $K=1$. 

\subsection{Light and Depth-Adaptive Configurations}

{
\setlength{\tabcolsep}{3pt}
\begin{table}[!t]
\centering
\caption{Training accuracy, runtime, and optimality proof rate across depth budgets averaged across all datasets evaluated for DPDT and Literati, organized by Heavy and Light (-L) Discretization. The dashed lines separate Literati from baselines. Bold indicates the best value per depth for training accuracy and runtime.}
\label{tab:light_comparison}
\resizebox{\textwidth}{!}{%
\begin{tabular}{@{}ccccccrrrrcccc@{}}
\toprule
 & \multicolumn{1}{l}{} & \multicolumn{4}{c}{$\uparrow$ Train Acc. (\%)} & \multicolumn{4}{c}{$\downarrow$ Runtime (s)} & \multicolumn{4}{c}{$\uparrow$ Proof Obtained (\%)} \\ \midrule
Model & $K$ & $D=3$ & $D=4$ & $D=5$ & $D=6$ & \multicolumn{1}{c}{$D=3$} & \multicolumn{1}{c}{$D=4$} & \multicolumn{1}{c}{$D=5$} & \multicolumn{1}{c}{$D=6$} & $D=3$ & $D=4$ & $D=5$ & $D=6$ \\ \midrule
\multicolumn{14}{c}{\textbf{Heavy Discretization ($B_d = 16 \ \forall d$)}} \\\noalign{\smallskip}
DPDT & 1 & 83.56 & 87.03 & 77.16 & 71.52 & 46.57 & 488.23 & 2387.41 & 3217.4 & 100.0 & 95.8 & 45.8 & 12.5 \\\cdashline{1-14} \noalign{\smallskip}
\multirow{3}{*}{Literati} & 1 & 83.56 & 87.29 & 89.93 & 92.27 & 20.08 & 297.55 & 1312.15 & 2656.69 & 100.0 & 98.3 & 87.5 & 33.3 \\
 & 2 & 84.03 & 87.75 & 90.42 & 92.56 & 63.24 & 938.95 & 2815.65 & 2817.65 & 100.0 & 87.5 & 31.7 & 22.5 \\
 & 3 & \textbf{84.13} & \textbf{87.90} & \textbf{90.45} & \textbf{92.72} & 161.98 & 1523.07 & 2956.64 & 2781.00 & 100.0 & 81.7 & 25.8 & 24.2 \\ \midrule
\multicolumn{14}{c}{\textbf{Light Discretization ($B_1 = 16, B_2 = 8, B_d = 4\ \forall d > 2$)}} \\\noalign{\smallskip}
DPDT-L & 1 & 83.48 & 87.11 & 89.64 & 91.5 & 15.09 & 65.21 & 260.43 & 624.28 & 100.0 & 100.0 & 100.0 & 95.8 \\\cdashline{1-14} \noalign{\smallskip}
\multirow{3}{*}{Literati-L} & 1 & 83.48 & 87.11 & 89.64 & 91.83 & \textbf{7.68} & \textbf{33.00} & \textbf{131.70} & \textbf{404.74} & 100.0 & 100.0 & 100.0 & 95.8 \\
 & 2 & 83.85 & 87.46 & 89.99 & 92.26 & 17.00 & 87.67 & 377.58 & 981.87 & 100.0 & 100.0 & 95.8 & 84.2 \\
 & 3 & 83.91 & 87.53 & 90.05 & 92.33 & 24.66 & 135.06 & 564.28 & 1124.52 & 100.0 & 100.0 & 92.5 & 82.5 \\ \bottomrule
\end{tabular}%
}
\end{table}}

\begin{figure}[htbp]
     \centering
     \begin{subfigure}[b]{0.45\textwidth}
         \centering
         \includegraphics[width=\textwidth]{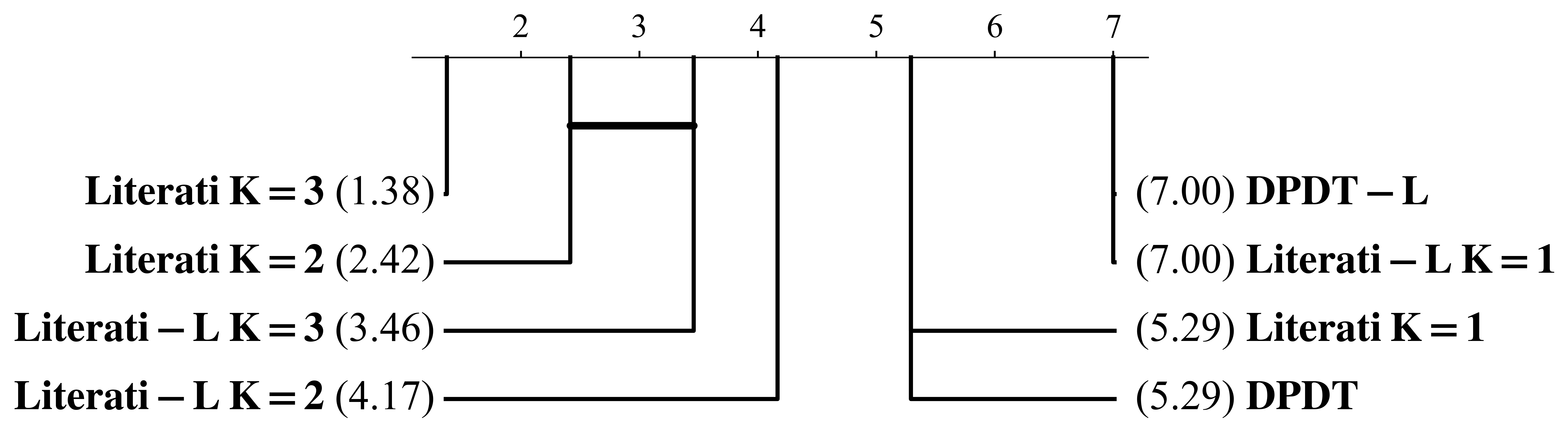}
         \caption{$D=3$}
         \label{fig:d3_trainacclight}
     \end{subfigure}
     \hfill
     \begin{subfigure}[b]{0.45\textwidth}
         \centering
         \includegraphics[width=\textwidth]{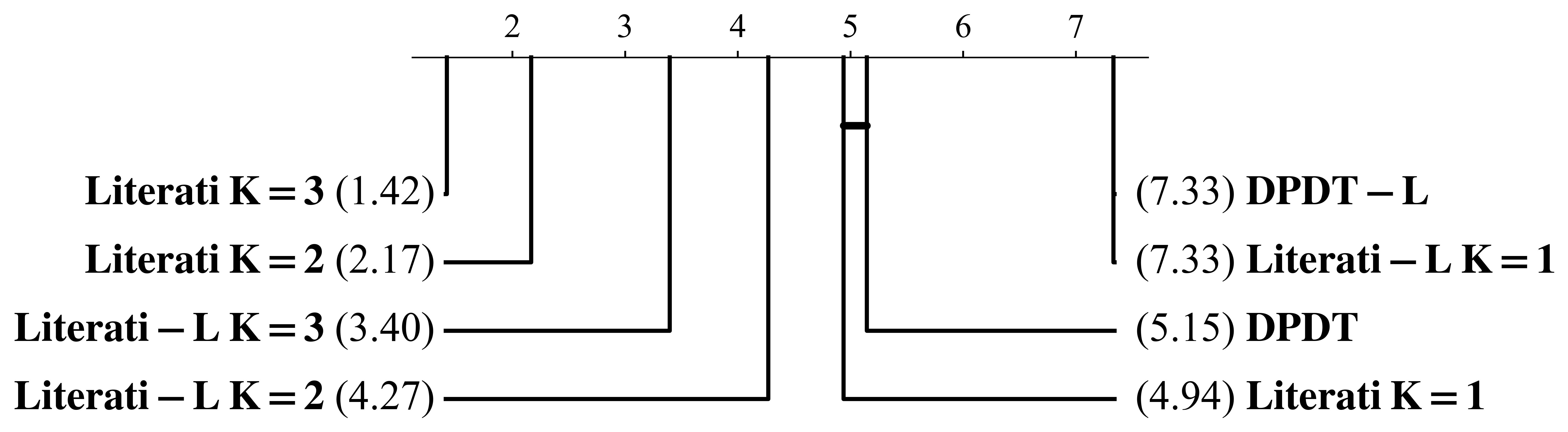}
         \caption{$D=4$}
         \label{fig:d4_trainacclight}
     \end{subfigure}

     \begin{subfigure}[b]{0.45\textwidth}
         \centering
         \includegraphics[width=\textwidth]{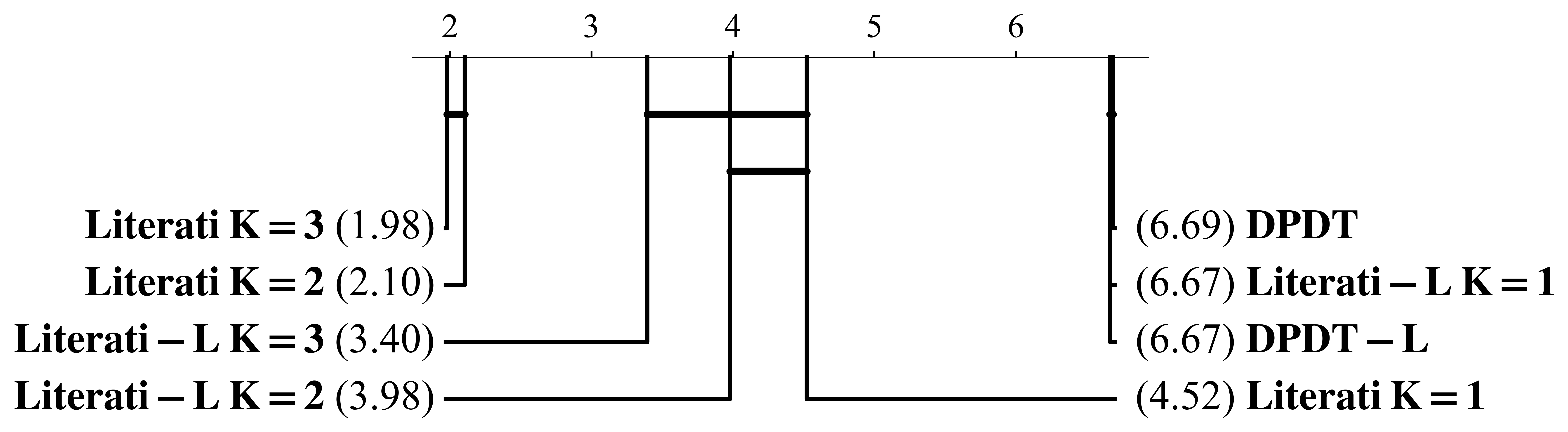}
         \caption{$D=5$}
         \label{fig:d5_trainacclight}
     \end{subfigure}
     \hfill
     \begin{subfigure}[b]{0.45\textwidth}
         \centering
         \includegraphics[width=\textwidth]{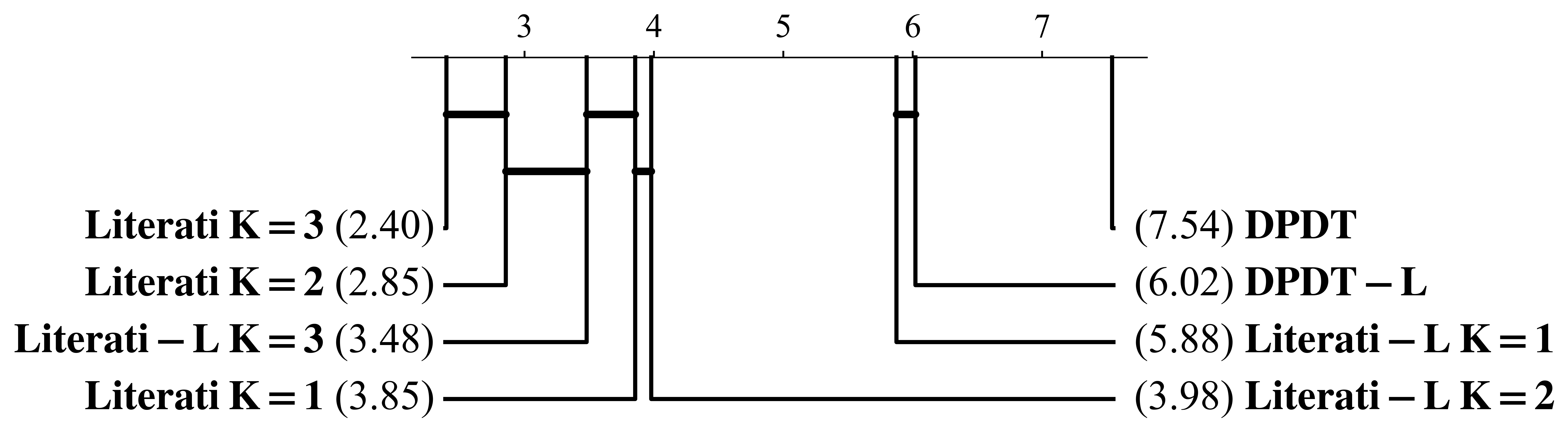}
         \caption{$D=6$}
         \label{fig:d6_trainacclight}
     \end{subfigure}
      \caption{Critical Difference Diagram of Train Accuracy for the Optimization capability experiments across datasets and folds between the heavy and light (-L) variants of DPDT and Literati.}
     \label{fig:opt_cdd_diagramslight}
\end{figure}

In this section, we evaluate a light discretization scheme for DPDT and Literati. More formally, we set $B_1 = 16$, $B_2=8$, and $B_d = 4 \ \forall d > 2$ for both DPDT and Literati and compare it to the heavier discretization scheme  ($B_d = 16\ \forall d$) used in Section~\ref{sec:optimization_performance}. Table~\ref{tab:light_comparison} contains training accuracy, runtime, and proof rate for both the heavy and light configurations while Figure~\ref{fig:opt_cdd_diagramslight} contains critical difference diagrams for each depth on training accuracy. Per dataset results can be found in Appendix~\ref{sec:per_dataset_optimality}.

We observe that switching from a heavy to a light discretization configuration reduces runtime by orders of magnitude, though this, as expected, comes at the cost of training accuracy. Furthermore, we observe that we obtain proofs on a majority of datasets across all depths. We observe that under the light configuration, Literati ($K=1$) and DPDT ($K=1$) achieve identical training accuracy across all depths except $D=6$, where neither converges on Mini-Boone. However, because of its anytime nature, Literati can return an intermediate solution, whereas DPDT cannot. When looking at runtime, we observe that the Literati-L $K=1$  achieves lower average runtime than DPDT-L, demonstrating the quality of our search strategy. 

\subsection{Literati Pre-Discretized}
In addition to adaptive discretization, Literati can operate on pre-discretized candidate sets. In this section, we evaluate Literati using quantile-based pre-discretized features (Literati-PreQuant) and compare its performance with STreeD. We focus on the $K=1$ setting, for which Literati and STreeD operate over equivalent search spaces. To ensure a fair comparison, we also equip Literati with the D2-solver subroutine used by STreeD~\cite{demirovic2022murtree,van2023necessary}.

{
\setlength{\tabcolsep}{3pt}
\begin{table}[!ht]
\centering
\caption{Training accuracy, runtime, and optimality proof rate across depth budgets averaged across all datasets evaluated for STreeD and Literati with quantile pre-discretization. The dashed line separates Literati from STreeD. Bold indicates the best value per depth.}
\label{tab:prequant_comparison}
\resizebox{\textwidth}{!}{%
%
}
\end{table}}

We observe that Literati consistently achieves higher training accuracy across all depths except for $D=3$ (where it ties STreeD), highlighting the strong anytime performance of our approach. Furthermore, Literati maintains competitive proof rates across all depths, obtaining higher proof rates than STreeD at $D=3$ and $D=5$, whereas STreeD obtains a higher proof rate at $D=6$. 

\newpage
\subsection{Per Dataset Results}\label{sec:per_dataset_optimality}

{
\scriptsize
\setlength{\tabcolsep}{3pt}
%
}

\section{Per Dataset Generalization Results}\label{sec:per_dataset_results}

\begin{table}[!h]
\caption{Test Accuracy (\%) results for our generalization experiments. Highest average test accuracy across folds per dataset is \textbf{bolded}.}
\label{tab:full_gen_table_acc}
\resizebox{\textwidth}{!}{%
%
}
\end{table}
\begin{table}[!h]
\caption{Runtime (s) of the selected model for our generalization experiments. Lowest average across-folds is \textbf{bolded}.}
\label{tab:full_gen_table_time}
\resizebox{\textwidth}{!}{%
%
}
\end{table}

\newpage

\section{Interpretability }
SGTs inherit many structural qualities of decision trees that make decision trees interpretable \cite{upadhyaempowering}. As a tree variant, SGTs are modular: like standard decision trees, they decompose into a hierarchical set of rules whose subtrees can be interpreted independently. Because each node's shape function operates on only one feature, it can be directly visualized and queried, which preserves simulability, the ability of a user to trace and reproduce the model's root-to-leaf decision process by hand (See Appendix~\ref{sec:viz} below). Finally, SGTs improve sparsity: the added expressiveness of each node compresses repeated splits into more compact trees, and sparsity is what most directly aids simulability, since a smaller model gives a practitioner a feasible chance of reconstructing its predictions.
\subsection{Visualized Trees}\label{sec:viz}
\begin{figure}[!h]
    \centering
    \includegraphics[width=1.0\linewidth]{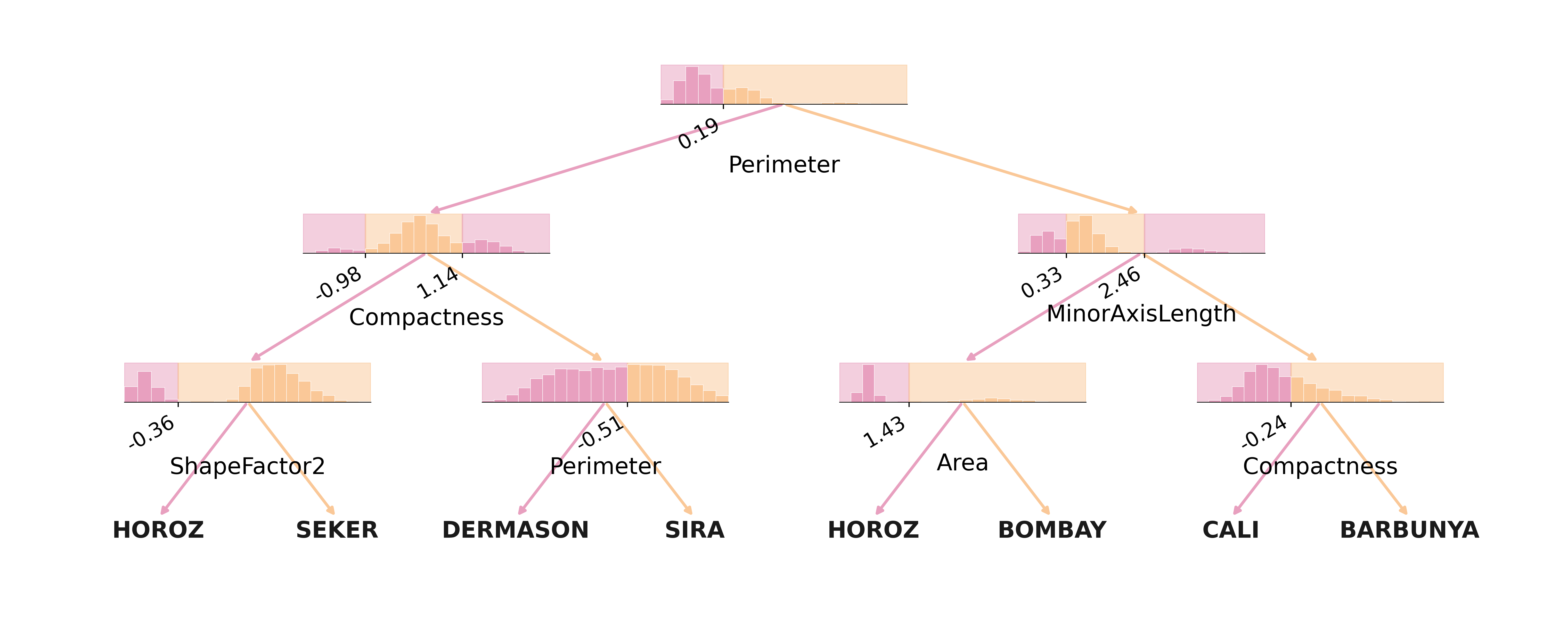}
    \caption{$D=3, K=2$ SGT for Bean}
    \label{fig:bean_tree}
\end{figure}
\begin{figure}[!h]
    \centering
    \includegraphics[width=1.0\linewidth]{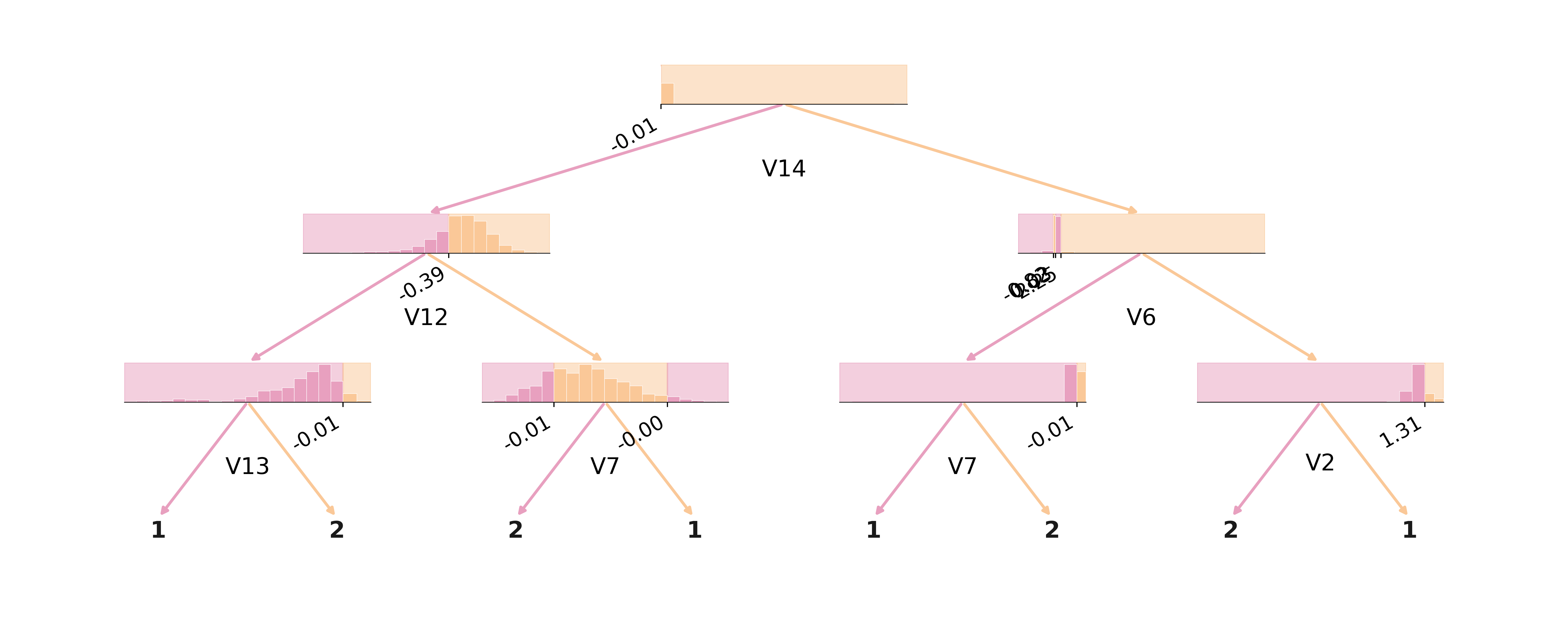}
    \caption{$D=3, K=2$ SGT for Eye-State}
    \label{fig:eyestate_tree}
\end{figure}

\begin{figure}[!h]
    \centering
    \includegraphics[width=1.0\linewidth]{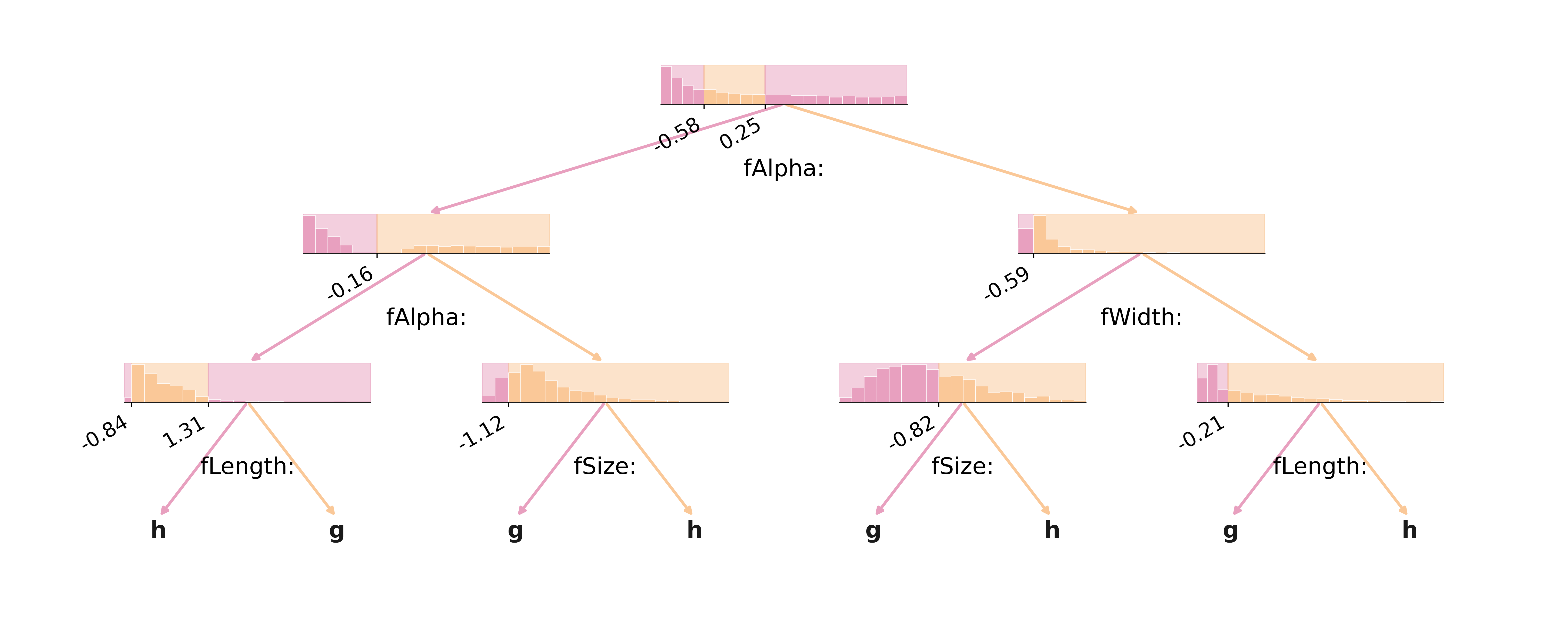}
    \caption{$D=3, K=2$ SGT for Magic}
    \label{fig:magic_tree}
\end{figure}
\newpage
\section*{NeurIPS Paper Checklist}

\begin{enumerate}

\item {\bf Claims}
    \item[] Question: Do the main claims made in the abstract and introduction accurately reflect the paper's contributions and scope?
    \item[] Answer: \answerYes{} 
    \item[] Justification: We describe all main claims in our abstract and provide a list of contributions in our introduction.

\item {\bf Limitations}
    \item[] Question: Does the paper discuss the limitations of the work performed by the authors?
    \item[] Answer: \answerYes{} 
    \item[] Justification: See Conclusion. 

\item {\bf Theory assumptions and proofs}
    \item[] Question: For each theoretical result, does the paper provide the full set of assumptions and a complete (and correct) proof?
    \item[] Answer: \answerYes{} 
    \item[] Justification: See Appendix for proofs of all theoretical results along with relevant assumptions. 

    \item {\bf Experimental result reproducibility}
    \item[] Question: Does the paper fully disclose all the information needed to reproduce the main experimental results of the paper to the extent that it affects the main claims and/or conclusions of the paper (regardless of whether the code and data are provided or not)?
    \item[] Answer: \answerYes{} 
    \item[] Justification: We provide detailed instructions on our experimental configuration, allowing others to reproduce our experiments if needed. Additionally, we provide detailed pseudocode on how to implement our algorithm.

\item {\bf Open access to data and code}
    \item[] Question: Does the paper provide open access to the data and code, with sufficient instructions to faithfully reproduce the main experimental results, as described in supplemental material?
    \item[] Answer: \answerYes{} 
    \item[] Justification: We utilize open source datasets in our experiments and provide code to replicate our experiments.

\item {\bf Experimental setting/details}
    \item[] Question: Does the paper specify all the training and test details (e.g., data splits, hyperparameters, how they were chosen, type of optimizer) necessary to understand the results?
    \item[] Answer: \answerYes{} 
    \item[] Justification: See Experimental Evaluation section as well as Appendix. 

\item {\bf Experiment statistical significance}
    \item[] Question: Does the paper report error bars suitably and correctly defined or other appropriate information about the statistical significance of the experiments?
    \item[] Answer: \answerYes{} 
    \item[] Justification: We provide statistical significance results along with std. deviation across folds.

\item {\bf Experiments compute resources}
    \item[] Question: For each experiment, does the paper provide sufficient information on the computer resources (type of compute workers, memory, time of execution) needed to reproduce the experiments?
    \item[] Answer: \answerYes{} 
    \item[] Justification: See experimental evaluation section as well as appendix. 
    \item[] Guidelines:

\item {\bf Code of ethics}
    \item[] Question: Does the research conducted in the paper conform, in every respect, with the NeurIPS Code of Ethics \url{https://neurips.cc/public/EthicsGuidelines}?
    \item[] Answer: \answerYes{} 
    \item[] Justification: We agree and confirm our work follows the NeurIPS code of ethics.

\item {\bf Broader impacts}
    \item[] Question: Does the paper discuss both potential positive societal impacts and negative societal impacts of the work performed?
    \item[] Answer: \answerYes{} 
    \item[] Justification: See Conclusion. The goal of our work is to provide an improved decision tree algorithm that addresses many of the interpretability concerns with binary axis-aligned trees. As discussed in the conclusion, increased interpretability is important in high-stakes, sensitive fields, and improving interpretability has a positive societal impact. 
    
\item {\bf Safeguards}
    \item[] Question: Does the paper describe safeguards that have been put in place for responsible release of data or models that have a high risk for misuse (e.g., pre-trained language models, image generators, or scraped datasets)?
    \item[] Answer: \answerNA{} 
    \item[] Justification: Our approach has minor risk of misuse and we only consider open source datasets.

\item {\bf Licenses for existing assets}
    \item[] Question: Are the creators or original owners of assets (e.g., code, data, models), used in the paper, properly credited and are the license and terms of use explicitly mentioned and properly respected?
    \item[] Answer: \answerYes{} 
    \item[] Justification: Original producers of relevant code are cited.

\item {\bf New assets}
    \item[] Question: Are new assets introduced in the paper well documented and is the documentation provided alongside the assets?
    \item[] Answer: \answerNA{} 
    \item[] Justification: No assets introduced.

\item {\bf Crowdsourcing and research with human subjects}
    \item[] Question: For crowdsourcing experiments and research with human subjects, does the paper include the full text of instructions given to participants and screenshots, if applicable, as well as details about compensation (if any)? 
    \item[] Answer: \answerNA{} 
    \item[] Justification: No human subjects used.

\item {\bf Institutional review board (IRB) approvals or equivalent for research with human subjects}
    \item[] Question: Does the paper describe potential risks incurred by study participants, whether such risks were disclosed to the subjects, and whether Institutional Review Board (IRB) approvals (or an equivalent approval/review based on the requirements of your country or institution) were obtained?
    \item[] Answer: \answerNA{}
    \item[] Justification: No study participants.

\item {\bf Declaration of LLM usage}
    \item[] Question: Does the paper describe the usage of LLMs if it is an important, original, or non-standard component of the core methods in this research? Note that if the LLM is used only for writing, editing, or formatting purposes and does \emph{not} impact the core methodology, scientific rigor, or originality of the research, declaration is not required.
    \item[] Answer: \answerNA{} 
    \item[] Justification: LLMs are not a core part of our work.

\end{enumerate}

\end{document}